\documentclass{research}

\title{Sharpen Your Flow:\\
Sharpness-Aware Sampling for Flow Matching}

\author[1,2]{Aditi Gupta}
\author[3,4]{Soon Hoe Lim$^{*}$}
\author[5]{Annan Yu}
\author[1,2]{N. Benjamin Erichson$^{*}$}

\affiliation[1]{Lawrence Berkeley National Laboratory}
\affiliation[2]{International Computer Science Institute}
\affiliation[3]{Department of Mathematics, KTH Royal Institute of Technology}
\affiliation[4]{Nordita, KTH Royal Institute of Technology and Stockholm University}
\affiliation[5]{Center for Applied Mathematics, Cornell University}

\contribution[*]{Equal Mentoring}

\abstract{Flow matching models generate samples by numerically integrating a learned velocity field, with each integration step requiring a neural network evaluation. Fast generation therefore requires using a small fixed evaluation budget effectively: the key question is not only how to integrate the flow, but where the sampler should spend its steps. We propose SharpEuler, a training-free sampler that profiles a pretrained model offline by estimating where the learned velocity field changes most rapidly along calibration trajectories. This finite-difference estimate defines a solver-aware sharpness profile, which is smoothed and converted by a quantile transform into a timestep grid for any desired inference budget. At test time, sampling remains ordinary Euler integration with the same number of model evaluations as a uniform schedule. We justify SharpEuler using three principles: a numerical principle identifying trajectory acceleration as the leading source of Euler discretization error, a variational principle deriving sharpness-based power-law timestep densities, and a statistical guarantee showing that the finite-sample calibrated sampler is stable at the terminal distribution level.  Our experiments show that SharpEuler improves sample quality at fixed budgets, reducing inter-mode leakage and increasing mode coverage.}

\code{\url{https://github.com/aditiii12/SharpEuler}}
\correspondence{N. Benjamin Erichson @ \email{erichson@lbl.edu}}

\begin{document}

\maketitle

\section{Introduction}

Flow-based generative models have become a powerful tool for modeling high-dimensional distributions~\cite{lipman2023flow,liu2023flow,albergo2023stochastic}.
Their applications range from text-to-image~\cite{esser2024scaling} and text-to-video generation~\cite{jinpyramidal} to scientific domains such as time series modeling~\cite{hu2024flowts, lim2026flow}, fluid flow forecasting~\cite{limelucidating}, molecule generation~\cite{morehead2026zatom}, materials design~\cite{sriram2024flowllm}, and protein structure generation~\cite{didiscaling}. As an instance of dynamical measure transport, sampling with a flow matching model requires integrating a learned velocity field that transports samples from a simple noise distribution to the data distribution.

This makes sampling a numerical integration problem. In classical numerical analysis, accuracy is often improved using higher-order or adaptive solvers~\cite{hairer1993solving,griffiths2010numerical,butcher2016numerical}. For neural generative models, however, each function evaluation is a forward pass through a large network~\cite{lipman2024flow}, and these evaluations dominate sampling cost in modern image and video models~\cite{esser2024scaling}. Thus the relevant quantity is not only asymptotic solver accuracy, but sample quality per neural network evaluation. For this reason, many practical samplers use simple one-stage methods with a fixed budget. Forward Euler is attractive because a sampler with budget \(B\) uses exactly \(B\) model evaluations. If we do not add solver stages or use an online adaptive method with sample-dependent cost, the remaining question is then:
\begin{tcolorbox}[
  colback=parchment,
  colframe=sandborder,
  boxrule=0.3pt,
  left=8pt,
  right=8pt,
  top=1pt,
  bottom=1pt
]
\centering
\emph{Where should a flow matching sampler spend its finite Euler budget?}
\end{tcolorbox} Uniform and shifted schedules are simple and often effective, but they are not derived from the learned dynamics being integrated. Recent work has shown that timestep allocation can strongly impact sample quality, especially in the few-evaluation regime~\cite{watson2021learning,xue2024accelerating,williams2024score,sabour2024align}. Other schedulers use information-theoretic criteria, such as
entropy-based time reparameterizations that motivates the Entropic Time Scheduler (ETS) of \cite{stancevic2025entropic}. Our perspective is complementary: for a fixed Euler solver, the schedule should be based on where the learned vector field is changing rapidly. See also App. \ref{app:relatedwork} for a detailed discussion of related work.

We propose SharpEuler, a training-free, solver-aware schedule calibration
method for pretrained flow matching models. The method estimates where the
learned trajectory is sharp (velocity field changes rapidly) rather than choosing this region by hand. In an offline calibration stage, we run fine-grained Euler trajectories on a  set of representative prompts or samples, record consecutive velocity evaluations, and estimate how rapidly the vector field changes along the trajectory. Averaging and smoothing the finite-difference estimated accelerations gives a
time-dependent sharpness profile. At inference time, we convert this profile into a monotone grid and run the same Euler sampler
on that grid. The method does not change the model, require retraining, add solver stages, or increase the number of model evaluations. It only changes where the evaluations are placed. In this sense, SharpEuler moves adaptivity out of the inner sampling loop and into a cheap offline calibration stage; the deployed sampler remains a fixed-cost Euler sampler with exactly \(B\) evaluations.

SharpEuler is motivated by the local truncation error (LTE) of Euler integration along a flow trajectory. Euler's local truncation error is controlled by trajectory acceleration: regions where the learned velocity changes rapidly require smaller steps, while nearly linear regions can be traversed more coarsely. The same signal appears in backward error analysis \cite{hairer2006geometric} as the leading modified-equation bias. Thus, rather than placing steps uniformly or by hand, SharpEuler estimates where the trajectory acceleration is large and places
more evaluations in those regions.
This perspective also explains why the method is most useful at small sampling budgets. With many evaluations, even a uniform grid can resolve most changes in the trajectory. However, with only a few evaluations, a poorly placed Euler step can cross a sharp region without evaluating the intervening dynamics, which can mix modes or leave parts of the data distribution underrepresented. 
We observe this behavior both on synthetic distributions, where mode coverage and leakage can be measured directly, and in high-dimensional text-to-image sampling, where calibrated timesteps improve semantic fidelity.

\paragraph{Contributions.}
Our main contributions are as follows.
\begin{itemize}[leftmargin=*]
    \item We propose SharpEuler, a training-free, solver-aware method for
    constructing timestep schedules for pretrained flow matching models. It uses
    offline calibration trajectories to estimate a  sharpness
    profile and converts this profile into a fixed Euler grid for any inference
    budget.

    \item We provide three principles to justify SharpEuler. The \emph{numerical}
    principle shows, via local truncation error and backward error analysis, that  acceleration controls both the leading Euler error and the leading
    modified-equation bias. The \emph{variational} principle shows that natural
    timestep allocation objectives yield power-law  densities (Proposition~\ref{prop:informal_variational_power_law}), with the
    square-root case singled out by minimizing the leading Euler error proxy. The \emph{statistical} principle gives a stability guarantee (Theorem~\ref{thm:stat_guarantee}), showing that the calibrated schedule approaches the best population schedule in the admissible class as the number of calibration trajectories grows, up to smoothing bias.

    \item We demonstrate that SharpEuler improves low-budget sampling without
    increasing the number of model evaluations. On synthetic multimodal data, it
    reduces inter-mode leakage, improves mode coverage and sample quality, especially at very
    small Euler budgets. We then demonstrate its advantage on high-dimensional
    text-to-image generation tasks. 
\end{itemize}

\section{Background and Problem Setup}
\label{sec:background}

Flow matching models define a continuous transformation from a simple noise distribution to the data distribution through an ODE driven by a learned  velocity field~\cite{lipman2024flow}:
\begin{equation}
    \frac{dx(t)}{dt} =
    u_\theta(x(t),t),
    \qquad x(0) \sim p_0, \qquad x(t) \in \mathbb{R}^d, \qquad 
    t\in[0,1].
    \label{eq:learned_flow_ode_forward}
\end{equation}
We start from a base distribution $p_0 = \mathcal N(0,I)$ at $t=0$ and the ODE is integrated to $t=1$  to obtain new samples. Let $T(x_0):=x(1;x_0)$ and $\mu_1:=T_\#p_0$
denote the learned flow map and its output distribution.
Equivalently, we can use the reverse-time coordinate $s=1-t$ so that \(s=1\) corresponds to noise and \(s=0\) corresponds to data. If the learned velocity field is denoted by \(v_\theta(x,s)\), then the
forward-time velocity in Eq.~\eqref{eq:learned_flow_ode_forward} is $u_\theta(x,t) = -v_\theta(x,1-t)$.
Throughout, we assume the learned velocity field
is sufficiently smooth so that the displayed derivatives and Taylor expansions
are well defined along sampled trajectories.

%
%
%
In practice, the ODE in \eqref{eq:learned_flow_ode_forward} cannot be solved exactly and
must be discretized. Let $ 0=t_0<t_1<\cdots<t_B=1$
be an ascending forward-time schedule with $B$ intervals. 
The Euler update is
\begin{equation}
    x^{(k+1)}
    =
    x^{(k)}
    +
    (t_{k+1}-t_k)
    u_\theta(x^{(k)},t_k),
    \qquad
    k=0,\ldots,B-1.
    \label{eq:forward_euler_update}
\end{equation}
In the reverse-time implementation that we use, we consider the descending grid $s_k=1-t_k$, $1=s_0>s_1>\cdots>s_B=0,$
and Eq.~\eqref{eq:forward_euler_update} becomes
\begin{equation}
    x^{(k+1)}
    =
    x^{(k)}
    -
    (s_k-s_{k+1})
    v_\theta(x^{(k)},s_k),
    \qquad
    k=0,\ldots,B-1.
    \label{eq:reverse_euler_update}
\end{equation}

The sampler performs exactly one evaluation of the vector field per interval, and therefore uses exactly $B$
model evaluations. The integer $B$ is often small in fast generation settings, because each
evaluation of $v_{\theta}$ is a forward pass through a neural network.
Once the model, solver, and budget $B$ are fixed, the main remaining design
choice is the time grid itself. A uniform schedule sets
$s_k = 1-k/B$, while other schedules redistribute the same number of evaluations
toward selected parts of the trajectory. All such methods have the same online
cost when they use the same value of $B$; they differ only in where the model
evaluations are placed. This leads to the following problem setup.

\textbf{Problem setup.} Given a pretrained flow matching
model $v_{\theta}$ and a fixed evaluation budget $B$, construct a monotone
time grid $\mathcal{T}_B
    =
    \{s_k\}_{k=0}^B,$
$1=s_0>s_1>\cdots>s_B=0,$
that improves Euler sampling without changing the model, retraining, adding solver stages, or increasing the number of function evaluations (NFE). In the next section, we propose to choose this grid by measuring where the learned flow is
sharp, and to allocate the fixed Euler budget accordingly.

\section{SharpEuler: A Sharpness Aware Sampler}
\label{sec:method}

In this section, we introduce SharpEuler, our proposed  sharpness aware sampler. Given a pretrained
flow matching model and an evaluation budget \(B\), our goal is to construct a
deterministic time grid for Euler sampling that allocates more model evaluations
to regions where the learned dynamics are sharp, and fewer evaluations to
regions where they are smooth. Following Section~\ref{sec:background}, the
theory is written in forward time \(t\), with velocity \(u_\theta\), while the
implementation uses the descending reverse-time coordinate \(s=1-t\), with
velocity \(v_\theta\). 
The method has two stages:
\begin{itemize}[leftmargin=*]
    \item \textbf{Offline stage.} We estimate a sharpness profile by computing
    fine-grained Euler trajectories on a small calibration set and measuring how
    rapidly the learned vector field changes along these trajectories. This
    stage is performed only once for a given model and calibration distribution.

    \item \textbf{Online stage.} We convert this profile into a monotone grid
    and run Euler sampling on that grid, which requires the same number of model
    evaluations as a uniform Euler sampler with budget \(B\).
\end{itemize}

Detailed algorithms for the two stages are given in
App.~\ref{app:detailed_algo}. The motivation is numerical: Euler's method is
accurate when the trajectory is nearly linear over a step, but inaccurate when
the trajectory changes rapidly. Thus a fixed step size may be harmless in smooth
regions and too coarse in sharp ones, suggesting that the time grid should depend
on the learned dynamics.

For the reverse-time ODE, the leading local error is controlled by the trajectory acceleration
$\ddot{x}(t)
    =
    \frac{d}{dt}v_\theta(x(t),t).$
This derivative is taken along the sampled trajectory, so it reflects both explicit time variation in $v_\theta$ and variation induced as the state moves different velocity. Computing this quantity accurately is expensive for large neural networks, while classical adaptive error control would
require additional model evaluations during sampling. We therefore estimate sharpness offline using finite differences of consecutive velocity evaluations along fine-grained Euler trajectories. This proxy measures where the learned
trajectory changes direction or speed rapidly, and hence where a coarse Euler step is most likely to incur error. The resulting sharpness profile is model-, data-, and conditioning-distribution dependent. It is not an online error estimate for a particular sample, but an offline summary of where the velocity field of the pretrained model changes rapidly.

\subsection{Offline calibration}
\label{sec:offline_calibration}

The offline stage turns the numerical intuition above into a computable signal.
We would like to know, as a function of time, where the velocity field changes
rapidly along typical sampling trajectories. Since we want a fixed-cost sampler,
we estimate this information once from a calibration set and then reuse the
resulting profile during inference.
Let \(\mathcal{C}=\{c_j\}_{j=1}^M\) be a calibration set of representative
conditioning inputs, prompts, or samples. For unconditional models, we omit
\(c_j\) from the notation. We choose a fine-grained descending reverse-time grid
$1=s_0>s_1>\cdots>s_N=0,$
where $N$ is much larger than the evaluation budgets used at inference time.
For each calibration input \(c_j\), we run an \(N\)-step Euler trajectory and
record the reverse-time velocity evaluations
$V_{j,i}
    =
    v_{\theta}(x_{j,i},s_i;c_j),
    \ 
    i=0,\ldots,N-1.$
The reference trajectory itself is generated by
$x_{j,i+1}
    =
    x_{j,i}
    -
    (s_i-s_{i+1})
    V_{j,i}.$
These evaluations are already produced by the reference trajectory. We use their
successive differences to estimate how rapidly the learned vector field changes
along the path. For \(i=0,\ldots,N-2\), define
\begin{equation}
    \widehat{a}_{j,i}
    =
    \frac{
        V_{j,i+1}-V_{j,i}
    }{
        d_i
    }, \qquad   d_i
    =
    \frac12
    \left(
        |s_i-s_{i+1}|
        +
        |s_{i+1}-s_{i+2}|  \right) = \frac12 (s_i - s_{i+2}).
    \label{eq:finite_difference_acceleration}
\end{equation}
This is a finite difference approximation to the derivative of the velocity
along the trajectory. The sign depends on the direction in which time is
parameterized, but its norm does not. We therefore use
\(\|\widehat{a}_{j,i}\|\) as the local sharpness signal.
Averaging over the calibration set gives a global profile
$\bar{\mathcal{S}}_i
    =
    \frac{1}{M}
    \sum_{j=1}^{M}
    \left\|
        \widehat{a}_{j,i}
    \right\|,
    \ 
    i=0,\ldots,N-2$.
This averaging is important. We do not want to adapt the sampler separately for
each generated sample, since that would introduce online overhead and variable
cost. Instead, we estimate a reusable profile for the model and calibration
distribution. In this sense, the schedule is data informed but fixed at
inference time.

Motivated by variational principles (see Sec.~\ref{subsec:variational}),
we form the exponent-shaped sharpness signal
\begin{equation}
    I_{\gamma,i}
    =
    \left(
        \bar{\mathcal{S}}_i+\varepsilon_a
    \right)^\gamma,
    \qquad
    \gamma>0,
    \qquad
    i=0,\ldots,N-2,
    \label{eq:sharpness_power_signal}
\end{equation}
where \(\varepsilon_a\geq0\) is a small numerical floor. The raw profile can be
noisy, especially for small calibration sets. We therefore smooth it with a
Gaussian kernel of bandwidth \(\sigma\), i.e.
$\widetilde I_{\gamma}
    =
    \mathrm{Smooth}_{\sigma}
    \left( I_\gamma\right)$.
We then normalize the smoothed values
into discrete masses
$\widehat w_i
    =
    \frac{
        \widetilde I_{\gamma,i}
    }{
        \sum_{r=0}^{N-2}
        \widetilde I_{\gamma,r}
    },
    \ 
    i=0,\ldots,N-2.$
The support of this finite-difference signal is naturally centered at
$\widetilde s_i
    =
    \frac{s_i+s_{i+2}}{2}, \ 
    i=0,\ldots,N-2.$
To be consistent with the forward-time  notation, we
set $\widetilde t_i =  1-\widetilde s_i.$
The pairs \((\widetilde t_i,\widehat w_i)\) are sorted in ascending
\(\widetilde t_i\), an empirical CDF is formed with a left endpoint anchor at
\(t=0\), and the inverse CDF is evaluated by linear interpolation. This produces
a deterministic schedule for any evaluation budget.

\subsection{Online sampling}

At inference time, the sampler is ordinary Euler integration on the calibrated
grid. Let \(F_{\rm emp}^{-1}\) denote the linearly interpolated inverse-CDF map
obtained from the offline stage. For a budget \(B\), we compute ascending
forward-time quantiles
$\widetilde t_b
    =
    F_{\rm emp}^{-1}\left(\frac{b}{B}\right),
    \ 
    b=0,\ldots,B,$
and convert them to a descending reverse-time  schedule
$s_b = 1-\widetilde t_b,
    \ 
    b=0,\ldots,B.$
Given a conditioning input \(c\) and an initial noise sample
\(x^{(0)}\sim\mathcal{N}(0,I)\), we iterate
$x^{(k+1)}
    =
    x^{(k)}
    -
    (s_k-s_{k+1})
    v_{\theta}(x^{(k)},s_k;c),
    \ 
    k=0,\ldots,B-1.$
The only difference from uniform  sampling is the choice of the time grid
\(\{s_k\}\). The model is unchanged, no gradients are computed, and no online
error estimates or rejected steps are used. The number of model evaluations is
exactly \(B\). The calibrated schedule can be viewed as a time
reparameterization \cite{stancevic2025entropic}: the empirical sharpness profile defines a density over
forward model time, and the sampler uses uniform quantiles in the corresponding
CDF.

This distinction from classical adaptive solvers is important. Classical
adaptive methods choose step sizes during integration and usually spend
additional function evaluations to estimate or control local error. Sharpness
aware sampling performs this adaptation once, offline, using representative
trajectories. The resulting schedule is fixed before inference begins. Thus, the
online cost remains deterministic and identical to that of a uniform Euler
sampler with the same budget.

\section{Why SharpEuler: Three Justifying Principles}
\label{sec:theory_sharpeuler}


Throughout this section, we use the forward-time convention for clarity of analysis, and assume that the learned velocity field
\(u_{\theta}:\mathbb R^d\times[0,1]\to\mathbb R^d\) is sufficiently smooth
on the region visited by sampled trajectories so that the displayed derivatives and Taylor expansions
are well defined.

Let \(x(t;x_0)\) solve the
forward-time ODE \eqref{eq:learned_flow_ode_forward}
and define the  acceleration sharpness $a(t;x_0)=\|\ddot x(t;x_0)\|$.
We justify SharpEuler through three 
principles.

\subsection{Numerical principle: acceleration controls Euler error and modified-equation bias} \label{subsec:lte}

\begin{tcolorbox}[
  colback=parchment,
  colframe=sandborder,
  boxrule=0.3pt,
  left=8pt,
  right=8pt,
  top=1pt,
  bottom=1pt
]
\centering
\emph{
Principle I: The acceleration magnitude 
\(\|\ddot x(t)\|\) is the quantity selected by both Euler local truncation error
and backward error analysis.
}
\end{tcolorbox}

The first justification comes from local truncation error. 

\begin{proposition}[Informal: Euler error selects acceleration]
\label{prop:informal_euler_acceleration}
For the forward-time ODE \eqref{eq:learned_flow_ode_forward}, the leading local
truncation error of one Euler step from the current state \(x(t)\) is proportional
to \(h^2\ddot x(t)\). Hence, to leading order, the size of the one-step Euler
error is controlled by \(h^2\|\ddot x(t)\|\).
\end{proposition}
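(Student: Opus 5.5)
The plan is a Taylor expansion of the exact solution about the current time, compared against the Euler update started from the same point. Fix $t\in[0,1)$ and a step $h>0$ with $t+h\le 1$. Let $x(\cdot)$ be the exact trajectory of \eqref{eq:learned_flow_ode_forward} through $x(t)$. By the standing smoothness assumption, $x$ is $C^3$ on $[t,t+h]$, since $u_\theta$ is $C^2$ along trajectories. The local truncation error is the one-step defect
\[
\tau(t,h) := x(t+h) - \bigl(x(t) + h\,u_\theta(x(t),t)\bigr),
\]
that is, the error made by a single step of \eqref{eq:forward_euler_update} when started on the exact trajectory.

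The first step is to expand $x(t+h)=x(t)+h\dot x(t)+\tfrac{h^2}{2}\ddot x(t)+R_3$. For the remainder I will use the integral form $R_3=\int_t^{t+h}\tfrac{(t+h-r)^2}{2}\dddot x(r)\,dr$. This avoids the vector-valued mean value theorem, which fails in general, and it gives $\|R_3\|\le \tfrac{h^3}{6}\sup_{[t,t+h]}\|\dddot x\|$. Since $\dot x(t)=u_\theta(x(t),t)$ exactly, the first-order terms cancel against the Euler increment, and we are left with $\tau(t,h)=\tfrac{h^2}{2}\ddot x(t)+O(h^3)$. This is the statement that the leading LTE is proportional to $h^2\ddot x(t)$, with constant $1/2$.

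The second step is to identify $\ddot x$ explicitly, so that the statement connects to the sharpness signal. The chain rule gives $\ddot x(t)=\partial_t u_\theta(x(t),t)+\nabla_x u_\theta(x(t),t)\,u_\theta(x(t),t)$, which is the material derivative of the velocity along the path. Taking norms and applying the triangle inequality then yields
\[
\Bigl|\,\|\tau(t,h)\|-\tfrac{h^2}{2}\|\ddot x(t)\|\,\Bigr|\le \tfrac{h^3}{6}\sup_{[t,t+h]}\|\dddot x\|,
\]
so to leading order the one-step error size is $\tfrac12 h^2 a(t;x_0)$.

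The main obstacle is interpretive rather than technical. The $O(h^3)$ term is negligible only relative to $h^2\|\ddot x(t)\|$ when $h\,\sup\|\dddot x\|\ll\|\ddot x(t)\|$, so near zeros of the acceleration the "leading-order" claim degenerates. I would state the result as the two-sided bound above, which is uniform on the region visited by trajectories, where $\sup\|\dddot x\|$ is finite by the smoothness assumption. I would also remark that the same conclusion holds in reverse time $s=1-t$, because the norm of the acceleration is invariant under reversing time. This is consistent with the sign-free use of $\|\widehat a_{j,i}\|$ in \eqref{eq:finite_difference_acceleration}.
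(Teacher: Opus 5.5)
Your proposal is correct and follows essentially the same route as the paper's proof of Proposition~\ref{prop_detailedETA}. Both Taylor-expand the exact trajectory to second order, cancel the first-order term using $\dot x(t)=u_\theta(x(t),t)$, and bound the remainder by $\frac{h^3}{6}\sup_{[t,t+h]}\|x^{(3)}\|$. Your integral-form remainder and two-sided reverse-triangle bound only sharpen the paper's one-sided estimate.
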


See Proposition \ref{prop_detailedETA} for a detailed version. Indeed, Taylor expansion gives
\begin{equation}
    x(t+h)
    -
    \left[
        x(t)+h u_{\theta}(x(t),t)
    \right]
    =
    \frac{h^2}{2}\ddot x(t)
    +
    \mathcal O(h^3)
    \label{eq:theory_euler_lte}
\end{equation}
as $h \to 0$, showing that the leading local error is $h^2 \|\ddot{x}(t)\|/2$. Moreover, using the chain rule,
$\ddot x(t)
    =
    \frac{d}{dt} u_{\theta}(x(t),t)
    =
    \partial_t u_{\theta}(x(t),t)
    +
    \nabla_x u_{\theta}(x(t),t) u_{\theta}(x(t),t).$
Therefore, finite differences of learned velocities along calibration
trajectories provide a first-order Euler local-error proxy for \(a(t;x_0)\), i.e.
$\frac{
        u_{\theta}(x(t_{k+1}),t_{k+1})
        -
        u_{\theta}(x(t_k),t_k)
    }{\Delta t}
    =
    \ddot x(t_k)
    +
    \mathcal O(\Delta t).$

The same quantity is selected by backward error analysis \cite{hairer2006geometric}. Rather than viewing
Euler only as an approximation to \eqref{eq:learned_flow_ode_forward}, backward error
analysis asks which nearby, step-size-dependent ODE is implicitly solved by the
Euler map. For
$\psi_h(t,x)=x+h u_{\theta}(x,t)$,
one seeks a formal modified vector field, 
$u_{\theta,h}(x,t)
    =
    u_{\theta}(x,t)
    +
    h u_{\theta,1}(x,t)
    +
    \mathcal O(h^2)$, 
whose exact time-\(h\) flow agrees with \(\psi_h\) up to local error
\(\mathcal O(h^3)\). Matching Taylor coefficients gives
\begin{equation}
    u_{\theta,h}(x,t)
    =
    u_{\theta}(x,t)
    -
    \frac{h}{2}
    \left(
        \partial_t u_{\theta}(x,t)
        +
        \nabla_x u_{\theta}(x,t)u_{\theta}(x,t)
    \right)
    +
    \mathcal O(h^2).
    \label{eq:theory_bea_modified_field}
\end{equation}
Along exact trajectories, the leading perturbation in the modified vector field
is therefore $-\frac{h}{2}\ddot x(t)$.
Consequently, regions with large acceleration are, to leading order, precisely
the regions where Euler is both locally less accurate and more strongly biased
toward a modified dynamics. This motivates using acceleration as a
solver-aware timestep allocation signal. Full derivations are  in
App.~\ref{app:fulltheory}.

\subsection{Variational principle: power-law densities arise from optimal allocation} \label{subsec:variational}

\begin{tcolorbox}[
  colback=parchment,
  colframe=sandborder,
  boxrule=0.3pt,
  left=8pt,
  right=8pt,
  top=1pt,
  bottom=1pt
]
\centering
\emph{
Principle II: Once a positive importance profile \(I(t)\) is specified,
natural variational allocation rules produce power-law timestep densities.
}
\end{tcolorbox}

The second justification is variational. 
A timestep density $\rho(t)$ specifies how a fixed evaluation budget is distributed over sampling time. If
$\rho(t)$ is small in a region where the importance profile $I(t)$ is large, then the sampler under-resolves an important part of the path. Thus a natural
allocation rule should penalize assigning too little timestep density to high-importance regions.

The Euler local-error proxy gives one concrete example. Since a timestep density
induces a local step size
$h(t)\approx \frac{1}{B\rho(t)},$
and  local truncation error scales quadratically in $h(t)$, the leading
accumulated Euler-risk contains a term of the form
$\int_0^1
    \frac{I(t)}{\rho(t)}
    \,dt.$
This corresponds to a power-law penalty against small $\rho(t)$. At the other
extreme, if the goal is simply to match a prescribed importance profile, then the weighted cross-entropy objective
$\mathcal{C}(\rho; I) :=  -
    \int_0^1
    I(t)\log \rho(t)
    \,dt$
is natural. This motivates the following family, which interpolates between these two allocation principles.

Let \(I:[0,1]\to(0,\infty)\) be a positive importance profile and let \(\rho\)
be a normalized timestep density. For \(\beta\geq 0\), define
\begin{equation}
    \psi_\beta(r)
    =
    \begin{cases}
    \dfrac{r^{-\beta}-1}{\beta}, & \beta>0,\\[1.2ex]
    -\log r, & \beta=0,
    \end{cases}
    \qquad r>0.
\end{equation}
The objective
\begin{equation}
    \min_{\rho>0,\ \int_0^1\rho(t)\,dt=1}
    \mathcal F_\beta(\rho;I),
    \qquad
    \mathcal F_\beta(\rho;I)
    =
    \int_0^1 I(t)\psi_\beta(\rho(t))\,dt
    \label{eq:theory_variational_family}
\end{equation}
penalizes under-allocation of density in high-importance regions. The parameter
\(\beta\) controls the strength of this penalty: \(\beta=0\) gives the
logarithmic cross-entropy penalty, while \(\beta=1\) gives the Euler-risk
penalty \(\int I(t)/\rho(t)\,dt\), up to an additive constant and scale.

\begin{proposition}[Variational allocation gives a power law]
\label{prop:informal_variational_power_law}
The unique minimizer of \eqref{eq:theory_variational_family} is
\begin{equation}
    \rho_\beta^\star(t)
    =
    \frac{
        I(t)^{1/(\beta+1)}
    }{
        \int_0^1 I(s)^{1/(\beta+1)}\,ds
    }.
    \label{eq:theory_variational_solution}
\end{equation}
Thus variational timestep allocation naturally produces power-law densities in
the importance profile.
\end{proposition}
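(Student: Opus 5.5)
The plan is a convex-duality (Lagrange multiplier) argument on the pointwise integrand, with uniqueness from strict convexity.

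\textbf{Step 1 (convexity).} For every \(\beta\ge 0\), \(\psi_\beta\) is strictly convex and strictly decreasing on \((0,\infty)\), with
\[
\psi_\beta'(r)=-r^{-\beta-1}, \qquad \psi_\beta''(r)=(\beta+1)r^{-\beta-2}>0 .
\]
This covers \(\beta=0\) as well, since \(-\log r\) has derivative \(-1/r\). Because \(I>0\), the functional \(\mathcal F_\beta(\cdot;I)\) is strictly convex on the convex set of positive densities with \(\int_0^1\rho=1\). Hence there is at most one minimizer, up to equality a.e.

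\textbf{Step 2 (candidate from Lagrange multipliers).} Form the Lagrangian
\[
\int_0^1 I\,\psi_\beta(\rho)\,dt+\lambda\Big(\int_0^1\rho\,dt-1\Big)
\]
and minimize it pointwise in \(\rho(t)\). The stationarity condition \(I(t)\psi_\beta'(\rho(t))+\lambda=0\) reads \(I(t)\rho(t)^{-\beta-1}=\lambda\). Therefore
\[
\rho(t)=\lambda^{-1/(\beta+1)}\,I(t)^{1/(\beta+1)}.
\]
Choosing \(\lambda>0\) to enforce normalization gives exactly \(\rho_\beta^\star\) in \eqref{eq:theory_variational_solution}. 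This requires \(Z:=\int_0^1 I^{1/(\beta+1)}<\infty\), which I will assume as a standing integrability condition, e.g. \(I\) bounded or \(I\in L^1\).

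\textbf{Step 3 (global optimality).} This is the step needing care, and I would avoid functional-derivative heuristics. Take any admissible \(\rho\). Apply the tangent-line inequality for the convex \(\psi_\beta\) at the point \(\rho^\star(t)\):
\[
\psi_\beta(\rho)\ge \psi_\beta(\rho^\star)+\psi_\beta'(\rho^\star)(\rho-\rho^\star).
\]
Multiply by \(I(t)\) and use \(I\psi_\beta'(\rho^\star)=-\lambda\), which is constant in \(t\). Integrating gives
\[
\mathcal F_\beta(\rho)\ge \mathcal F_\beta(\rho^\star)-\lambda\Big(\int\rho-\int\rho^\star\Big)=\mathcal F_\beta(\rho^\star).
\]
The inequality is strict unless \(\rho=\rho^\star\) a.e., by strict convexity. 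This gives uniqueness directly.

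\textbf{Main obstacle.} The difficulty is well-definedness of the integrals. I would check that \(\mathcal F_\beta(\rho^\star)\) is finite; it equals an explicit expression in \(Z\). For a general competitor \(\rho\), \(\mathcal F_\beta(\rho)\) may be \(+\infty\), which is harmless. For \(\beta=0\), however, \(\int I\log\rho\) could be ill-defined, so I would restrict to \(\rho\) for which \(\mathcal F_\beta(\rho)\) exists in \((-\infty,\infty]\).

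The inequality itself is robust. Its right-hand side integrand \(I\psi_\beta(\rho^\star)-\lambda(\rho-\rho^\star)\) is integrable, so the integrated inequality holds whenever the left side is defined.
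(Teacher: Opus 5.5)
Your proposal is correct and follows essentially the same route as the paper's proof of Proposition~\ref{prop:unified_variational_density}. Both use a Lagrange multiplier for the normalization constraint, the stationarity condition $I(t)\rho(t)^{-\beta-1}=\lambda$, normalization to fix $\lambda^{1/(\beta+1)}=\int_0^1 I^{1/(\beta+1)}$, and strict convexity of $\psi_\beta$ (via $\psi_\beta''(r)=(\beta+1)r^{-\beta-2}>0$) for uniqueness.

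Your Step 3 tangent-line argument is a worthwhile tightening: the paper derives only the first-order condition and then invokes strict convexity, without explicitly showing that the stationary point beats every admissible competitor. Under the paper's hypothesis that $I$ is continuous and positive on $[0,1]$, your integrability caveats hold automatically, including finiteness of $\mathcal F_0(\rho^\star)$, which for $\beta=0$ involves $\int_0^1 I\log I$ and not only $Z$.
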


See Proposition \ref{prop:unified_variational_density} for a detailed version. The solution
$\rho_\beta^\star(t)
    \propto
    I(t)^{1/(\beta+1)}$
shows how the variational family converts an importance profile into a timestep
density. When \(\beta=0\), the scheduler directly matches the importance
profile, \(\rho_0^\star(t)\propto I(t)\). When \(\beta=1\), it produces the
square-root law, \(\rho_1^\star(t)\propto \sqrt{I(t)}\). Larger \(\beta\)
produces flatter densities because the power \(1/(\beta+1)\) becomes smaller.

This family recovers two timestep schedules of SharpEuler.
First, let
$\bar a(t)=\mathbb E_{x_0\sim p_0}\|\ddot x(t;x_0)\|$ be the population acceleration sharpness. If \(\rho\) induces local
step size
$h(t)\approx \frac{1}{B\rho(t)}$,
then the accumulated leading Euler local-error (or risk) proxy scales as $\mathcal E[\rho]
    \approx
    \frac{1}{2B}
    \int_0^1
    \frac{\bar a(t)}{\rho(t)}
    \,dt$.
This corresponds to \(\beta=1\) and \(I(t)=\bar a(t)\), yielding
 $\rho_{\rm Euler}^\star(t)
    \propto
    \bar a(t)^{1/2}$.

Second, the profile-matching objective corresponds to \(\beta=0\), i.e.
$\min_{\rho>0,\ \int\rho=1}
  \mathcal{C}(\rho; I).$
Its unique minimizer is
$\rho_I(t)
    =
    \frac{I(t)}{\int_0^1 I(s)\,ds}$.
Equivalently, with
$\pi_I(t)=\frac{I(t)}{\int_0^1 I(s)\,ds}$,
the objective equals a constant multiple of
\(H(\pi_I)+{\rm KL}(\pi_I\|\rho)\), and is minimized uniquely by
\(\rho=\pi_I\).

\begin{tcolorbox}[
  colback=parchment,
  colframe=sandborder,
  boxrule=0.3pt,
  left=8pt,
  right=8pt,
  top=1pt,
  bottom=1pt
]
\centering
\emph{
SharpEuler uses the solver-aware profile
\(I_\gamma(t)=\bar a(t)^\gamma\), hence
\(\rho_\gamma(t)\propto \bar a(t)^\gamma\).
The special case \(\gamma=1/2\) is selected by the Euler-risk proxy.
}
\end{tcolorbox}

SharpEuler with exponent \(\gamma\) can therefore be interpreted as
profile-matching to the solver-aware importance profile
\begin{equation}
    I_\gamma(t)
    =
    \bar a(t)^\gamma,
    \qquad
    \rho_\gamma(t)
    \propto
    \bar a(t)^\gamma.
    \label{eq:theory_sharpeuler_gamma_density}
\end{equation}
Here \(\gamma\) shapes the sharpness profile, while the variational parameter
\(\beta\) describes how a chosen profile is converted into a density. The
implemented exponent schedules correspond to the profile-matching case
\(\beta=0\). The square-root case \(\gamma=1/2\) is special: up to small smoothing, it coincides with the Euler-risk minimizer
\(\rho(t)\propto \bar a(t)^{1/2}\). For \(\gamma\neq1/2\), the schedule remains
a valid profile-matching density, but is not generally Euler-risk optimal. In
all cases, the underlying sharpness signal is justified by the numerical principle. Full derivations and details are given in App.~\ref{app:fulltheory}.

\subsection{Statistical principle: statistical stability of the  calibrated schedule}

\begin{tcolorbox}[
  colback=parchment,
  colframe=sandborder,
  boxrule=0.3pt,
  left=8pt,
  right=8pt,
  top=1pt,
  bottom=1pt
]
\centering
\emph{Principle III:
With \(M\) calibration trajectories, the
calibrated schedule is near its population oracle up to a finite-calibration
term of order $\mathcal O (
\sqrt{\log(N/\delta)/M})$
on a grid of size \(N\).
}
\end{tcolorbox}

The preceding principles are population-level statements that involve the
population sharpness profile
$\bar a(t)
    =
    \mathbb E_{x_0\sim p_0}\|\ddot x(t;x_0)\|.$
In practice, SharpEuler estimates this profile from finitely many calibration
trajectories and then constructs an empirical timestep density. Thus we also need the resulting grid to be statistically stable. The theorem below provides such a guarantee under reasonable assumptions.

\begin{theorem}[Informal: statistical stability for the square-root schedule $\gamma=1/2$]
\label{thm:informal_statistical_stability}
Let \(\widehat w\) be the  schedule calibrated from \(M\) independent
trajectories on a grid with \(N\) intervals, and let \(w^\star\) be the
population oracle schedule for the Euler local-error proxy \(\mathcal J\) defined in App.~\ref{sec:stat_guarantee}. Under  boundedness, concentration, and Euler stability assumptions, with probability
at least \(1-\delta\),
\begin{equation}
    W_1(\mu_{\widehat w},\mu_1)
    \leq
    \frac{1}{B}
    \left[
        C_{\rm Eul}\mathcal J(w^\star)
        +
        C_1
        \sqrt{\frac{\log(N/\delta)}{M}}
        +
        C_2
    \right].
    \label{eq:informal_statistical_stability}
\end{equation}
Here $W_1$ denotes the 1-Wasserstein distance, \(\mu_{\widehat w}\) is the terminal law of the
calibrated Euler sampler, \(\mu_1\) is the terminal law of the exact learned
ODE flow, $C_{\mathrm{Eul}}$ is the Euler stability constant, \(C_1\) collects concentration and schedule-stability constants, and
\(C_2\) collects smoothing, finite-grid, and finite-difference proxy
bias terms.
\end{theorem}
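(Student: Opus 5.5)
The plan is a three-term error decomposition: a global Euler error bound, a statistical comparison of the calibrated schedule against the population oracle, and a bias term collected into \(C_2\). Write \(\mathcal J(w)\) for the discrete Euler local-error proxy attached to a schedule density \(w\) on the \(N\)-point grid, essentially \(\sum_i \bar a_i/w_i\) times cell width, as in the \(\beta=1\) case of Proposition~\ref{prop:informal_variational_power_law}.

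\textbf{Step 1 (Euler stability).} Couple the Euler sampler on the grid induced by \(\widehat w\) with the exact learned flow from the same \(x_0\sim p_0\). Since \(W_1\) is bounded by any coupling,
\[
W_1(\mu_{\widehat w},\mu_1)\le \mathbb E_{x_0}\|x^{(B)}-T(x_0)\|.
\]
Under a Lipschitz bound \(L\) on \(u_\theta\), a discrete Gronwall argument turns the local errors of Proposition~\ref{prop:informal_euler_acceleration} into a global bound:
\[
\mathbb E\|x^{(B)}-T(x_0)\|\le e^{L}\sum_k \tfrac{h_k^2}{2}\,\bar a(t_k)+\text{higher order}.
\]
With \(h_k\approx 1/(B\widehat w(t_k))\) the sum becomes \(\frac{1}{B}\cdot\frac12\int \bar a/\widehat w\). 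The result is \(W_1\le \frac{C_{\rm Eul}}{B}\mathcal J(\widehat w)+\frac{1}{B}(\text{grid/quantile error})\), where the last term goes into \(C_2\).

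\textbf{Step 2 (oracle comparison).} Bound \(\mathcal J(\widehat w)\le \mathcal J(w^\star)+|\mathcal J(\widehat w)-\mathcal J(\bar w)|+|\mathcal J(\bar w)-\mathcal J(w^\star)|\). Here \(\bar w\) is the schedule built from the population profile \(\mathbb E\|\widehat a_{j,i}\|\), smoothed with the same kernel.
\begin{itemize}
\item The last difference is deterministic bias: smoothing, the floor \(\varepsilon_a\), and the \(\mathcal O(\Delta t)\) finite-difference error from Section~\ref{subsec:lte}. It goes into \(C_2\).
\item For the middle difference, the boundedness assumption makes each \(\|\widehat a_{j,i}\|\) a bounded i.i.d.\ variable. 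Hoeffding plus a union bound over the \(N-1\) grid points gives \(\max_i|\bar{\mathcal S}_i-\mathbb E\bar{\mathcal S}_i|\le C\sqrt{\log(N/\delta)/M}\) with probability \(1-\delta\).
\end{itemize}
Then propagate this sup-norm perturbation through three maps. The map \(S\mapsto (S+\varepsilon_a)^{1/2}\) is Lipschitz because of the floor \(\varepsilon_a>0\). Gaussian smoothing is a sup-norm contraction. Normalization is Lipschitz because the total mass is bounded below. This gives \(\|\widehat w-\bar w\|_\infty\lesssim\sqrt{\log(N/\delta)/M}\).

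\textbf{Step 3 (main obstacle: stability of \(\mathcal J\)).} \(\mathcal J\) involves \(1/w\), so it is only Lipschitz on an admissible class with \(w\ge w_{\min}>0\). I would first show that both \(\widehat w\) and \(\bar w\) lie in this class: the floor \(\varepsilon_a\) and bounded sharpness give densities in \([w_{\min},w_{\max}]\). On the class, \(|\mathcal J(w)-\mathcal J(w')|\le (\sup\bar a/w_{\min}^2)\|w-w'\|_\infty\). This constant, together with the Lipschitz constants from Step 2, forms \(C_1\).

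A second delicate point is that the inverse-CDF quantile grid must reproduce \(h_k\approx 1/(B\widehat w)\) uniformly. Linear interpolation of a CDF whose density is bounded below makes this hold up to \(\mathcal O(1/N)\), which goes into \(C_2\). Combining Steps 1--3 yields \eqref{eq:informal_statistical_stability}.
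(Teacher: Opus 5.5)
Your argument is correct at the level of the informal statement, but Steps 2--3 take a different route from the paper.

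\textbf{Your route versus the paper's.} You use a plug-in perturbation argument. You concentrate the profile and push the sup-norm error through the floored square root, the smoothing and the normalization. This gives closeness of the schedules themselves, \(\|\widehat w-\bar w\|_\infty=\mathcal O(\sqrt{\log(N/\delta)/M})\). You then invoke Lipschitz continuity of \(w\mapsto\mathcal J(w)\) on a class bounded away from zero.

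The paper instead runs an empirical-risk-minimization argument:
\begin{itemize}
\item The square-root weights \(\widehat w_i\propto\Delta_i\widehat b_i\) are the exact minimizer of the empirical effective proxy \(\sum_i\widehat b_i^{\,2}\Delta_i^2/w_i\).
\item Bernstein plus a union bound, the same floor and contraction steps as yours, and a squaring step using \(G_{\max}\) give \(\max_i|\widehat b_i^{\,2}-(b_i^\star)^2|\le\zeta_M(\delta)\).
\item Hence the empirical and population effective objectives differ by at most \(D_2\zeta_M(\delta)/\omega\) uniformly on \(\mathcal W_\omega\).
\item The usual two-sided comparison of minimizers, plus one more comparison between the effective and raw proxies, yields \(\mathcal J(\widehat w)\le\mathcal J(w^\star)+\frac{2D_2}{\omega}(\zeta_M(\delta)+\beta_{\sigma,\varepsilon_a})\).
\item Euler stability is simply assumed as (A4).
\end{itemize}

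\textbf{What each route buys.} The ERM route needs only the coefficients, not the schedules, to be close. It therefore avoids the Lipschitz constants of the normalization map and of \(w\mapsto 1/w\); the latter costs you an extra inverse power of the weight floor. It also makes your bias term quantitative. Your \(\bar w\) exactly minimizes \(\sum_i\bar b_i^{\,2}\Delta_i^2/w_i\), so the same comparison gives \(\mathcal J(\bar w)-\mathcal J(w^\star)\le 2D_2\beta/\omega\), with \(\beta=\max_i|\bar b_i^{\,2}-\mathcal A_i|\). This \(\beta\) is exactly the smoothing, floor and finite-difference discrepancy. You only label this term as bias, but it is the one place where \(\gamma=1/2\) matters: for other exponents it stays of order one. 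It is worth stating it this way.

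Your route, in turn, gives the stronger intermediate conclusion that the grid itself is stable. By concentrating the actual finite-difference statistic, it also accounts for the proxy bias that the paper's detailed theorem idealizes away. Using Hoeffding instead of Bernstein only loses the variance-sensitive leading constant.

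\textbf{A correction to Step 1.} View \(\widehat w\) as a density, as you do there. The quantile grid satisfies \(\int_{t_k}^{t_{k+1}}\widehat w\,dt=1/B\) by construction. So \(h_k\) is \(1/B\) divided by the average of \(\widehat w\) over a coarse step. Its deviation from \(1/(B\widehat w(t_k))\) is therefore governed by the variation of the random density at scale \(1/B\), not by an \(\mathcal O(1/N)\) interpolation error.

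The clean fix is Cauchy--Schwarz on each step, \(h_k^2\le\frac1B\int_{t_k}^{t_{k+1}}dt/\widehat w\). Combine it with \(\sup_{[t_k,t_{k+1}]}\|\ddot x\|\le\|\ddot x(t)\|+h_k\sup\|x^{(3)}\|\) and your Gr\"onwall step. This gives \(\mathbb E\|x^{(B)}-T(x_0)\|\le\frac{e^L}{2B}\int_0^1\bar a/\widehat w\,dt+\mathcal O(B^{-2}w_{\min}^{-2})\). The remainder does not depend on the regularity of \(\widehat w\). Alternatively, assume Euler stability outright, as the paper does.
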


See Theorem \ref{thm:stat_guarantee} for a detailed version and discussions for general $\gamma$ cases. The bound separates the numerical sampling error into an oracle Euler term, a finite-calibration term of order $\sqrt{\log(N/\delta)/M}$, and a residual implementation-bias term. Thus
SharpEuler preserves the $\mathcal O(1/B)$ Euler scaling while its calibrated schedule
concentrates toward its population counterpart as $M$ grows.  The result controls numerical error relative to the  learned ODE flow, not model error relative to the data distribution.

\begin{figure}[!b]
    \centering
    \includegraphics[width=0.8\textwidth]{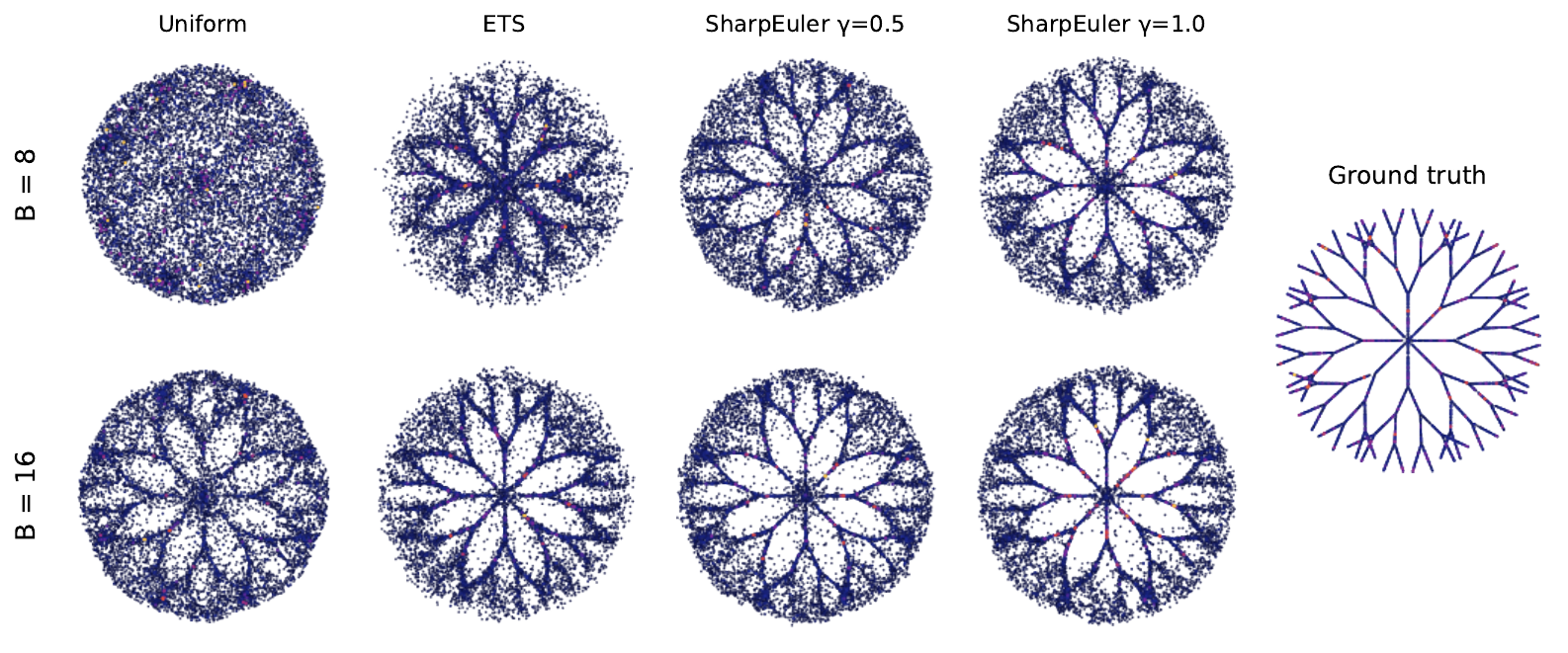}
    \includegraphics[width=0.8\textwidth]{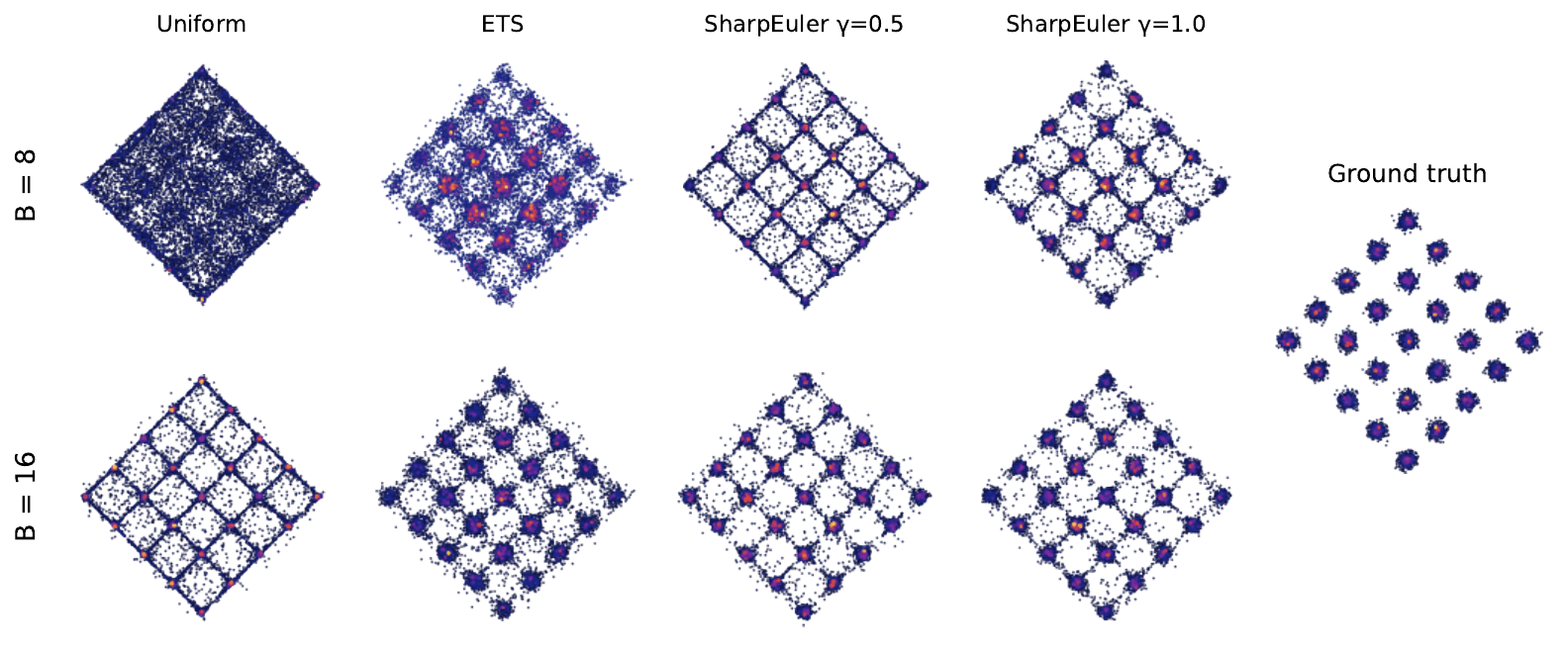}
    \caption{Generated samples on Branched Manifold (top) and Rotated Grid (bottom) at \(B\in\{8,16\}\), comparing Uniform, ETS, and SharpEuler with \(\gamma\in\{0.5,1.0\}\). At fixed NFE, SharpEuler better matches the visible support of the target distributions in the low-budget regime. The two exponents
produce different sample patterns: \(\gamma=0.5\) favors coverage of the target support, while \(\gamma=1.0\) produces samples that are more tightly concentrated around the visible manifold or grid structure.}
    \label{fig:toy_samples}
\end{figure}

\section{Experiments}
\label{sec:experiments}

We evaluate whether SharpEuler improves fixed-budget Euler sampling by changing where, but not how many, model evaluations are placed. We report budgets in
number of function evaluations (NFE), where one NFE corresponds to one evaluation of the learned vector field. The experiments address two questions. First, on controlled two-dimensional distributions, does calibrated timestep placement improve sample fidelity and coverage at small NFE? Second, on a large pretrained text-to-image model, does the same offline calibration procedure improve low-NFE sampling?
Whenever possible,
the model, initial noise, conditioning input, and solver are held fixed, so
differences are attributable only to the timestep grid. Implementation details
and additional results are given in App.~\ref{app:exp_details} and~\ref{app:results}.

\subsection{Synthetic two-dimensional experiments}
\label{sec:exp_synthetic}

We first evaluate SharpEuler in a controlled setting where geometric errors are
visible and distributional discrepancies can be measured directly. We use two two-dimensional target distributions, \emph{Branched Manifold} and \emph{Rotated Grid}. The first tests whether a sampler resolves thin, curved structure; the second tests whether it preserves separated modes without
introducing off-manifold mass.

Figure~\ref{fig:toy_samples} shows generated samples at \(B=8\) and \(B=16\). At \(B=8\), Uniform and ETS are visibly under-resolved. On the Branched Manifold, the generated samples miss parts of the fine branch structure. On the Rotated Grid, samples are less concentrated around the target modes and include
more mass between modes. SharpEuler produces samples that better match the visible support of the target distribution at the same NFE. The two exponents produce different sample patterns: \(\gamma=0.5\) gives better coverage of the target support, while \(\gamma=1.0\) produces samples that are more tightly
concentrated around the visible manifold or grid structure. At \(B=16\), the gap between methods narrows, but SharpEuler remains visually closer to the target samples than Uniform and ETS.

Table~\ref{tab:toy_metrics} quantifies the same behavior using Density, Coverage~\citep{naeem2020reliable}, and squared 2-Wasserstein distance \(W_2^2\). SharpEuler gives the strongest results in the low-budget regime, where timestep placement matters most. At \(B=8\), SharpEuler substantially improves Coverage on both datasets and achieves the best Density on the Rotated Grid, while \(\gamma=1.5\) gives the highest Density on the Branched Manifold. The \(W_2^2\) metric is consistently minimized by \(\gamma=0.5\), in agreement with the LTE-based choice: this exponent gives the best overall distributional
match rather than the most concentrated samples.
The Density and Coverage columns also show the effect of the exponent. On the Branched Manifold, increasing \(\gamma\) raises Density but can reduce Coverage and worsen \(W_2^2\), indicating that the samples become more concentrated on parts of the target support. On the Rotated Grid, \(\gamma=1.0\) gives the best
Coverage across all budgets and the best or near-best Density, while \(\gamma=1.5\) again loses coverage. These results suggest using \(\gamma=0.5\) when the goal is distributional agreement, and \(\gamma=1.0\) when the priority is a stronger fidelity--coverage tradeoff. 

\begin{table}[t]
\centering
\caption{Quantitative results on the two synthetic distributions. Density (D) measures sample fidelity, Coverage (C) measures support coverage, and \(W_2^2\) measures distributional discrepancy. Higher is better for D and C;
lower is better for \(W_2^2\). Best values per metric and budget are shown in \textbf{bold}.}
\label{tab:toy_metrics}
\vspace{+0.2cm}
\resizebox{\textwidth}{!}{%
\setlength{\tabcolsep}{4pt}
\begin{tabular}{ll ccc ccc ccc ccc}
\toprule
& & \multicolumn{3}{c}{$B=8$} & \multicolumn{3}{c}{$B=12$} & \multicolumn{3}{c}{$B=16$} & \multicolumn{3}{c}{$B=20$} \\
\cmidrule(lr){3-5} \cmidrule(lr){6-8} \cmidrule(lr){9-11} \cmidrule(lr){12-14}
Dataset & Method & D$\uparrow$ & C$\uparrow$ & $W_2^2\downarrow$ & D$\uparrow$ & C$\uparrow$ & $W_2^2\downarrow$ & D$\uparrow$ & C$\uparrow$ & $W_2^2\downarrow$ & D$\uparrow$ & C$\uparrow$ & $W_2^2\downarrow$ \\
\midrule
\multirow{5}{*}{\shortstack[l]{Branched\\Manifold}}
& Uniform                  & 0.139 & 0.357 & 0.020 & 0.163 & 0.398 & 0.016 & 0.214 & 0.456 & 0.013 & 0.265 & 0.512 & 0.013 \\
& ETS                      & 0.269 & 0.428 & 0.055 & 0.316 & 0.520 & 0.026 & 0.359 & 0.551 & 0.017 & 0.364 & 0.575 & 0.016 \\
& SharpEuler $\gamma{=}0.5$ & 0.261 & 0.479 & \textbf{0.017} & 0.338 & 0.554 & \textbf{0.013} & 0.385 & 0.597 & \textbf{0.012} & 0.404 & 0.604 & \textbf{0.012} \\
& SharpEuler $\gamma{=}1.0$ & 0.422 & \textbf{0.582} & 0.018 & 0.435 & \textbf{0.598} & 0.014 & 0.441 & \textbf{0.620} & 0.015 & 0.425 & \textbf{0.613} & 0.013 \\
& SharpEuler $\gamma{=}1.5$ & \textbf{0.510} & 0.554 & 0.035 & \textbf{0.474} & 0.576 & 0.020 & \textbf{0.458} & 0.608 & 0.018 & \textbf{0.456} & 0.603 & 0.015 \\
\midrule
\multirow{5}{*}{\shortstack[l]{Rotated\\Grid}}
& Uniform                  & 0.426 & 0.581 & 0.022 & 0.705 & 0.771 & 0.015 & 0.810 & 0.685 & 0.012 & 0.855 & 0.699 & 0.010 \\
& ETS                      & 0.652 & 0.723 & 0.038 & 0.779 & 0.831 & 0.026 & 0.846 & 0.877 & 0.017 & 0.885 & 0.902 & 0.017 \\
& SharpEuler $\gamma{=}0.5$ & 0.845 & 0.749 & \textbf{0.020} & 0.900 & 0.890 & \textbf{0.013} & 0.924 & 0.921 & \textbf{0.011} & 0.941 & 0.928 & \textbf{0.009} \\
& SharpEuler $\gamma{=}1.0$ & \textbf{0.913} & \textbf{0.872} & 0.029 & 0.931 & \textbf{0.924} & 0.020 & 0.942 & \textbf{0.934} & 0.018 & \textbf{0.953} & \textbf{0.939} & 0.015 \\
& SharpEuler $\gamma{=}1.5$ & 0.903 & 0.700 & 0.070 & \textbf{0.940} & 0.820 & 0.044 & \textbf{0.951} & 0.876 & 0.033 & 0.948 & 0.904 & 0.028 \\
\bottomrule
\end{tabular}
}
\end{table}

\begin{table}[t]
\caption{FLUX.1-dev results at \(1024{\times}1024\) resolution on 80 held-out prompts. RMSE measures pixel-level deviation from the 50-step pipeline reference, FID measures distributional discrepancy from the reference image set, and CLIP measures prompt-image alignment. Higher is better for CLIP; lower is better for RMSE and FID. Best values per metric and budget are shown in \textbf{bold}.}
\label{tab:flux}
\small
\vspace{+0.2cm}
\setlength{\tabcolsep}{4pt}
\resizebox{\textwidth}{!}{%
\begin{tabular}{l ccc ccc ccc ccc}
\toprule
& \multicolumn{3}{c}{$B=8$} & \multicolumn{3}{c}{$B=12$} & \multicolumn{3}{c}{$B=16$} & \multicolumn{3}{c}{$B=20$} \\
\cmidrule(lr){2-4} \cmidrule(lr){5-7} \cmidrule(lr){8-10} \cmidrule(lr){11-13}
Method & RMSE$\downarrow$ & CLIP$\uparrow$ & FID$\downarrow$ & RMSE$\downarrow$ & CLIP$\uparrow$ & FID$\downarrow$ & RMSE$\downarrow$ & CLIP$\uparrow$ & FID$\downarrow$ & RMSE$\downarrow$ & CLIP$\uparrow$ & FID$\downarrow$ \\
\midrule
Uniform                  & 44.38 & 0.32 & 124.31 & 39.97 & 0.32 & 104.77 & 35.95 & \textbf{0.33} & 90.32 & 28.58 & \textbf{0.33} & 74.20 \\
Shifted $\alpha{=}3$      & 43.76 & 0.32 & 120.84 & 39.70 & 0.32 & 104.01 & 35.96 & \textbf{0.33} & 89.35 & 28.55 & \textbf{0.33} & 74.05 \\
SharpEuler $\gamma{=}0.5$ & \textbf{24.63} & \textbf{0.33} & \textbf{80.61} & \textbf{16.51} & \textbf{0.33} & \textbf{48.65} & \textbf{12.18} & \textbf{0.33} & \textbf{32.76} & \textbf{12.40} & \textbf{0.33} & \textbf{28.56} \\
\bottomrule
\end{tabular}%
}
\end{table}

\subsection{Text-to-image generation with FLUX}

We next evaluate whether the same calibration procedure improves sampling for a large pretrained rectified flow model in a high-dimensional vision setting. We use FLUX.1-dev~\citep{flux2024}, a 12B-parameter rectified flow transformer for
text-to-image generation, at its native \(1024\times1024\) resolution. We calibrate SharpEuler once using 64 prompts and a 50-step reference grid, which matches the default inference budget for FLUX.1-dev in \texttt{diffusers}. The calibration prompts span landscapes, urban scenes, wildlife, and abstract
content. The resulting sharpness profile is then fixed and reused for 80 held-out prompts with no prompt overlap. All samples are generated in \texttt{bfloat16} with classifier-free guidance scale \(3.5\).
We compare SharpEuler with \(\gamma=0.5\) against two profile-agnostic schedules: the default FLUX pipeline schedule and a static shifted schedule with \(\alpha=3\), following the SD3-style FlowMatch Euler shift~\citep{esser2024scaling}.
All methods use the same Euler update, the same evaluation budgets \(B\in\{8,12,16,20\}\), and the same number of model evaluations. Thus, as in the synthetic experiments, the only difference is the timestep grid. We omit ETS at this scale because its flow matching adaptation requires Jacobian-vector
products through the transformer which are too expensive to obtain.

\begin{figure}[!t]
    \centering
    \includegraphics[width=0.95\textwidth]{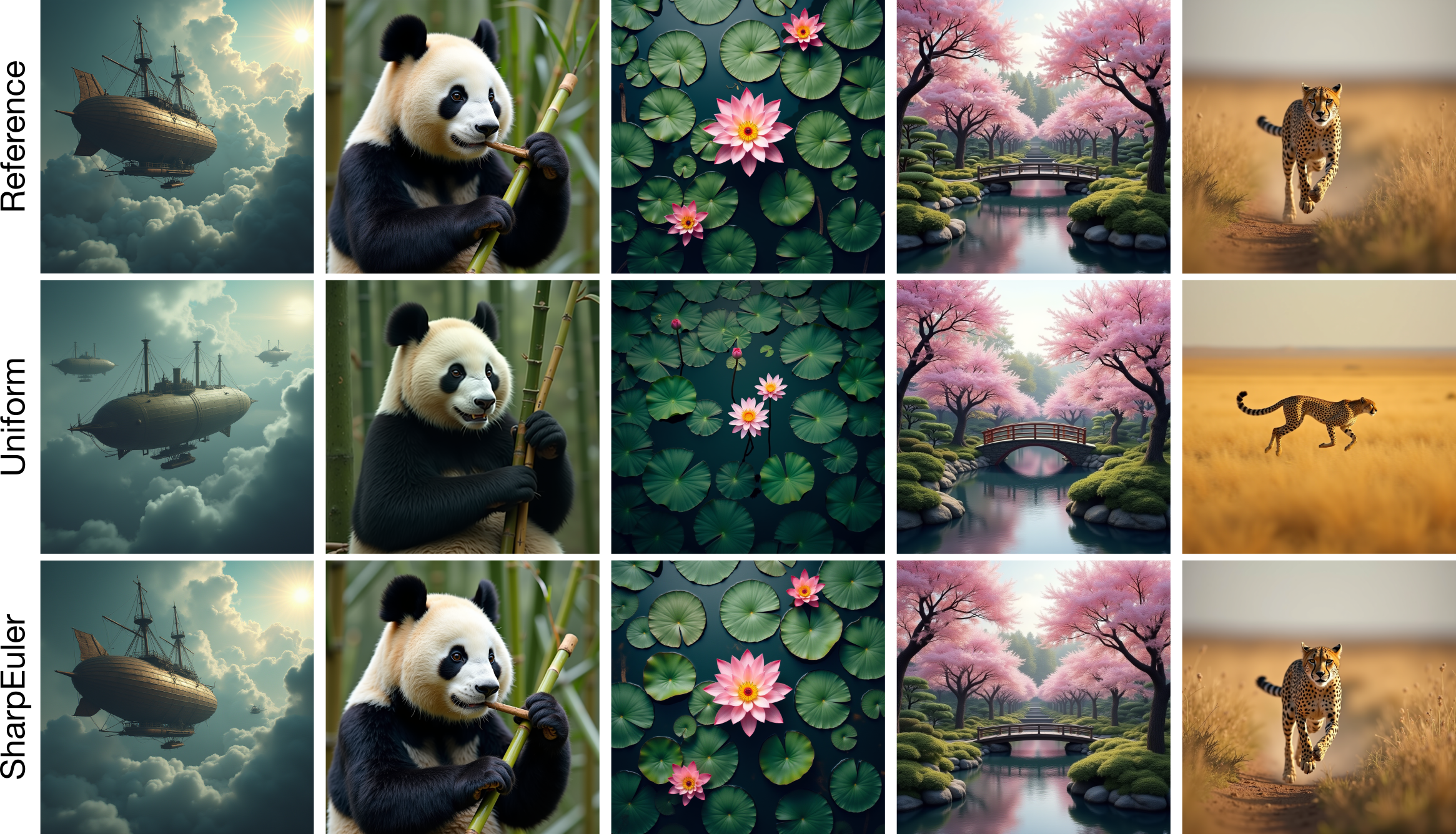}
    \caption{Representative FLUX.1-dev generations at \(B=16\), comparing the default uniform pipeline schedule and SharpEuler against the corresponding 50-step reference images. All methods use the same Euler sampler, prompt conditioning, and number of model evaluations. Relative to the pipeline schedule, SharpEuler more faithfully preserves scene composition, object structure, lighting, and fine visual details from the reference images. These  differences are consistent with results in Table~\ref{tab:flux_judge}.}
    \label{fig:flux_visual}
\end{figure}

\begin{table}[!t]
\centering
\caption{GPT-5.5 vision-judge comparison between SharpEuler
\(\gamma=0.5\) and the default FLUX pipeline schedule. For each prompt, the
judge compares both images against the same 50-step reference image and text
prompt. Higher is better for scores, \(\Delta\), and pairwise wins.}
\label{tab:flux_judge}
\small
\setlength{\tabcolsep}{5pt}
\begin{tabular}{l c c c c c c}
\toprule
Budget &
SharpEuler score &
Pipeline score &
Mean \(\Delta\uparrow\) &
Std.\ \(\Delta\) &
\(p\)-value &
SharpEuler wins \\
\midrule
\(B=8\) & \textbf{7.98} & 6.02 & \textbf{1.96} & 0.88 & \(8.2{\times}10^{-21}\) & \textbf{50/50} \\
\(B=12\) & \textbf{8.74} & 6.50 & \textbf{2.24} & 0.82 & \(1.6{\times}10^{-24}\) & \textbf{50/50} \\
\(B=16\) & \textbf{9.04} & 6.82 & \textbf{2.22} & 0.93 & \(5.1{\times}10^{-22}\) & \textbf{50/50} \\
\(B=20\) & \textbf{9.14} & 7.32 & \textbf{1.82} & 0.83 & \(1.2{\times}10^{-20}\) & \textbf{50/50} \\
\(B=24\) & \textbf{9.35} & 7.52 & \textbf{1.83} & 0.89 & \(9.8{\times}10^{-12}\) & \textbf{50/50} \\
\bottomrule
\end{tabular}
\end{table}

Table~\ref{tab:flux} reports three complementary measures against the 50-step pipeline reference. RMSE measures pixel-level agreement with the reference sample for the same prompt and seed. FID measures distributional agreement between the generated image set and the reference set. CLIP score measures
prompt-image semantic alignment. SharpEuler achieves the lowest RMSE and FID at every budget, with the largest gains in the low-NFE regime. For example, at \(B=12\), RMSE decreases from \(39.97\) to \(16.51\), and FID decreases from \(104.77\) to \(48.65\). At \(B=16\), FID decreases from \(90.32\) to \(32.76\). The shifted \(\alpha=3\) schedule remains close to the default uniform pipeline schedule across budgets, which indicates that the improvement is not explained by a generic monotone shift. CLIP scores stay within a narrow range across all
methods. SharpEuler therefore improves agreement with the high-step reference while preserving prompt alignment at fixed NFE.

We also evaluate perceptual agreement with the high-step reference using
GPT-5.5 as a vision judge. For each prompt, the judge receives the 50-step
reference image, the SharpEuler image, the Pipeline image, and the text prompt,
and returns both calibrated scores and a forced pairwise preference. Candidate
order is randomized to avoid position bias; the judge is instructed to compare
semantic content, composition, style, lighting, color, and salient spatial
relationships rather than pixel identity. At \(B=16\), SharpEuler receives a
mean score of \(9.04\), compared with \(6.82\) for Pipeline, giving a mean paired
improvement of \(2.22\) points. The judge selects SharpEuler in all 50 pairwise
comparisons. This result is consistent with the RMSE and FID improvements in
Table~\ref{tab:flux}: the calibrated grid produces images that are not only
closer to the reference under automatic metrics, but also closer under a
prompt-conditioned visual comparison.
Figure~\ref{fig:flux_visual} shows representative FLUX generations at \(B=16\),
comparing SharpEuler and the default pipeline schedule against the 50-step
reference image.

\section{Conclusion}

In this work, we asked where a flow matching sampler should spend its finite Euler budget. SharpEuler answers this question by calibrating a timestep schedule from the learned dynamics of a pretrained model. It estimates where the velocity field changes rapidly along calibration trajectories, smooths the
resulting sharpness profile, and converts it into a deterministic Euler grid for any inference budget. The deployed sampler remains unchanged except for the time grid: it uses the same model, the same Euler update, and the same NFE.
We justify SharpEuler with numerical, variational, and statistical principles. Experiments on synthetic distributions and FLUX.1-dev show that this calibration improves low-budget sampling at fixed NFE, reducing mode leakage, improving coverage, and producing images closer to high-step references while preserving prompt alignment.
These results suggest that efficient flow sampling depends not only on the solver, but also on how its evaluations are placed. Even without retraining or additional solver stages, a fixed-cost Euler sampler can improve when its steps are allocated according to how rapidly the learnt velocity field changes.


\subsection*{Acknowledgments}
NBE would like to acknowledge support from the U.S. Department of Energy, Office of Science, Office of Advanced Scientific Computing Research, EXPRESS: 2025 Exploratory Research for Extreme-Scale Science program, under Contract Number DE-AC02-05CH11231 at Berkeley Lab. The views, opinions, and/or findings expressed are those of the authors and should not be interpreted as representing the official views or policies of the Department of Defense or the U.S. Government.
SHL would like to acknowledge support from the Wallenberg Initiative on Networks and Quantum Information (WINQ) and the Swedish Research Council (VR/2021-03648).

\bibliographystyle{plain}
\bibliography{references}

@article{albergo2023stochastic,
  title={Stochastic interpolants: A unifying framework for flows and diffusions},
  author={Albergo, Michael S and Boffi, Nicholas M and Vanden-Eijnden, Eric},
  journal={arXiv preprint arXiv:2303.08797},
  year={2023}
}

@inproceedings{stancevic2025entropic,
  title={Entropic Time Schedulers for Generative Diffusion Models},
  author={Stancevic, Dejan and Handke, Florian and Ambrogioni, Luca},
  booktitle={Advances in Neural Information Processing Systems 38 (NeurIPS 2025)},
  year={2025},
}

@inproceedings{lee2023minimizing,
  title={Minimizing trajectory curvature of {O}{D}{E}-based generative models},
  author={Lee, Sangyun and Kim, Beomsu and Ye, Jong Chul},
  booktitle={International Conference on Machine Learning},
  pages={18957--18973},
  year={2023},
  organization={PMLR}
}

@inproceedings{didiscaling,
  title={Scaling Atomistic Protein Binder Design with Generative Pretraining and Test-Time Compute},
  author={Didi, Kieran and Zhang, Zuobai and Zhou, Guoqing and Reidenbach, Danny and Cao, Zhonglin and Cha, Sooyoung and Geffner, Tomas and Dallago, Christian and Tang, Jian and Bronstein, Michael M and others},
  booktitle={The Fourteenth International Conference on Learning Representations}
}

@article{morehead2026zatom,
  title={Zatom-1: A Multimodal Flow Foundation Model for 3D Molecules and Materials},
  author={Morehead, Alex and Cretu, Miruna and Panescu, Antonia and Anand, Rishabh and Weiler, Maurice and Perez, Tynan and Blau, Samuel and Farrell, Steven and Bhimji, Wahid and Jain, Anubhav and Sahasrabuddhe, Hrushikesh and Li{\`o}, Pietro and Jaakkola, Tommi and G{\'o}mez-Bombarelli, Rafael and Ying, Rex and Erichson, N. Benjamin and Mahoney, Michael W.},
  journal={arXiv preprint arXiv:2602.22251},
  year={2026}
}

@article{limelucidating,
  title={Elucidating the Design Choice of Probability Paths in Flow Matching for Forecasting},
  author={Lim, Soon Hoe and Wang, Yijin and Yu, Annan and Hart, Emma and Mahoney, Michael W and Li, Sherry and Erichson, N Benjamin},
  journal={Transactions on Machine Learning Research},
    year={2024}
}

@article{lipman2024flow,
  title={Flow matching guide and code},
  author={Lipman, Yaron and Havasi, Marton and Holderrieth, Peter and Shaul, Neta and Le, Matt and Karrer, Brian and Chen, Ricky TQ and Lopez-Paz, David and Ben-Hamu, Heli and Gat, Itai},
  journal={arXiv preprint arXiv:2412.06264},
  year={2024}
}

@inproceedings{jinpyramidal,
  title={Pyramidal Flow Matching for Efficient Video Generative Modeling},
  author={Jin, Yang and Sun, Zhicheng and Li, Ningyuan and Xu, Kun and Jiang, Hao and Zhuang, Nan and Huang, Quzhe and Song, Yang and Mu, Yadong and Lin, Zhouchen},
  booktitle={The Thirteenth International Conference on Learning Representations}
}

@inproceedings{liu2023flow,
  title={Flow Straight and Fast: Learning to Generate and Transfer Data with Rectified Flow},
  author={Liu, Xingchao and Gong, Chengyue and Liu, Qiang},
  booktitle={The Eleventh International Conference on Learning Representations (ICLR)},
  year={2023}
}

@article{hu2024flowts,
  title={FlowTS: Time Series Generation via Rectified Flow},
  author={Hu, Yang and Wang, Xiao and Ding, Zezhen and Wu, Lirong and Zhang, Huatian and Li, Stan Z and Wang, Sheng and Zhang, Jiheng and Li, Ziyun and Chen, Tianlong},
  journal={arXiv preprint arXiv:2411.07506},
  year={2024}
}

@article{sriram2024flowllm,
  title={Flowllm: Flow matching for material generation with large language models as base distributions},
  author={Sriram, Anuroop and Miller, Benjamin K and Chen, Ricky T and Wood, Brandon M},
  journal={Advances in Neural Information Processing Systems},
  volume={37},
  pages={46025--46046},
  year={2024}
}

@inproceedings{yang2024lipschitz,
  title={Lipschitz Singularities in Diffusion Models},
  author={Yang, Zhantao and Feng, Ruili and Zhang, Han and Shen, Yujun and Zhu, Kai and Huang, Lianghua and Zhang, Yifei and Liu, Yu and Zhao, Deli and Zhou, Jingren and others},
  booktitle={ICLR},
  year={2024}
}

@inproceedings{esser2024scaling,
  title={Scaling rectified flow transformers for high-resolution image synthesis},
  author={Esser, Patrick and Kulal, Sumith and Blattmann, Andreas and Entezari, Rahim and M{\"u}ller, Jonas and Saini, Harry and Levi, Yam and Lorenz, Dominik and Sauer, Axel and Boesel, Frederic and others},
  booktitle={Forty-First International Conference on Machine Learning},
  year={2024}
}

@article{lu2022dpm,
  title={Dpm-solver: A fast ode solver for diffusion probabilistic model sampling in around 10 steps},
  author={Lu, Cheng and Zhou, Yuhao and Bao, Fan and Chen, Jianfei and Li, Chongxuan and Zhu, Jun},
  journal={Advances in Neural Information Processing Systems},
  volume={35},
  pages={5775--5787},
  year={2022}
}

@article{zhao2023unipc,
  title={Unipc: A unified predictor-corrector framework for fast sampling of diffusion models},
  author={Zhao, Wenliang and Bai, Lujia and Rao, Yongming and Zhou, Jie and Lu, Jiwen},
  journal={Advances in Neural Information Processing Systems},
  volume={36},
  pages={49842--49869},
  year={2023}
}

@inproceedings{song2021denoising,
title={Denoising Diffusion Implicit Models},
author={Jiaming Song and Chenlin Meng and Stefano Ermon},
booktitle={International Conference on Learning Representations},
year={2021},
}

@article{karras2022elucidating,
  title={Elucidating the design space of diffusion-based generative models},
  author={Karras, Tero and Aittala, Miika and Aila, Timo and Laine, Samuli},
  journal={Advances in Neural Information Processing Systems},
  volume={35},
  pages={26565--26577},
  year={2022}
}

@book{bach2024learning,
  title={Learning Theory from First Principles},
  author={Bach, Francis},
  year={2024},
  publisher={MIT Press}
}

@article{chen2024trajectory,
  title={On the trajectory regularity of {O}{D}{E}-based diffusion sampling},
  author={Chen, Defang and Zhou, Zhenyu and Wang, Can and Shen, Chunhua and Lyu, Siwei},
  journal={arXiv preprint arXiv:2405.11326},
  year={2024}
}

@article{hairer2006geometric,
  title={Geometric numerical integration},
  author={Hairer, Ernst and Hochbruck, Marlis and Iserles, Arieh and Lubich, Christian},
  journal={Oberwolfach Reports},
  volume={3},
  number={1},
  pages={805--882},
  year={2006}
}

@inproceedings{
salimans2022progressive,
title={Progressive Distillation for Fast Sampling of Diffusion Models},
author={Tim Salimans and Jonathan Ho},
booktitle={International Conference on Learning Representations},
year={2022},
url={https://openreview.net/forum?id=TIdIXIpzhoI}
}

@inproceedings{song2023consistency,
  title={Consistency models},
  author={Song, Yang and Dhariwal, Prafulla and Chen, Mark and Sutskever, Ilya},
  booktitle={Proceedings of the 40th International Conference on Machine Learning},
  pages={32211--32252},
  year={2023}
}

@article{luo2023latent,
  title={Latent consistency models: Synthesizing high-resolution images with few-step inference},
  author={Luo, Simian and Tan, Yiqin and Huang, Longbo and Li, Jian and Zhao, Hang},
  journal={arXiv preprint arXiv:2310.04378},
  year={2023}
}

@inproceedings{zhu2022numerical,
  title={On numerical integration in neural ordinary differential equations},
  author={Zhu, Aiqing and Jin, Pengzhan and Zhu, Beibei and Tang, Yifa},
  booktitle={International Conference on Machine Learning},
  pages={27527--27547},
  year={2022},
  organization={PMLR}
}

@article{lim2026flow,
  title={Is Flow Matching Just Trajectory Replay for Sequential Data?},
  author={Lim, Soon Hoe and Lin, Shizheng and Mahoney, Michael W and Erichson, N Benjamin},
  journal={arXiv preprint arXiv:2602.08318},
  year={2026}
}

@inproceedings{
lipman2023flow,
title={Flow Matching for Generative Modeling},
author={Yaron Lipman and Ricky T. Q. Chen and Heli Ben-Hamu and Maximilian Nickel and Matthew Le},
booktitle={The Eleventh International Conference on Learning Representations },
year={2023},
}

@article{watson2021learning,
  title={Learning to efficiently sample from diffusion probabilistic models},
  author={Watson, Daniel and Ho, Jonathan and Norouzi, Mohammad and Chan, William},
  journal={arXiv preprint arXiv:2106.03802},
  year={2021}
}

@inproceedings{watson2022fast,
  title={Learning fast samplers for diffusion models by differentiating through sample quality},
  author={Watson, Daniel and Chan, William and Ho, Jonathan and Norouzi, Mohammad},
  booktitle={International Conference on Learning Representations},
  year={2022}
}

@inproceedings{sabour2024align,
  title={Align your steps: optimizing sampling schedules in diffusion models},
  author={Sabour, Amirmojtaba and Fidler, Sanja and Kreis, Karsten},
  booktitle={Proceedings of the 41st International Conference on Machine Learning},
  pages={42947--42975},
  year={2024}
}

@book{hairer1993solving,
  title={Solving ordinary differential equations I: Nonstiff problems},
  author={Hairer, Ernst and Wanner, Gerhard and N{\o}rsett, Syvert P},
  year={1993},
  publisher={Springer}
}

@book{butcher2016numerical,
  title={Numerical Methods for Ordinary Differential Equations},
  author={Butcher, John Charles},
  year={2016},
  publisher={John Wiley \& Sons}
}

@book{griffiths2010numerical,
  title={Numerical Methods for Ordinary Differential Equations: Initial Value Problems},
    author={Griffiths, David F and Higham, Desmond J},
  volume={5},
    year={2010},
    publisher={Springer Science \& Business Media}
}

@inproceedings{xue2024accelerating,
  title={Accelerating diffusion sampling with optimized time steps},
  author={Xue, Shuchen and Liu, Zhaoqiang and Chen, Fei and Zhang, Shifeng and Hu, Tianyang and Xie, Enze and Li, Zhenguo},
  booktitle={Proceedings of the IEEE/CVF Conference on Computer Vision and Pattern Recognition},
  pages={8292--8301},
  year={2024}
}

@article{williams2024score,
  title={Score-optimal diffusion schedules},
  author={Williams, Christopher and Campbell, Andrew and Doucet, Arnaud and Syed, Saifuddin},
  journal={Advances in Neural Information Processing Systems},
  volume={37},
  pages={107960--107983},
  year={2024}
}

@inproceedings{
benton2024nearly,
title={Nearly d-Linear Convergence Bounds for Diffusion Models via Stochastic Localization},
author={Joe Benton and Valentin De Bortoli and Arnaud Doucet and George Deligiannidis},
booktitle={The Twelfth International Conference on Learning Representations},
year={2024},
}

@inproceedings{tsimpos2025optimal,
  title={Optimal Scheduling of Dynamic Transport},
  author={Tsimpos, Panos and Zhi, Ren and Zech, Jakob and Marzouk, Youssef},
  booktitle={The Thirty Eighth Annual Conference on Learning Theory},
  pages={5441--5505},
  year={2025},
}

@article{li2026kinetic,
  title={A Kinetic-Energy Perspective of Flow Matching},
  author={Li, Ziyun and Hu, Huancheng and Lim, Soon Hoe and Li, Xuyu and Gao, Fei and Diao, Enmao and Ding, Zezhen and Vazirgiannis, Michalis and Bostrom, Henrik},
  journal={arXiv preprint arXiv:2602.07928},
  year={2026}
}

@inproceedings{naeem2020reliable,
  title={Reliable Fidelity and Diversity Metrics for Generative Models},
  author={Naeem, Muhammad Ferjad and Oh, Seong Joon and Uh, Youngjung and Choi, Yunjey and Yoo, Jaejun},
  booktitle={Proceedings of the 37th International Conference on Machine Learning (ICML)},
  year={2020},
  pages={7176--7185},
}

@inproceedings{kynkaanniemi2019improved,
  title={Improved Precision and Recall Metric for Assessing Generative Models},
  author={Kynk{\"a}{\"a}nniemi, Tuomas and Karras, Tero and Laine, Samuli and Lehtinen, Jaakko and Aila, Timo},
  booktitle={Advances in Neural Information Processing Systems (NeurIPS)},
  year={2019},
}

@book{villani2008optimal,
  title={Optimal Transport: Old and New},
  author={Villani, C{\'e}dric},
  year={2008},
  publisher={Springer},
}

@inproceedings{cuturi2013sinkhorn,
  title={Sinkhorn Distances: Lightspeed Computation of Optimal Transport},
  author={Cuturi, Marco},
  booktitle={Advances in Neural Information Processing Systems (NeurIPS)},
  year={2013},
}

@misc{bionemo_ets_tutorial,
  title  = {{Entropic Flow Matching for Optimal Time Scheduling}},
  author = {{NVIDIA BioNeMo}},
  year   = {2024},
  note   = {BioNeMo Framework Documentation, MoCo tutorial},
  url    = {https://docs.nvidia.com/bionemo-framework/latest/main/examples/bionemo-moco/entropic_time_scheduler_tutorial_cfm/},
}

@misc{flux2024,
  title  = {{FLUX.1}},
  author = {{Black Forest Labs}},
  year   = {2024},
  url    = {https://github.com/black-forest-labs/flux},
}

@article{gupta2026quantifying,
  title={Quantifying Epistemic Uncertainty in Diffusion Models},
  author={Gupta, Aditi and Meyer, Raphael A and Yaniv, Yotam and Chen, Elynn and Erichson, N Benjamin},
  journal={arXiv preprint arXiv:2602.09170},
  year={2026}
}

@misc{diffusers_flux_pipeline,
  title  = {{FluxPipeline} Documentation},
  author = {{Hugging Face Diffusers}},
  year   = {2024},
  note   = {Default inference budget: 50 steps},
  url    = {https://huggingface.co/docs/diffusers/api/pipelines/flux},
}

\clearpage
\appendix

\section*{Appendix}

This appendix is organized as follows. In App. \ref{app:relatedwork} we discuss related work in detail and position SharpEuler relative to recent works. In App. \ref{app:detailed_algo} we provide the detailed algorithms for SharpEuler. In App. \ref{app:fulltheory} we provide full mathematical details and proofs for the three principles that we use to justisfy SharpEuler. In App. \ref{app:exp_details}-\ref{app:results} we provide details on the experiments and additional results.

\section{Related Work}
\label{app:relatedwork}


\textbf{Fast samplers and solver design.}
Much of the early work on fast generation was developed for diffusion and score-based models, where sampling also requires many sequential neural network evaluations. DDIM~\cite{song2021denoising} showed that deterministic sampling can greatly reduce the number of denoising steps. DPM-Solver~\cite{lu2022dpm} and UniPC~\cite{zhao2023unipc} design higher-order solvers that exploit the structure of diffusion ODEs, while EDM~\cite{karras2022elucidating} clarified
how solver choices, parameterization, and noise schedules interact. Other approaches use additional training to obtain very-few-step samplers, including Progressive Distillation~\cite{salimans2022progressive}, Consistency
Models~\cite{song2023consistency}, and Latent Consistency
Models~\cite{luo2023latent}. These works address the same practical bottleneck: sample quality per model evaluation. In contrast, we study pretrained flow matching models, keep the Euler solver fixed, and change only where a fixed number of evaluations are placed.

\textbf{Timestep schedules and time reparameterization.}
A complementary line of work treats the timestep schedule itself as an object of design. Prior work has optimized discrete DDPM schedules using dynamic programming and ELBO objectives~\cite{watson2021learning}, differentiated
through sample quality to search for fast samplers~\cite{watson2022fast},
optimized timesteps for a chosen numerical ODE solver~\cite{xue2024accelerating}, and shown that the best schedule can depend on the solver, model, and data~\cite{sabour2024align}. Other work derives path-traversal costs and discretizes the path to minimize them~\cite{williams2024score}. Entropic time schedulers (ETS) ~\cite{stancevic2025entropic} also view schedules as time
reparameterizations, but allocate steps using conditional-entropy information or rescaled conditional-entropy production. GITS
\cite{chen2024trajectory} uses dynamic programming over trajectory costs to select timesteps. These works show that uniform schedules are often not the right default. SharpEuler shares this motivation, but uses a different signal: it estimates trajectory sharpness of a pretrained flow matching model and converts this solver-aware profile into a fixed Euler grid.

\textbf{Flow matching, path geometry, and regularity.}
Flow matching~\cite{lipman2023flow}, rectified flow~\cite{liu2023flow}, and large-scale rectified flow models~\cite{esser2024scaling} have made
continuous-time vector field models a practical alternative to traditional diffusion formulations. In these models, sampling is again an ODE discretization problem. Some recent work studies timestep or noise sampling choices during training~\cite{esser2024scaling}, while other work aims to reduce trajectory curvature during training~\cite{lee2023minimizing}. Recent work has also
studied time-dependent sensitivity in diffusion networks
\cite{yang2024lipschitz}, discretization error in diffusion sampling
\cite{benton2024nearly}, time parameterizations that improve velocity regularity in dynamic transport~\cite{tsimpos2025optimal}, and trajectory energy as a diagnostic for flow matching samples~\cite{li2026kinetic}.  In particular, KTS in \cite{li2026kinetic}  suppresses terminal singularities and memorization by damping the learned velocity field near the data endpoint. In contrast, SharpEuler leaves the dynamics unchanged and adapts the time discretization to better resolve sharp changes in the velocity field. Since large velocity magnitude does not necessarily imply large velocity variation, the two methods address different issues: KTS reshapes the sampling dynamics, while SharpEuler improves numerical resolution.  These works are related in spirit because they treat path geometry as important. SharpEuler uses this perspective at inference time: rather than changing the training path or model, it measures finite-difference acceleration along sampling trajectories, the signal selected by Euler local truncation error and backward error analysis.

\textbf{Positioning of SharpEuler.}
SharpEuler is, to our knowledge, the first timestep scheduler for learned flow samplers that derives its schedule from an explicitly solver-aware Euler error proxy and estimates this profile using offline finite differences of the learned velocity field. While prior work has optimized timestep allocation using information-theoretic, spectral, or black-box objectives, SharpEuler targets the trajectory acceleration profile that controls Euler discretization error. This profile appears both in Euler local truncation error and in the leading backward error perturbation. The resulting square-root schedule is the closed-form minimizer of a continuous Euler-risk objective, and our statistical stability result shows that the empirically calibrated schedule
is near-oracle for the generated distribution.

\textbf{Limitations of SharpEuler.} SharpEuler assumes that the pretrained model can be sampled accurately on grids different from the one used during training or deployment. This assumption may fail for distilled models, or for models explicitly aligned to a prescribed sampling grid. For example, samplers used with Stable Diffusion variants are often tuned jointly with particular schedules, and changing the grid can violate the assumptions under which the model or sampler was calibrated. SharpEuler is also solver-specific. The schedule is derived from a leading local-error proxy for forward Euler, so the resulting grid is designed for a one-stage Euler update. Higher-order solvers, multistep solvers, or adaptive methods may require different error indicators and different timestep allocation rules. Extending sharpness-aware calibration to these solvers is an interesting direction for future work.

\section{Detailed Algorithms} \label{app:detailed_algo}
In this section, we provide detailed algorithms for SharpEuler. 
We use the reverse-time convention (using $s_k$ for the time indices), and consider a uniform reference grid  ($\Delta_i := 1/N$ for all $i$).

\begin{algorithm}[!h]
\caption{Offline sharpness profile calibration}
\label{alg:offline_calibration_2}
\begin{algorithmic}[1]
\REQUIRE Calibration inputs $\{c_j\}_{j=1}^M$, model $v_{\theta}$,
reverse-time grid $1=s_0>\cdots>s_N=0$, exponent $\gamma>0$,
smoothing bandwidth $\sigma>0$, small floor $\varepsilon_a\geq0$
\ENSURE Empirical inverse-CDF map $F_{\rm emp}^{-1}$

\FOR{$j=1$ \textbf{to} $M$}
    \STATE Sample $x_j\sim\mathcal N(0,I)$
    \FOR{$i=0$ \textbf{to} $N-1$}
        \STATE $V_{j,i}\leftarrow v_\theta(x_j,s_i;c_j)$
        \STATE $x_j\leftarrow x_j-(s_i-s_{i+1})V_{j,i}$
    \ENDFOR
\ENDFOR

\FOR{$i=0$ \textbf{to} $N-2$}
    \STATE $d_i\leftarrow
    \frac12\big(|s_i-s_{i+1}|+|s_{i+1}-s_{i+2}|\big)$
    \STATE $\widehat{\bar a}_i
    \leftarrow
    \frac1M\sum_{j=1}^M
    \left\|
        \frac{V_{j,i+1}-V_{j,i}}{d_i}
    \right\|_2$
\ENDFOR

\STATE Smooth the exponent-shaped signal:
\[
    \widetilde I_{\gamma}
    \leftarrow
    \mathrm{Smooth}_{\sigma}
    \left(
            (\widehat{\bar a}+\varepsilon_a)^\gamma
    \right).
\]

\STATE Normalize masses:
\[
    \widehat w_i
    \leftarrow
    \frac{\widetilde I_{\gamma,i}}
    {\sum_{r=0}^{N-2}\widetilde I_{\gamma,r}},
    \qquad i=0,\ldots,N-2.
\]

\STATE Assign forward-time support points:
\[
    \widetilde t_i
    \leftarrow
    1-\frac{s_i+s_{i+2}}{2},
    \qquad i=0,\ldots,N-2.
\]

\STATE Sort $(\widetilde t_i,\widehat w_i)$ in ascending $\widetilde t_i$,
build the empirical CDF, and add a left endpoint anchor at $t=0$.
\STATE \textbf{return} the linearly interpolated inverse-CDF map
$F_{\rm emp}^{-1}:[0,1]\to[0,1]$.
\end{algorithmic}
\end{algorithm}

\begin{algorithm}[!h]
\caption{Online SharpEuler sampling}
\label{alg:online_sampling_1}
\begin{algorithmic}[1]
\REQUIRE Empirical inverse-CDF map $F_{\rm emp}^{-1}$, budget $B$,
model $v_\theta$, conditioning input $c$
\ENSURE Sample $x^{(B)}$

\STATE Compute forward-time quantiles and convert to reverse time:
\[
    \widetilde t_b\leftarrow F_{\rm emp}^{-1}(b/B),
    \qquad
    s_b\leftarrow 1-\widetilde t_b,
    \qquad
    b=0,\ldots,B.
\]

\STATE Sample $x^{(0)}\sim\mathcal N(0,I)$
\FOR{$k=0$ \textbf{to} $B-1$}
    \STATE $x^{(k+1)}
    \leftarrow
    x^{(k)}
    -
    (s_k-s_{k+1})v_\theta(x^{(k)},s_k;c)$
\ENDFOR
\STATE \textbf{return} $x^{(B)}$
\end{algorithmic}
\end{algorithm}

Here \(\mathrm{Smooth}_{\sigma}\) denotes normalized Gaussian convolution in
index space with reflect padding:
\[
    \bigl(\mathrm{Smooth}_{\sigma}(x)\bigr)_i
    = \sum_{k=-r}^{r}
    K_k x_{i+k},
    \qquad
    K_k
    =
    \frac{\exp(-k^2/(2\sigma^2))}
    {\sum_{j=-r}^{r}\exp(-j^2/(2\sigma^2))},
    \qquad
    r=\max\{1,\lfloor 3\sigma\rfloor\},
\]
where \(x_{i+k}\) is interpreted using reflect padding whenever
\(i+k\notin\{0,\ldots,L-1\}\).


\section{Theoretical Justifications and Results} \label{app:fulltheory}
We assume throughout this section that the learned velocity field  is sufficiently smooth on the region visited by the sampled trajectories. In particular, all
derivatives and Taylor expansions below are assumed to be well defined and
uniformly bounded along these trajectories.

In this section, we provide full details on the three principles that we use to justify SharpEuler: (I) numerical, (II) variational, and (III) statistical. Recall that, as in Section \ref{sec:theory_sharpeuler}, we use the forward-time convention for the ODE sampler in our theoretical analysis.

\subsection{Principle I: Trajectory Acceleration Arises Naturally From Local Truncation Error and Backward Error Analysis}
Both local truncation error and backward error analyses are standard in numerical analysis of ODEs, but we include the details here for  completeness and to highlight why trajectory acceleration is the relevant sharpness signal. 

 Let \(x(t;x_0)\) denote the
solution of the forward-time ODE
\begin{equation}
    \dot x(t) = u_\theta(x(t),t), \qquad x(0)=x_0,
    \label{eq:learned_velocity_ode}
\end{equation}
with \(x_0\sim p_0\). We define the pathwise acceleration magnitude by
$a(t;x_0)
    =
    \|\ddot x(t;x_0)\|.$
Note that we suppress conditioning variables for readability.

The following result shows that this quantity is precisely the leading
quantity selected by the local truncation error of Euler discretization.

\begin{proposition}[Euler local truncation error]
\label{prop_detailedETA}
Let \(x:[t,t+h_0]\to\mathbb R^d\) be an exact solution of
\eqref{eq:learned_velocity_ode}, and suppose \(x\in C^3([t,t+h_0];\mathbb R^d)\).
If forward Euler is initialized at the current state \(x(t)\) and takes one step
of size \(0<h\leq h_0\) in forward time, then
\begin{equation}
    x(t+h)
    -
    \left[
        x(t)+h u_\theta(x(t),t)
    \right]
    =
    \frac{h^2}{2}\ddot x(t)
    +
    R_3(t,h),
    \label{eq:euler_lte_main}
\end{equation}
where
$\|R_3(t,h)\|
    \leq
    \frac{h^3}{6}
    \sup_{\xi\in[t,t+h]}
    \|x^{(3)}(\xi)\|.$
In particular,
\begin{equation}
    x(t+h)
    -
    \left[
        x(t)+h u_\theta(x(t),t)
    \right]
    =
    \frac{h^2}{2}\ddot x(t)
    +
    \mathcal O(h^3),
    \qquad \text{as } h\to0.
\end{equation}
Consequently, the leading-order one-step local truncation error is controlled
by \(h^2\|\ddot x(t)\|\).
\end{proposition}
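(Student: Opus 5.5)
The plan is a direct Taylor expansion with an integral-form remainder, applied to the exact solution \(x\in C^3([t,t+h_0];\mathbb R^d)\). There are two ingredients. First, the ODE \eqref{eq:learned_velocity_ode} identifies \(\dot x(t)=u_\theta(x(t),t)\). This lets us replace the Euler increment \(h\,u_\theta(x(t),t)\) by \(h\dot x(t)\), so the one-step defect becomes a purely kinematic quantity:
\[
x(t+h)-\left[x(t)+h u_\theta(x(t),t)\right]=x(t+h)-x(t)-h\dot x(t).
\]
Second, we expand \(x(t+h)\) to second order around \(t\), componentwise or directly in \(\mathbb R^d\).

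Since \(x\) is vector valued, the Lagrange mean-value form of the remainder is not available in general. I will therefore use the integral form,
\[
x(t+h)=x(t)+h\dot x(t)+\frac{h^2}{2}\ddot x(t)+R_3(t,h),\qquad R_3(t,h)=\int_t^{t+h}\frac{(t+h-\xi)^2}{2}\,x^{(3)}(\xi)\,d\xi .
\]
This identity follows from applying the fundamental theorem of calculus to \(x\) three times, or equivalently from repeated integration by parts. It holds for each component, and the \(C^3\) hypothesis guarantees that the integral is well defined. Substituting into the defect expression gives exactly \eqref{eq:euler_lte_main}.

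For the remainder bound, I will take norms inside the integral:
\[
\|R_3(t,h)\|\le \sup_{\xi\in[t,t+h]}\|x^{(3)}(\xi)\|\int_t^{t+h}\frac{(t+h-\xi)^2}{2}\,d\xi=\frac{h^3}{6}\sup_{\xi\in[t,t+h]}\|x^{(3)}(\xi)\|.
\]
The supremum is finite because \(x^{(3)}\) is continuous on a compact interval. The asymptotic statement \(\mathcal O(h^3)\) as \(h\to0\) then follows, with the constant \(\frac16\sup_{[t,t+h_0]}\|x^{(3)}\|\) uniform in \(h\le h_0\).

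The final claim is a consequence of these bounds. By the triangle inequality, the defect norm lies between \(\frac{h^2}{2}\|\ddot x(t)\|-Ch^3\) and \(\frac{h^2}{2}\|\ddot x(t)\|+Ch^3\). So whenever \(\ddot x(t)\neq0\), the defect is \(\frac{h^2}{2}\|\ddot x(t)\|(1+\mathcal O(h))\). It is also worth recording that \(\ddot x(t)=\partial_t u_\theta+\nabla_x u_\theta\,u_\theta\) by the chain rule, which is legitimate when \(u_\theta\) is \(C^1\); this connects the statement to the sharpness signal. There is no real obstacle here. The only point needing care is to avoid the scalar mean-value remainder for vector-valued \(x\), which is why the integral remainder is used.
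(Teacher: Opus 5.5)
Your proposal is correct and follows the paper's proof. Both use the ODE to replace the Euler increment by \(h\dot x(t)\), expand \(x(t+h)\) to second order with a remainder bounded by \(\frac{h^3}{6}\sup_{\xi\in[t,t+h]}\|x^{(3)}(\xi)\|\), and conclude with the triangle inequality; your explicit integral-form remainder just justifies the vector-valued remainder bound that the paper takes directly from Taylor's theorem.
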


\begin{proof}
By Taylor's theorem with remainder applied to the trajectory \(x(\cdot)\),
there exists a remainder \(R_3(t,h)\) satisfying
\[
    x(t+h)
    =
    x(t)
    +
    h\dot x(t)
    +
    \frac{h^2}{2}\ddot x(t)
    +
    R_3(t,h),
\]
with
\[
    \|R_3(t,h)\|
    \leq
    \frac{h^3}{6}
    \sup_{\xi\in[t,t+h]}
    \|x^{(3)}(\xi)\|.
\]
Since \(x(t)\) solves the forward-time ODE,
$\dot x(t)=u_\theta(x(t),t),$
subtracting the Euler update $x(t)+h u_\theta(x(t),t)$
gives Eq.~\eqref{eq:euler_lte_main}. Therefore,
\[
    \left\|
    x(t+h)
    -
    \left[
        x(t)+h u_\theta(x(t),t)
    \right]
    \right\|
    \leq
    \frac{h^2}{2}\|\ddot x(t)\|
    +
    \frac{h^3}{6}
    \sup_{\xi\in[t,t+h]}
    \|x^{(3)}(\xi)\|.
\]
Thus, as \(h\to0\), the leading term is
\(\frac{h^2}{2}\ddot x(t)\), so the leading local truncation error is governed
by \(h^2\|\ddot x(t)\|\), up to the constant factor \(1/2\).
\end{proof}

The same sharpness signal is also selected by backward error analysis
\cite{hairer2006geometric}. The central idea of backward error analysis is to
interpret a numerical method not merely as an approximation to the original
ODE, but as the exact flow, up to a prescribed order in the step size, of a
nearby step-size-dependent ODE. Thus, instead of asking only how far one Euler
step is from the exact flow of
\[
    \dot x = u_\theta(x,t),
\]
one asks which modified differential equation Euler is solving. In the present
setting, this means seeking a modified vector field
\[
    u_{\theta,h}(x,t)
    =
    u_\theta(x,t)
    +
    h u_{\theta,1}(x,t)
    +
    h^2 u_{\theta,2}(x,t)
    +\cdots
\]
whose exact flow over one step of length \(h\) matches the Euler update to a
prescribed order. The difference between \(u_{\theta,h}\) and \(u_\theta\)
then describes the finite-step bias induced by the solver. Since few-step
sampling uses non-negligible step sizes, the modified equation viewpoint gives
a useful way to identify where the numerical method is most strongly biased. This viewpoint has also been used to study numerical integration in Neural ODEs~\cite{zhu2022numerical}.

The following proposition makes this connection explicit.

\begin{proposition}[Informal: backward error interpretation]
\label{prop:backward_error_interpretation}
Consider the forward-time non-autonomous ODE
\begin{equation}
    \dot x = u_\theta(x,t),
    \label{eq:generic_velocity_ode}
\end{equation}
where \(u_\theta\) is sufficiently smooth (e.g., \(C^3\) locally in \((x,t)\)).
Forward Euler applied to \eqref{eq:generic_velocity_ode} has one-step map
\begin{equation}
    \psi_h(t,x)=x+h u_\theta(x,t).
\end{equation}
There exists a formal modified vector field, at the level of formal power
series in \(h\), of the form
\begin{equation}
    u_{\theta,h}(x,t)
    =
    u_\theta(x,t)
    +
    h u_{\theta,1}(x,t)
    +
    \mathcal O(h^2),
    \label{eq:modified_vector_field_expansion}
\end{equation}
whose exact time-\(h\) flow matches that of the Euler map up to local error
\(\mathcal O(h^3)\), as $h \to 0$. Its leading correction term is
\begin{equation}
    u_{\theta,1}(x,t)
    =
    -\frac12
    \left(
        \partial_t u_\theta(x,t)
        +
        \nabla_x u_\theta(x,t)u_\theta(x,t)
    \right).
    \label{eq:bea_utheta_1}
\end{equation}
Equivalently,
\begin{equation}
    u_{\theta,h}(x,t)
    =
    u_\theta(x,t)
    -
    \frac{h}{2}
    \left(
        \partial_t u_\theta(x,t)
        +
        \nabla_x u_\theta(x,t)u_\theta(x,t)
    \right)
    +
    \mathcal O(h^2).
    \label{eq:bea_leading_perturbation}
\end{equation}
Along exact trajectories of \eqref{eq:generic_velocity_ode}, the leading
modified equation perturbation is therefore proportional to
\(-\frac{h}{2}\ddot x(t)\).
\end{proposition}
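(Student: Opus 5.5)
To prove Proposition~\ref{prop:backward_error_interpretation}, we first make the problem autonomous and then match Taylor coefficients, treating time as an extra coordinate. Define the augmented state $z=(x,t)\in\mathbb R^{d+1}$ and the vector field $F(z)=(u_\theta(x,t),1)$. Then Eq.~\eqref{eq:generic_velocity_ode} becomes $\dot z=F(z)$. In these coordinates the Euler map is $\Psi_h(z)=z+hF(z)$, which is exactly $(\psi_h(t,x),t+h)$. We therefore look for a modified autonomous field $F_h=F+hF_1+\mathcal O(h^2)$ whose time component stays identically $1$, so that $F_1=(u_{\theta,1},0)$. This constraint is consistent, because Euler advances time exactly.

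The exact time-$h$ flow of $\dot z=F_h(z)$ has the Taylor expansion
\[
\varphi_{h}^{F_h}(z)=z+hF_h(z)+\tfrac{h^2}{2}F_h'(z)F_h(z)+\mathcal O(h^3).
\]
It is valid under the $C^3$ assumption, uniformly on compact sets. Substituting $F_h=F+hF_1+\cdots$ and collecting terms gives
\[
\varphi_{h}^{F_h}(z)=z+hF(z)+h^2\bigl(F_1(z)+\tfrac12F'(z)F(z)\bigr)+\mathcal O(h^3).
\]
Comparing this with $\Psi_h(z)=z+hF(z)$ forces $F_1=-\tfrac12F'F$. We then compute $F'F$ blockwise. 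The Jacobian of $F$ is $\begin{pmatrix}\nabla_xu_\theta&\partial_tu_\theta\\0&0\end{pmatrix}$, so
\[
F'F=(\nabla_xu_\theta\,u_\theta+\partial_tu_\theta,\;0).
\]
This yields Eq.~\eqref{eq:bea_utheta_1}, with zero time component as anticipated, and hence Eq.~\eqref{eq:bea_leading_perturbation}. With this choice the local mismatch between the modified flow and the Euler map is $\mathcal O(h^3)$. Higher coefficients $u_{\theta,k}$ can be obtained recursively in the same way, which gives the formal series; this is standard (Hairer--Lubich--Wanner). We only need the first coefficient, so we do not address convergence of the series.

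Finally, we evaluate along an exact trajectory $x(t)$. By the chain rule, $\partial_tu_\theta(x(t),t)+\nabla_xu_\theta(x(t),t)u_\theta(x(t),t)=\frac{d}{dt}u_\theta(x(t),t)=\ddot x(t)$. The leading perturbation $hu_{\theta,1}$ therefore equals $-\frac h2\ddot x(t)$.

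The main technical point is justifying the second-order flow expansion with an $h$-dependent field $F_h$. The remainder must be uniform in $h$, which follows from $C^2$ bounds on $F_h$ holding uniformly for small $h$ on the compact region visited by trajectories. Everything else is direct coefficient matching.
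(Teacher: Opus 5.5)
Your proof is correct and follows the paper's argument: expand the exact time-$h$ flow of the modified field to second order, cancel the $h^2$ coefficient against the Euler map $x+hu_\theta(x,t)$, and use the chain rule to identify $\partial_t u_\theta+\nabla_x u_\theta\,u_\theta$ with $\ddot x(t)$. The differences are cosmetic. Your autonomization $z=(x,t)$ makes the blockwise $F'F$ exactly the paper's $\partial_t u_{\theta,h}+\nabla_x u_{\theta,h}\,u_{\theta,h}$, and your remark that the $\mathcal O(h^3)$ remainder is uniform in $h$ (via $C^2$ bounds on $F+hF_1$, using $u_\theta\in C^3$) makes explicit a point the paper leaves implicit.
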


\begin{proof}
Let the modified ODE be
\begin{equation}
    \dot{\widetilde x}
    =
    u_{\theta,h}(\widetilde x,t),
\end{equation}
with \(u_{\theta,h}\) of the form
\eqref{eq:modified_vector_field_expansion}. Starting from
\(\widetilde x(t)=x\), Taylor's theorem for the modified trajectory gives
\begin{align}
    \widetilde x(t+h)
    &=
    x
    +
    h u_{\theta,h}(x,t)
    +
    \frac{h^2}{2}
    \left[
        \partial_t u_{\theta,h}(x,t)
        +
        \nabla_x u_{\theta,h}(x,t)u_{\theta,h}(x,t)
    \right]
    +
    \mathcal O(h^3)
    \label{eq:modified_flow_taylor}
\end{align}
as $h \to 0$.
Substituting
\[
    u_{\theta,h}(x,t)
    =
    u_\theta(x,t)
    +
    h u_{\theta,1}(x,t)
    +
    \mathcal O(h^2)
\]
into \eqref{eq:modified_flow_taylor} and retaining terms through order \(h^2\)
gives
\begin{align}
    \widetilde x(t+h)
    &=
    x
    +
    h u_\theta(x,t)
    +
    h^2
    \left[
        u_{\theta,1}(x,t)
        +
        \frac12
        \left(
            \partial_t u_\theta(x,t)
            +
            \nabla_x u_\theta(x,t)u_\theta(x,t)
        \right)
    \right]
    +
    \mathcal O(h^3).
\end{align}
To match the forward Euler map
\[
    \psi_h(t,x)=x+h u_\theta(x,t)
\]
up to \(\mathcal O(h^3)\), the order-\(h^2\) coefficient must vanish. Hence,
\[
    u_{\theta,1}(x,t)
    =
    -\frac12
    \left(
        \partial_t u_\theta(x,t)
        +
        \nabla_x u_\theta(x,t)u_\theta(x,t)
    \right),
\]
which proves \eqref{eq:bea_utheta_1} and
\eqref{eq:bea_leading_perturbation}. Finally, along an exact trajectory, by the chain rule,
\[
    \partial_t u_\theta(x(t),t)
    +
    \nabla_x u_\theta(x(t),t)u_\theta(x(t),t)
    =
    \frac{d}{dt}u_\theta(x(t),t)
    =
    \ddot x(t).
\]
Therefore the leading backward error perturbation is proportional to
\(-h\ddot x(t)/2\).
\end{proof}

The preceding results show that the same quantity,
\begin{equation}
    \left\|
        \partial_t u_\theta(x(t),t)
        +
        \nabla_x u_\theta(x(t),t)u_\theta(x(t),t)
    \right\|
    =
    \|\ddot x(t)\|,
    \label{eq:sharpness_material_norm}
\end{equation}
is selected by two complementary numerical perspectives. From the local
truncation error viewpoint, it controls the leading one-step Euler error. From
the backward error viewpoint, it controls the leading perturbation in the
modified vector field implicitly integrated by Euler. Therefore, regions with
large trajectory acceleration are, to leading order, regions where Euler is both
locally less accurate and more strongly biased toward modified dynamics. This
provides a numerical justification for using velocity differences along
calibration trajectories as a sharpness signal in SharpEuler.

\subsection{Principle II: Power Scalings in SharpEuler Arise from Variational Allocation Problems}

The risk objective induced by the leading Euler local truncation error (Euler-risk) and the profile-matching objective in the main paper can be viewed as two members of a variational family. We introduce this family first, and then recover the two cases relevant for SharpEuler as special
cases.

Let $I:[0,1]\to(0,\infty)$ be a positive importance profile, and let \(\rho(t)\) be a normalized timestep
density.
For \(\beta\geq0\), define
\begin{equation}
    \psi_\beta(r)
    =
    \begin{cases}
    \dfrac{r^{-\beta}-1}{\beta}, & \beta>0,\\[1.2ex]
    -\log r, & \beta=0,
    \end{cases}
    \qquad r>0.
    \label{eq:psi_beta_definition}
\end{equation}
which is continuous at \(\beta=0\). 
Given \(I\), define
\begin{equation}
    \mathcal F_\beta(\rho;I)
    =
    \int_0^1
    I(t)\psi_\beta(\rho(t))
    \,dt,
    \qquad
    \rho(t)>0,
    \qquad
    \int_0^1\rho(t)\,dt=1.
    \label{eq:unified_variational_objective}
\end{equation}
For \(\beta>0\), this is, up to an additive constant independent of \(\rho\),
equivalent to
\begin{equation}
    \mathcal F_\beta(\rho;I)
    =
    \frac1\beta
    \int_0^1
    I(t)\rho(t)^{-\beta}
    \,dt
    +\mathrm{constant}.
\end{equation}
For \(\beta=0\), it becomes the weighted cross-entropy objective
\begin{equation}
    \mathcal F_0(\rho;I)
    =
    -
    \int_0^1
    I(t)\log \rho(t)
    \,dt.
\end{equation}

The following proposition characterizes the unique minimizer of $ \mathcal F_\beta(\rho;I)$ over the class of normalized densities $\rho > 0$.

\begin{proposition}[Variational timestep density]
\label{prop:unified_variational_density}
Let \(I:[0,1]\to(0,\infty)\) be continuous. For every \(\beta\geq0\), the
unique minimizer of
\begin{equation}
    \min_{\rho(t)>0,\ \int_0^1\rho(t)\,dt=1}
    \mathcal F_\beta(\rho;I)
\end{equation}
is
\begin{equation}
    \rho_\beta^\star(t)
    =
    \frac{
        I(t)^{1/(\beta+1)}
    }{
        \int_0^1 I(s)^{1/(\beta+1)}\,ds
    }.
    \label{eq:unified_density_solution}
\end{equation}
\end{proposition}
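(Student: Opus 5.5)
The approach is pointwise convexity plus a Lagrange-type (tangent-line) inequality, which avoids calculus of variations in function space. Set $p=1/(\beta+1)$, $Z=\int_0^1 I(s)^{p}\,ds$, and $\rho^\star=I^{p}/Z$. Since $I$ is continuous and positive on the compact interval $[0,1]$, it is bounded above and away from zero. Hence $Z\in(0,\infty)$, $\rho^\star$ is continuous, positive and normalized, and $\mathcal F_\beta(\rho^\star;I)$ is finite. So $\rho^\star$ is admissible.

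The key step is to show that $\rho^\star$ satisfies the first-order (KKT) condition with a constant multiplier. The function $\psi_\beta$ is strictly convex on $(0,\infty)$, with $\psi_\beta''(r)=(\beta+1)r^{-\beta-2}>0$ for every $\beta\ge0$ (including $\beta=0$, where $\psi_0''=r^{-2}$), and $\psi_\beta'(r)=-r^{-\beta-1}$. At $\rho^\star$,
\[
I(t)\psi_\beta'(\rho^\star(t))=-I(t)\,\rho^\star(t)^{-(\beta+1)}=-I(t)\,I(t)^{-1}Z^{\beta+1}=-Z^{\beta+1},
\]
which is independent of $t$; call it $-\lambda$.

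Next, take any admissible $\rho$ for which $\mathcal F_\beta(\rho;I)$ is defined (possibly $+\infty$). Strict convexity gives the tangent-line inequality, pointwise in $t$:
\[
I(t)\psi_\beta(\rho(t))\ge I(t)\psi_\beta(\rho^\star(t))-\lambda\bigl(\rho(t)-\rho^\star(t)\bigr),
\]
with equality iff $\rho(t)=\rho^\star(t)$, because $I(t)>0$. Integrating over $[0,1]$ and using $\int\rho=\int\rho^\star=1$, the multiplier term vanishes. This yields $\mathcal F_\beta(\rho;I)\ge\mathcal F_\beta(\rho^\star;I)$.

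For uniqueness, suppose equality holds. Then the nonnegative integrand gap
\[
I\bigl[\psi_\beta(\rho)-\psi_\beta(\rho^\star)-\psi_\beta'(\rho^\star)(\rho-\rho^\star)\bigr]
\]
integrates to zero. It therefore vanishes almost everywhere, and strict convexity forces $\rho=\rho^\star$ almost everywhere. If the class consists of continuous densities, this holds everywhere.

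The main obstacle is integrability bookkeeping. When $\beta=0$, $-I\log\rho$ could have an integrable negative part but an infinite positive part, and when $\beta>0$ the term $\int I\rho^{-\beta}$ may be $+\infty$. I will handle this by noting that the linear term $\lambda(\rho-\rho^\star)$ is integrable and that $\psi_\beta(\rho^\star)$ is bounded. The inequality then holds in the extended sense, and any $\rho$ with infinite objective is trivially not a minimizer. For $\beta=0$, I will also note the cross-check $\mathcal F_0=\bigl(\int I\bigr)\bigl[H(\pi_I)+\mathrm{KL}(\pi_I\|\rho)\bigr]$, which gives the same conclusion through Gibbs' inequality.
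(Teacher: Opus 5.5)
Your proof is correct. It rests on the same two ingredients as the paper's proof, the stationarity relation $I(t)\rho(t)^{-(\beta+1)}=\lambda$ and strict convexity of $\psi_\beta$, but uses them in the opposite logical direction.

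The paper \emph{derives} the formula. It forms a Lagrangian, separately for $\beta>0$ (after dropping the constant $-\frac1\beta\int_0^1 I$) and for $\beta=0$. It sets the first variation to zero to get $\rho=(I/\lambda)^{1/(\beta+1)}$, normalizes, and then cites strict convexity of the integrand in one line for uniqueness. That gives a necessary condition for an interior minimizer. It leaves implicit the sufficiency step, namely that a constrained stationary point of a convex functional is its global minimizer, and existence of a minimizer rests on that step.

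Your integrated supporting-line inequality, with the constant multiplier $\lambda=Z^{\beta+1}$, is exactly the argument behind that implicit step. It gives four things directly:
\begin{itemize}
\item global optimality of $\rho^\star$ against every admissible competitor;
\item uniqueness from the equality case (almost everywhere, or everywhere among continuous densities);
\item correct handling of the extended-valued objective, since the lower bound $I\psi_\beta(\rho^\star)-\lambda(\rho-\rho^\star)$ is integrable;
\item a single computation for $\beta=0$ and $\beta>0$, via the identity $\psi_\beta'(r)=-r^{-\beta-1}$.
\end{itemize}

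What the paper's route offers instead is constructiveness: it shows where the exponent $1/(\beta+1)$ comes from. Your verification needs the candidate in hand, which costs nothing here because the statement supplies it.
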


\begin{proof}
First suppose \(\beta>0\). Since the term
$-\frac1\beta\int_0^1I(t)\,dt$
does not depend on \(\rho\), minimizing \(\mathcal F_\beta\) is equivalent to
minimizing
\begin{equation}
    \frac1\beta
    \int_0^1
    I(t)\rho(t)^{-\beta}
    \,dt
\end{equation}
subject to \(\int_0^1\rho(t)\,dt=1\). The Lagrangian is
\begin{equation}
    \mathcal L(\rho,\lambda)
    =
    \frac1\beta
    \int_0^1
    I(t)\rho(t)^{-\beta}
    \,dt
    +
    \lambda
    \left(
        \int_0^1\rho(t)\,dt-1
    \right),
\end{equation}
where $\lambda$ is the Lagrange multiplier.
Taking the first variation with respect to \(\rho\) gives
\begin{equation}
    -I(t)\rho(t)^{-\beta-1}
    +
    \lambda
    =
    0.
\end{equation}
Therefore
\begin{equation}
    \rho(t)
    =
    \left(
        \frac{I(t)}{\lambda}
    \right)^{1/(\beta+1)}.
\end{equation}
Using the normalization constraint,
\begin{equation}
    1
    =
    \int_0^1\rho(t)\,dt
    =
    \lambda^{-1/(\beta+1)}
    \int_0^1 I(t)^{1/(\beta+1)}\,dt.
\end{equation}
Thus
\begin{equation}
    \lambda^{1/(\beta+1)}
    =
    \int_0^1 I(t)^{1/(\beta+1)}\,dt,
\end{equation}
and hence Eq.~\eqref{eq:unified_density_solution} follows.

For \(\beta=0\), the objective is
\begin{equation}
    \mathcal F_0(\rho;I)
    =
    -
    \int_0^1
    I(t)\log \rho(t)
    \,dt.
\end{equation}
The Lagrangian is
\begin{equation}
    \mathcal L(\rho,\lambda)
    =
    -
    \int_0^1
    I(t)\log \rho(t)
    \,dt
    +
    \lambda
    \left(
        \int_0^1\rho(t)\,dt-1
    \right).
\end{equation}
The first variation gives
\begin{equation}
    -\frac{I(t)}{\rho(t)}
    +
    \lambda
    =
    0.
\end{equation}
Thus $\rho(t)
    =
    \frac{I(t)}{\lambda}$, and
normalizing gives
\begin{equation}
    \rho_0^\star(t)
    =
    \frac{
        I(t)
    }{
        \int_0^1I(s)\,ds
    },
\end{equation}
which agrees with Eq.~\eqref{eq:unified_density_solution} at \(\beta=0\).

Finally, note that \(\psi_\beta\) is strictly convex on \(r>0\) for every
\(\beta\geq0\). Indeed, for \(\beta>0\),
\begin{equation}
    \psi_\beta''(r)
    =
    (\beta+1)r^{-\beta-2}>0,
\end{equation}
and for \(\beta=0\),
\begin{equation}
    \psi_0''(r)=\frac1{r^2}>0.
\end{equation}
Since \(I(t)>0\), the objective is strictly convex on the positive density
constraint set. Thus, the minimizer is unique.
\end{proof}

\paragraph{Special case 1: Euler-risk objective.}  Let $\bar a(t)
    =
    \mathbb E_{x_0\sim p_0}
    \|\ddot x(t;x_0)\|$
be the population mean acceleration sharpness. 
Given an Euler budget \(B\), the local step size induced by \(\rho\) scales as
\begin{equation}
    h(t)
    \approx
    \frac{1}{B\rho(t)}.
    \label{eq:local_step_density_scaling}
\end{equation}
Using Eq.~\eqref{eq:local_step_density_scaling}, the accumulated leading
Euler local-error proxy scales as
\begin{equation}
    \mathcal E[\rho]
    \approx
    \frac{1}{2B}
    \int_0^1
    \frac{
        \bar a(t)
    }{
        \rho(t)
    }
    \,dt.
    \label{eq:euler_error_functional}
\end{equation}
The factor \(1/\rho(t)\), rather than \(1/\rho(t)^2\), appears because the
local error is accumulated over the number of steps assigned to each time
region. A small interval \(dt\) receives approximately \(B\rho(t)\,dt\) Euler
steps, and each step contributes a leading local error of size
\(\frac12 h(t)^2\bar a(t)\). Therefore
\begin{equation}
    B\rho(t)\,dt
    \cdot
    \frac12
    \left(
        \frac{1}{B\rho(t)}
    \right)^2
    \bar a(t)
    =
    \frac{1}{2B}
    \frac{\bar a(t)}{\rho(t)}
    \,dt.
\end{equation}
The common factor \(1/(2B)\) does not affect the optimal density.
This Euler local-error objective is recovered from the unified family by taking
\begin{equation} \label{eq_contversion_sqrt}
    \beta=1,
    \qquad
    I(t)=\bar a(t).
\end{equation}
Then Eq.~\eqref{eq:unified_density_solution} gives
\begin{equation} 
    \rho_{\rm Euler}^\star(t)
    =
    \frac{
        \sqrt{\bar a(t)}
    }{
        \int_0^1\sqrt{\bar a(s)}\,ds
    }.
    \label{eq:euler_square_root_density}
\end{equation}
Therefore, the Euler local-error analysis selects the square-root density
\begin{equation}
    \rho(t)
    \propto
    \bar a(t)^{1/2}.
\end{equation}

\paragraph{Special case 2: cross-entropy and profile matching.}
The profile-matching or cross-entropy objective is recovered by taking $\beta=0$.
Given any positive importance profile \(I(t)>0\), the minimizer is
\begin{equation}
    \rho_I(t)
    =
    \frac{
        I(t)
    }{
        \int_0^1 I(s)\,ds
    }.
    \label{eq:profile_matching_density}
\end{equation}
Equivalently, \(\rho_I\) is the unique minimizer of
\begin{equation}
    \min_{\rho(t)>0,\ \int_0^1\rho(t)\,dt=1}
    -
    \int_0^1
    I(t)\log\rho(t)
    \,dt.
    \label{eq:profile_matching_objective}
\end{equation}
Now, for \(\beta=0\), define the normalized importance density
\begin{equation}
    \pi_I(t)
    =
    \frac{
        I(t)
    }{
        \int_0^1 I(s)\,ds
    }.
\end{equation}
Then
\begin{align}
    -
    \int_0^1
    I(t)\log \rho(t)
    \,dt
    &=
    \left(
        \int_0^1 I(s)\,ds
    \right)
    \left[
        -
        \int_0^1
        \pi_I(t)\log \rho(t)
        \,dt
    \right] \\
    &=
    \left(
        \int_0^1 I(s)\,ds
    \right)
    \left[
        H(\pi_I)
        +
        {\rm KL}(\pi_I\|\rho)
    \right].
\end{align}
Since the entropy \(H(\pi_I)\) does not depend on \(\rho\), minimizing the cross-entropy
objective is equivalent to minimizing the Kullback-Leibler divergence 
    $ {\rm KL}(\pi_I\|\rho)$.
The unique minimizer is therefore \(\rho=\pi_I\).

\paragraph{SharpEuler exponent schedules.}
SharpEuler with exponent \(\gamma>0\) can be viewed as an empirical estimator
of the solver-aware importance profile
\[
    I_\gamma(t)
    =
    \left(
        \mathbb E_{x_0\sim p_0}
        \|\ddot x(t;x_0)\|
    \right)^\gamma,
\]
with associated profile density
\[
    \pi_{I_\gamma}(t)
    =
    \frac{
        I_\gamma(t)
    }{
        \int_0^1 I_\gamma(s)\,ds
    }.
\]
As discussed earlier, the use of trajectory acceleration is motivated by the Euler local truncation
error and by the backward error analysis, both of which identify the material
derivative of the learned velocity field as the relevant  sharpness
signal.

\paragraph{Connection to entropy-based schedulers \cite{stancevic2025entropic}.}
Entropy-based schedules can be viewed as using an information-theoretic
importance profile. Following the notation of
\cite{stancevic2025entropic}, let \(H[x_0\mid x_t]\) denote the conditional
differential entropy of the data endpoint \(x_0\) given the intermediate state
\(x_t\),
\[
    H[x_0\mid x_t]
    =
    -\mathbb E_{p(x_0,x_t)}
    \left[
        \log p(x_0\mid x_t)
    \right].
\]
In entropic time, the time change is
\[
    \phi_{\rm ent}(t)=H[x_0\mid x_t],
\]
and in rescaled entropic time,
\[
    \phi_{\rm rent}(t)
    =
    \int_0^t
    \sigma(\tau)\dot H[x_0\mid x_\tau]\,d\tau.
\]
After orienting time so that \(\dot\phi(t)>0\), uniform spacing in the new time
variable induces the timestep density
\[
    \rho_\phi(t)
    =
    \frac{\dot\phi(t)}
    {\int_0^1\dot\phi(s)\,ds}.
\]
Thus entropy-based schedules use conditional-entropy production as their
importance profile, whereas SharpEuler uses solver sharpness. The square-root
SharpEuler density is special because it minimizes the Euler-risk
objective and satisfies \(\rho(t)\propto \bar a(t)^{1/2}\). An entropic schedule
recovers this density only when its induced profile \(\dot\phi(t)\) is
proportional to \(\bar a(t)^{1/2}\).

\paragraph{Specialization to discrete intervals.}
Let
\begin{equation}
    0=\tau_0<\tau_1<\cdots<\tau_N=1,
    \qquad
    \Delta_i=\tau_{i+1}-\tau_i,
    \qquad
    i=0,\ldots,N-1.
\end{equation}
A density \(\rho\) induces interval masses
\begin{equation}
    w_i
    =
    \int_{\tau_i}^{\tau_{i+1}}
    \rho(t)\,dt,
    \qquad
    \sum_{i=0}^{N-1} w_i=1.
\end{equation}
If the profile is approximately constant on each interval, with value
\(\phi_i>0\), then the profile-matching density
\begin{equation}
    \rho_\gamma(t)\propto \phi_i^\gamma,
    \qquad
    t\in[\tau_i,\tau_{i+1}),
\end{equation}
induces weights
\begin{equation}
    w_i^{(\gamma)}
    =
    \frac{
        \Delta_i\phi_i^\gamma
    }{
        \sum_{\ell=0}^{N-1}
        \Delta_\ell\phi_\ell^\gamma
    }.
    \label{eq:discrete_gamma_weights}
\end{equation}
In particular, for \(\gamma=1/2\),
\begin{equation}
    w_i^{(1/2)}
    =
    \frac{
        \Delta_i\sqrt{\phi_i}
    }{
        \sum_{\ell=0}^{N-1}
        \Delta_\ell\sqrt{\phi_\ell}
    }.
\end{equation}

\begin{proposition}[Discrete profile matching]
\label{prop:discrete_profile_matching}
Fix \(\gamma>0\) and \(\phi_i>0\). The weights in
Eq.~\eqref{eq:discrete_gamma_weights} are the unique minimizer of
\begin{equation}
    \min_{w_i>0,\ \sum_iw_i=1}
    -
    \sum_{i=0}^{N-1}
    \Delta_i\phi_i^\gamma
    \log w_i.
\end{equation}
\end{proposition}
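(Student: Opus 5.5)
The proof is a finite-dimensional version of the cross-entropy argument used for the $\beta=0$ case of Proposition~\ref{prop:unified_variational_density}. The cleanest route avoids Lagrange multipliers altogether and uses a discrete KL identity, which gives existence, optimality and uniqueness in one step. Set
\[
    c_i := \Delta_i\phi_i^\gamma>0,
    \qquad
    Z := \sum_{\ell=0}^{N-1} c_\ell>0,
    \qquad
    \pi_i := c_i/Z .
\]
Then $\pi_i = w_i^{(\gamma)}$ by Eq.~\eqref{eq:discrete_gamma_weights}, and $\pi$ is a strictly positive probability vector, so it is feasible.

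For any feasible $w$ (that is, $w_i>0$ and $\sum_i w_i=1$), rewrite the objective as
\[
    -\sum_i c_i\log w_i
    = Z\Bigl[-\sum_i \pi_i\log \pi_i + \sum_i \pi_i\log\frac{\pi_i}{w_i}\Bigr]
    = Z\bigl[H(\pi)+\mathrm{KL}(\pi\,\|\,w)\bigr].
\]
This mirrors the continuous computation in Special case~2. Since $Z$ and $H(\pi)$ do not depend on $w$, minimizing the objective is the same as minimizing $\mathrm{KL}(\pi\|w)$ over the simplex interior.

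By Gibbs' inequality, $\mathrm{KL}(\pi\|w)\ge 0$, with equality if and only if $w=\pi$. I would verify this directly from Jensen's inequality for the strictly concave $\log$:
\[
    \sum_i \pi_i\log\frac{w_i}{\pi_i}
    \le \log\sum_i w_i
    = 0 .
\]
Equality in Jensen forces the ratio $w_i/\pi_i$ to be constant in $i$, and normalization then makes that constant equal to $1$. This gives both optimality of $w^{(\gamma)}$ and uniqueness of the minimizer.

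Two remarks support the argument. First, strict convexity of $-\log$ together with $c_i>0$ makes the objective strictly convex on the open simplex, which gives uniqueness independently. Second, the positivity hypotheses $\phi_i>0$ and $\Delta_i>0$ are exactly what make $\pi$ lie in the interior, so the infimum is actually attained there.

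There is no real obstacle in this proof. The only points needing care are that the equality case of Jensen yields uniqueness, and that the minimizer lies in the open simplex rather than on its boundary. The KL identity handles both.
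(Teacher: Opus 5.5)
Your argument is correct, but it takes a different route from the paper. The paper sets $q_i=\Delta_i\phi_i^\gamma$ and forms the Lagrangian $-\sum_i q_i\log w_i+\lambda(\sum_i w_i-1)$. It solves the stationarity condition $-q_i/w_i+\lambda=0$ to get $w_i=q_i/\lambda$, fixes $\lambda=\sum_i q_i$ by normalization, and then appeals to strict convexity of $-\log$ for uniqueness. You instead carry the paper's own continuous Special case~2 computation over to the discrete setting: you write the objective as $Z\,[H(\pi)+\mathrm{KL}(\pi\|w)]$ and use Gibbs' inequality with its equality case. The multiplier approach is the uniform template the paper reuses for every allocation problem (the $\beta>0$ family, the $1/w_i$ Euler-risk proxy). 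On its own, however, it only locates a stationary point. That this point is the global minimizer rests on the unstated fact that stationarity suffices for a convex objective on the affine slice of the open simplex. Your inequality gives existence, global optimality and uniqueness at once, without multiplier theory. It also gives the optimal value $Z\,H(\pi)$ and the exact excess $Z\,\mathrm{KL}(\pi\|w)$ of any other feasible $w$. Its limitation is that it is tied to the logarithmic ($\beta=0$) objective. For the power-law objectives, the analogous one-line substitute would be H\"older's inequality (Cauchy--Schwarz when $\beta=1$).
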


\begin{proof}
Let \(q_i=\Delta_i\phi_i^\gamma\). The Lagrangian is
\begin{equation}
    \mathcal L(w,\lambda)
    =
    -
    \sum_iq_i\log w_i
    +
    \lambda
    \left(
        \sum_iw_i-1
    \right),
\end{equation}
where $\lambda$ is the Lagrange multiplier. 
The first-order optimality condition gives
\begin{equation}
    -\frac{q_i}{w_i}+\lambda=0,
\end{equation}
so \(w_i=q_i/\lambda\). The normalization constraint gives
\begin{equation}
    \lambda=\sum_iq_i,
\end{equation}
and hence
\begin{equation}
    w_i
    =
    \frac{
        \Delta_i\phi_i^\gamma
    }{
        \sum_{\ell=0}^{N-1}\Delta_\ell\phi_\ell^\gamma
    }.
\end{equation}
Strict convexity of \(-\log w_i\) gives uniqueness and concludes the proof.
\end{proof}

\subsection{Principle III: Statistical Stability Guarantee For the Calibrated Schedule}
\label{sec:stat_guarantee}

In the main paper, we introduce a practical offline calibration procedure for learned flow samplers that estimates where the learned vector field is numerically sharp and converts this profile into a fixed-budget timestep schedule (SharpEuler). In this subsection, we provide a statistical stability guarantee to show that the calibrated schedule is near-oracle at the level of the generated distribution at the terminal time $t=1$.
We focus on the square-root schedule
$\gamma=1/2$, which is the exponent selected by the leading Euler local-error
proxy. General exponent schedules are discussed in Remark~\ref{rmk_general_gamma}.

\paragraph{Motivation.}
The scheduler is calibrated offline from finitely many ODE trajectories. Hence the
schedule used at sampling time is random, since it depends on the calibration
set. Our goal here is to show that this calibration randomness is
controlled at the level of the final generated distribution. 

\paragraph{Setting.}
We use the forward-time  convention. Consider the learned flow ODE
\begin{equation}
    \dot{x}(t)
    =
    u_\theta(x(t),t),
    \qquad
    t\in[0,1],
    \label{eq:learned_flow_ode}
\end{equation}
where  $x_0\sim p_0$ is the initial noise  sampled from the base distribution $p_0$. Let $T(x_0):=x(1;x_0)$ denote the learned flow map, and define the learned flow distribution $\mu_1:=T_{\#}p_0$.

Let us fix an $N$ from now on, and let $0=\tau_0<\tau_1<\cdots<\tau_N=1$
be a reference grid, writing $\Delta_i := \tau_{i+1}-\tau_i$, $i=0,\ldots,N-1$. 
A schedule is represented by the weights
\begin{equation}
    w=(w_0,\ldots,w_{N-1}),
    \qquad
    w_i>0,
    \qquad
    \sum_{i=0}^{N-1}w_i=1.
\end{equation}
The interpretation is that interval $i$ receives fraction $w_i$ of a total Euler
budget $B$. Thus interval $i$ receives approximately $Bw_i$ steps, with local
step size approximately
$h_i
    \approx
    \frac{\Delta_i}{Bw_i}.$
Let  $T_w(x_0)$ be the Euler sampler output at $t=1$ using schedule $w$, and let
$\mu_w := (T_w)_{\#}p_0$
be the corresponding sampler distribution at $t=1$.

\paragraph{Local-error risk and implemented sharpness.}
For each trajectory initialized at $x_0$, let
$A_i(x_0)\geq 0$
be an interval-wise local-error coefficient on $[\tau_i,\tau_{i+1}]$.

A natural continuous coefficient is the interval average
\begin{equation}
    A_i(x_0)
    =
    \frac{1}{\Delta_i}
    \int_{\tau_i}^{\tau_{i+1}}
    \|\ddot{x}(t;x_0)\|
    \,dt.
    \label{eq:average_local_error_coefficient}
\end{equation}
Alternatively, one may instead take a dominating coefficient
such as
\begin{equation}
    A_i(x_0)
    =
    \sup_{t\in[\tau_i,\tau_{i+1}]}
    \|\ddot{x}(t;x_0)\|.
    \label{eq:sup_local_error_coefficient}
\end{equation}


The motivation comes from Euler local truncation error as discussed in Section \ref{sec:theory_sharpeuler}. The leading local Euler error is controlled by $h^2\|\ddot{x}(t)\|.$
If interval $i$ receives $Bw_i$ steps of size
$h_i
    \approx
    \frac{\Delta_i}{Bw_i},$
then the accumulated local-error proxy on interval $i$ is
\begin{equation}
    Bw_i
    A_i(x_0)
    \left(
        \frac{\Delta_i}{Bw_i}
    \right)^2
    =
    \frac{1}{B}
    \frac{
        A_i(x_0)\Delta_i^2
    }{
        w_i
    }.
    \label{eq:interval_error_proxy}
\end{equation}
Ignoring the common factor $1/B$, define the population local-error risk
\begin{equation}
    \mathcal J(w)
    :=
    \mathbb E_{x_0\sim p_0}
    \left[
        \sum_{i=0}^{N-1}
        \frac{
            A_i(x_0)\Delta_i^2
        }{
            w_i
        }
    \right].
    \label{eq:raw_population_risk}
\end{equation}
Equivalently,
\begin{equation}
    \mathcal J(w)
    =
    \sum_{i=0}^{N-1}
    \frac{
        \mathcal A_i\Delta_i^2
    }{
        w_i
    },
    \qquad
    \mathcal A_i
    :=
    \mathbb E_{x_0\sim p_0}
    [A_i(x_0)].
    \label{eq:raw_population_risk_mean_profile}
\end{equation}
The factor $1/w_i$ is essential: allocating more steps to interval $i$ decreases its local-error contribution.

\paragraph{Smoothed empirical calibration.}
Suppose we have $M$ independent calibration trajectories. For each
$m=1,\ldots,M$, draw $x_0^{(m)}\sim p_0$
independently, and define
$A_i^{(m)} := A_i(x_0^{(m)}).$
Note that in implementation, $A_i^{(m)}$ is replaced by the finite-difference proxy. 

The empirical raw profile is
\begin{equation}
    \widehat{\mathcal A}_i
    =
    \frac1M
    \sum_{m=1}^{M}
    A_i^{(m)}.
    \label{eq:empirical_raw_profile}
\end{equation}
The population raw profile is
\begin{equation}
    \mathcal A_i
    =
    \mathbb E_{x_0\sim p_0}[A_i(x_0)].
\end{equation}

For the square-root SharpEuler schedule, the exponent is applied after
averaging over calibration trajectories:
\begin{equation}
    \widehat g_i
    =
    \sqrt{
        \widehat{\mathcal A}_i+\varepsilon_a
    },
    \qquad
    \varepsilon_a > 0.
    \label{eq:empirical_sqrt_profile}
\end{equation}
Let \(K_\sigma(i,\ell)\geq0\) be Gaussian smoothing weights satisfying
\[
    \sum_{\ell=0}^{N-1}K_\sigma(i,\ell)=1.
\]
The profile is then smoothed:
\begin{equation}
    \widehat b_i
    =
    \sum_{\ell=0}^{N-1}
    K_\sigma(i,\ell)\widehat g_\ell.
    \label{eq:empirical_sqrt_then_smooth_profile}
\end{equation}
Since \(\widehat g_i\geq0\) and the smoothing weights are nonnegative,
\(\widehat b_i\geq0\). 

The calibrated weights are
\begin{equation}
    \widehat w_i
    =
    \frac{
        \Delta_i\widehat b_i
    }{
        \sum_{\ell=0}^{N-1}
        \Delta_\ell\widehat b_\ell
    }.
    \label{eq:empirical_calibrated_weights}
\end{equation}

The corresponding population square-root profile is
$g_i^\star
    =
    \sqrt{
        \mathcal A_i+\varepsilon_a
    },$
and its smoothed version is
\begin{equation}
    b_i^\star
    =
    \sum_{\ell=0}^{N-1}
    K_\sigma(i,\ell)g_\ell^\star.
    \label{eq:population_sqrt_then_smooth_profile}
\end{equation}
Let us also define the effective empirical and population coefficients
\begin{equation}
    \widehat\phi_i
    :=
    \widehat b_i^2,
    \qquad
    \phi_i^\star
    :=
    (b_i^\star)^2.
    \label{eq:effective_coefficients_sqrt_smooth}
\end{equation}
Then \(\widehat w\) is the minimizer of the empirical effective Euler-risk
proxy
\begin{equation}
    \widehat{\mathcal J}_{\sigma,\varepsilon_a}(w)
    =
    \sum_{i=0}^{N-1}
    \frac{
        \widehat\phi_i\Delta_i^2
    }{
        w_i
    }.
    \label{eq:empirical_effective_risk}
\end{equation}

Assuming that these weights are realized through the piecewise-constant scheduling density
\begin{equation}
    \widehat p(t)
    =
    \frac{\widehat w_i}{\Delta_i},
    \qquad
    t\in[\tau_i,\tau_{i+1}),
\end{equation}
we now analyze this idealized interval-allocation schedule.  The practical implementation
constructs the schedule by forming an empirical CDF from the calibrated masses
and evaluating its inverse by linear interpolation; this realizes the same
allocation up to integer-rounding, interpolation, and finite-grid boundary
effects.

\subsubsection{The Main Theorem}

Consider the schedule class
\begin{equation}
    \mathcal W_\omega
    :=
    \left\{
        w\in\mathbb R^N:
        w_i\geq \omega,\ 
        \sum_{i=0}^{N-1}w_i=1
    \right\},
    \qquad
    0<\omega<1/N.
    \label{eq:schedule_class}
\end{equation}
The lower bound prevents degenerate schedules that allocate vanishing mass to an
interval and makes the oracle comparison stable.

Let
\begin{equation}
    w^\star
    \in
    \arg\min_{w\in\mathcal W_\omega}
    \mathcal J(w)
    \label{eq:raw_oracle_schedule}
\end{equation}
be the best raw population-risk schedule in this class. Define the population effective objective
\begin{equation}
    \mathcal J_{\sigma,\varepsilon_a}(w)
    :=
    \sum_{i=0}^{N-1}
    \frac{
        \phi_i^\star\Delta_i^2
    }{
        w_i
    }.
\label{eq:population_smoothed_floored_objective}
\end{equation}
and the  bias due to smoothing and other procedures
\begin{equation}
    \beta_{\sigma,\varepsilon_a}
    :=
    \max_{0\leq i\leq N-1}
    \left|
        \phi_i^\star-\mathcal A_i
    \right|
    =
    \max_i
    \left|
        (b_i^\star)^2-\mathcal A_i
    \right|.
    \label{eq:smoothing_flooring_bias}
\end{equation}
This term measures the mismatch between the raw profile appearing in the Euler
error proxy and the  profile optimized by the algorithm. 

We make the following assumptions. 

\paragraph{(A1) Independent calibration trajectories.}
The calibration samples $x_0^{(1)},\ldots,x_0^{(M)}$
are i.i.d. from $p_0$.

\paragraph{(A2) Bernstein conditions.}
For every $i=0,\ldots,N-1$, there exist positive constants $R$ and $\nu$ such that
\begin{equation}
    \left|
        A_i(x_0)-\mathcal A_i
    \right|
    \leq R
\end{equation}
almost surely, and
\begin{equation}
    \operatorname{Var}_{x_0\sim p_0}
    \left(
        A_i(x_0)
    \right)
    \leq \nu^2.
\end{equation}
The boundedness assumption is a sufficient condition used to obtain a simple
Bernstein concentration bound. In practice, it can be enforced by clipping the calibrated sharpness values, or replaced by an appropriate tail assumption with the corresponding concentration bound.

\paragraph{(A3) Lower-bound and bounded effective-profile conditions.}
The closed-form empirical schedule $\widehat w$ lies in $\mathcal W_\omega$.
Likewise, the unconstrained minimizer of the population effective objective
$\mathcal J_{\sigma,\varepsilon_a}$ lies in $\mathcal W_\omega$.
In addition, assume on the calibration event under consideration that
\begin{equation}
    0\leq \widehat b_i\leq G_{\max},
    \qquad
    0\leq b_i^\star\leq G_{\max},
    \qquad
    i=0,\ldots,N-1.
    \label{eq:effective_profile_boundedness}
\end{equation}

A sufficient deterministic condition is that \(0\leq A_i(x_0)\leq A_{\max}\)
almost surely, in which case one may take
$G_{\max}=\sqrt{A_{\max}+\varepsilon_a}.$
A sufficient condition for the lower-bound part is that
$\Delta_i\geq \Delta_{\min}>0$
and
$\sqrt{\varepsilon_a}\leq \widehat b_i\leq G_{\max},$
$\sqrt{\varepsilon_a}\leq b_i^\star\leq G_{\max}.$
Then
\begin{equation}
    \widehat w_i
    =
    \frac{
        \Delta_i\widehat b_i
    }{
        \sum_{\ell}
        \Delta_\ell\widehat b_\ell
    }
    \geq
    \frac{
        \Delta_{\min}\sqrt{\varepsilon_a}
    }{
        G_{\max}
    },
\end{equation}
and similarly for the population effective weights. Hence one may choose
\begin{equation}
    \omega
    \leq
    \frac{
        \Delta_{\min}\sqrt{\varepsilon_a}
    }{
        G_{\max}
    }.
\end{equation}

\paragraph{(A4) Euler stability.}
For every $w\in\mathcal W_\omega$, the Euler sampler satisfies
\begin{equation}
    \|T_w(x_0)-T(x_0)\|
    \leq
    \frac{C_{\rm Eul}}{B}
    \sum_{i=0}^{N-1}
    \frac{
        A_i(x_0)\Delta_i^2
    }{
        w_i
    },
    \label{eq:euler_stability_assumption}
\end{equation}
where $C_{\rm Eul} > 0$ is a stability constant. 

Assumption (A4) is a standard nonuniform Euler stability bound. For example,
if $u_\theta$ is uniformly $L$-Lipschitz in $x$ and we take
$A_i(x_0) = \sup_{t\in[\tau_i,\tau_{i+1}]}  \|\ddot x(t;x_0)\|,$
then a discrete Gr\"onwall argument gives (A4) with $C_{\rm Eul}=e^L/2$, up to
the idealized interval-allocation approximation. We state (A4) as an assumption because learned
flow fields may have endpoint singularities \cite{li2026kinetic} or problem-dependent stability constants.

Define $D_2:=\sum_{i=0}^{N-1}\Delta_i^2.$
For \(\delta\in(0,1)\), define
\begin{equation}
    \eta_M(\delta)
    :=
    \sqrt{
        \frac{2\nu^2\log(2N/\delta)}{M}
    }
    +
    \frac{
        2R\log(2N/\delta)
    }{
        3M
    },
    \label{eq:eta_M_delta}
\end{equation}
and
\begin{equation}
    \alpha_M(\delta)
    :=
    \frac{
        \eta_M(\delta)
    }{
        2\sqrt{\varepsilon_a}
    },
    \qquad
    \zeta_M(\delta)
    :=
    2G_{\max}\alpha_M(\delta)
    \label{eq:zeta_M_delta}
\end{equation}

With these assumptions and discussions in place, we have the following statistical guarantee in terms of the 1-Wasserstein distance $W_1$.

\begin{theorem}[Statistical stability guarantee]
\label{thm:stat_guarantee}
Let $\delta \in (0,1)$, $N$, and $M$ be given. Under Assumptions (A1)--(A4), with probability at least $1-\delta$ over the
calibration set,
\begin{equation}
    W_1(\mu_{\widehat w},\mu_1)
    \leq
    \frac{C_{\rm Eul}}{B}
    \left[
        \mathcal J(w^\star)
        +
        \frac{2D_2}{\omega}
        \left(
            \zeta_M(\delta)+\beta_{\sigma,\varepsilon_a}
        \right)
    \right].
    \label{eq:main_oracle_bound}
\end{equation}
\end{theorem}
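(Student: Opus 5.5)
The argument reduces the Wasserstein bound to a deterministic comparison of risks, then controls the calibration randomness by concentration.

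\textbf{Step 1 (coupling).} Couple $\mu_{\widehat w}$ and $\mu_1$ through the common initial noise $x_0\sim p_0$, with $\widehat w$ held fixed since it is independent of the sampling noise. This gives
\[
W_1(\mu_{\widehat w},\mu_1)\le \mathbb E_{x_0}\|T_{\widehat w}(x_0)-T(x_0)\|.
\]
Assumption (A3) puts $\widehat w\in\mathcal W_\omega$, so (A4) applies. Taking expectations in (A4) yields
\[
W_1(\mu_{\widehat w},\mu_1)\le \frac{C_{\rm Eul}}{B}\,\mathcal J(\widehat w),
\]
using $\mathcal J(w)=\sum_i \mathcal A_i\Delta_i^2/w_i$. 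It therefore suffices to show
\[
\mathcal J(\widehat w)\le \mathcal J(w^\star)+\frac{2D_2}{\omega}\bigl(\zeta_M+\beta_{\sigma,\varepsilon_a}\bigr)
\]
on an event of probability at least $1-\delta$.

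\textbf{Step 2 (concentration).} Apply Bernstein's inequality (A1)--(A2) to each $\widehat{\mathcal A}_i-\mathcal A_i$, then take a union bound over the $N$ indices with two-sided tails. With probability at least $1-\delta$, this gives $\max_i|\widehat{\mathcal A}_i-\mathcal A_i|\le\eta_M(\delta)$.

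The square root is $1/(2\sqrt{\varepsilon_a})$-Lipschitz on $[\varepsilon_a,\infty)$, so $|\widehat g_\ell-g_\ell^\star|\le\alpha_M$. Smoothing is a convex averaging, so $|\widehat b_i-b_i^\star|\le\alpha_M$. Then
\[
|\widehat\phi_i-\phi_i^\star|=|\widehat b_i-b_i^\star|\,(\widehat b_i+b_i^\star)\le 2G_{\max}\alpha_M=\zeta_M,
\]
by the boundedness in (A3). Combining with the definition of $\beta_{\sigma,\varepsilon_a}$ gives
\[
\max_i|\widehat\phi_i-\mathcal A_i|\le \zeta_M+\beta_{\sigma,\varepsilon_a}=:\epsilon.
\]

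\textbf{Step 3 (uniform risk comparison and oracle inequality).} For any $w\in\mathcal W_\omega$,
\[
|\widehat{\mathcal J}_{\sigma,\varepsilon_a}(w)-\mathcal J(w)|\le\sum_i \epsilon\,\Delta_i^2/w_i\le \epsilon D_2/\omega.
\]
By Proposition~\ref{prop:unified_variational_density} in its discrete form ($\beta=1$ Lagrange argument, as in Proposition~\ref{prop:discrete_profile_matching}), $\widehat w\propto\Delta_i\sqrt{\widehat\phi_i}$ is the unconstrained minimizer of $\widehat{\mathcal J}_{\sigma,\varepsilon_a}$. Since it lies in $\mathcal W_\omega$ by (A3), it also minimizes over $\mathcal W_\omega$. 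The standard chain is then
\[
\mathcal J(\widehat w)\le\widehat{\mathcal J}(\widehat w)+\tfrac{\epsilon D_2}{\omega}\le \widehat{\mathcal J}(w^\star)+\tfrac{\epsilon D_2}{\omega}\le \mathcal J(w^\star)+\tfrac{2\epsilon D_2}{\omega},
\]
using $w^\star\in\mathcal W_\omega$. This is exactly the claimed bound.

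\textbf{Main obstacle.} The delicate step is Step 2's passage from raw averages to the effective coefficients $\widehat\phi_i$. It needs the floor $\varepsilon_a$ for the Lipschitz bound on the square root, and the uniform bound $G_{\max}$ to handle the squaring. One must also check that the population-effective minimizer condition in (A3) is not actually needed beyond membership of $\widehat w$ and $w^\star$ in $\mathcal W_\omega$. A secondary point is justifying the coupling in Step 1 given that $\widehat w$ is random. This is resolved by conditioning on the calibration set, which is independent of the sampling noise.
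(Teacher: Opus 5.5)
Your proof is correct and follows the paper's argument. The steps match one for one: the shared-noise coupling with (A4) is Lemma~\ref{lem:risk_controls_distribution}; Bernstein with a union bound, the $1/(2\sqrt{\varepsilon_a})$-Lipschitz square root, nonexpansive smoothing and the $G_{\max}$ squaring bound are Lemma~\ref{lem:bernstein_effective_profile}; and the final step is the same empirical-minimizer oracle inequality. The one cosmetic difference is that you merge $\zeta_M$ and $\beta_{\sigma,\varepsilon_a}$ into a single coefficient perturbation and compare $\widehat{\mathcal J}_{\sigma,\varepsilon_a}$ with $\mathcal J$ directly, whereas the paper passes through $\mathcal J_{\sigma,\varepsilon_a}$ and its constrained minimizer $w^\star_{\sigma,\varepsilon_a}$ (Lemmas~\ref{lem:smoothed_oracle_inequality}--\ref{lem:smoothed_to_raw_risk}); both give the same constant $2D_2/\omega$.
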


The high-probability statement above is over the random calibration trajectories.
Conditional on the calibrated schedule, $\mu_{\widehat w}$ is the law induced by
fresh samples $x_0\sim p_0$. Equivalently, treating \(G_{\max}\) and
\(\varepsilon_a\) as fixed constants,
\begin{equation}
    W_1(\mu_{\widehat w},\mu_1)
    \leq
    \frac{C_{\rm Eul}}{B}\mathcal J(w^\star)
    +
    O_{\mathbb P}
    \left(
        \frac{C_{\rm Eul}D_2}{B\omega}
        \sqrt{
            \frac{\log N}{M}
        }
    \right)
    +
    \frac{C_{\rm Eul}D_2}{B\omega}
    \beta_{\sigma,\varepsilon_a}.
    \label{eq:main_oracle_bound_op}
\end{equation}

\begin{remark}
The oracle term $\frac{C_{\rm Eul}}{B}\mathcal J(w^\star)$ is the best Euler local-error proxy achievable by the schedule class.
When all $\mathcal A_i>0$ and the lower-bound constraint $w_i\geq \omega$ is
inactive, the oracle problem is
\begin{equation}
    \min_{w_i>0,\ \sum_iw_i=1}
    \sum_{i=0}^{N-1}
    \frac{
        \mathcal A_i\Delta_i^2
    }{
        w_i
    }.
\end{equation}
Its minimizer is
\begin{equation}
    w_i^\star
    =
    \frac{
        \Delta_i\sqrt{\mathcal A_i}
    }{
        \sum_{\ell=0}^{N-1}
        \Delta_\ell\sqrt{\mathcal A_\ell}
    },
    \label{eq:unconstrained_oracle_weights}
\end{equation}
and the optimal value is
\begin{equation}
    \mathcal J(w^\star)
    =
    \left(
        \sum_{i=0}^{N-1}
        \Delta_i\sqrt{\mathcal A_i}
    \right)^2.
    \label{eq:oracle_value_explicit}
\end{equation}
Thus, in the oracle case,
\begin{equation}
    W_1(\mu_{w^\star},\mu_1)
    \leq
    \frac{C_{\rm Eul}}{B}
    \left(
        \sum_{i=0}^{N-1}
        \Delta_i\sqrt{\mathcal A_i}
    \right)^2.
\end{equation}
Since $\mathcal J(w^\star)$ does not depend on $B$, the oracle discretization
term scales as $O(1/B)$, matching the standard global convergence rate of Euler
integration. 

The sharpness-aware schedule improves the constant in this rate by
allocating more steps to intervals with larger local-error coefficients.  
The improvement is in the leading error constant: for the unconstrained oracle, applying Cauchy--Schwarz inequality:
\[
\mathcal J(w^\star)=\left(\sum_i\Delta_i\sqrt{\mathcal A_i}\right)^2
\leq
\sum_i\Delta_i\mathcal A_i
=
\mathcal J(w^{\rm unif}),
\]
where $w_i^{\rm unif}=\Delta_i$, corresponding to uniform step density in time. Thus the sharpness-aware oracle is never worse than uniform allocation for the
local-error proxy, with larger gains when the sharpness profile
is more nonuniform, while preserving the same \(O(1/B)\) Euler rate.

As $M\to\infty$, the finite-calibration term vanishes at rate
$O(M^{-1/2})$ up to a logarithmic factor in $N$, so the calibrated schedule
approaches the population schedule, up to the remaining gap due to
smoothing and other bias $\beta_{\sigma,\varepsilon_a}$. \\
\end{remark}

\begin{remark}
The theorem compares two distributions at $t=1$, i.e.
$\mu_{\widehat w}
    =
    (T_{\widehat w})_{\#}p_0$,
the law generated by the Euler sampler using the empirically calibrated
schedule, and $\mu_1=T_{\#}p_0$,
the law generated by the learned ODE flow.
It does not directly compare $\mu_{\widehat w}$ to the true data distribution.
A full distribution bound would need the model error term:
\begin{equation}
    W_1(\mu_{\widehat w},p_{\rm data})
    \leq
    W_1(\mu_{\widehat w},\mu_1)
    +
    W_1(\mu_1,p_{\rm data}).
\end{equation}
The scheduler controls only the first term. At a high level, the final sampler error can be decomposed as:
\begin{equation*}
    \text{final sampler error}
    \leq
    \text{best schedule error}
    +
    \text{finite calibration error}
    +
    \text{smoothing and other bias}.
\end{equation*}
Here, we compare the flow induced by the calibrated schedule to the  learned ODE flow. Thus the result controls numerical
sampling error, rather than model error due to neural network approximation and optimization. \\
\end{remark}

\begin{remark}[General exponent schedules] \label{rmk_general_gamma}
The implementation also allows schedules of the form
\[
    \widehat g_i^{(\gamma)}
    =
    \left(
        \widehat{\mathcal A}_i+\varepsilon_a
    \right)^\gamma,
    \qquad
    \widehat b_i^{(\gamma)}
    =
    \sum_\ell
    K_\sigma(i,\ell)
    \widehat g_\ell^{(\gamma)},
\]
followed by
\[
    w_i^{(\gamma)}
    =
    \frac{
        \Delta_i \widehat b_i^{(\gamma)}
    }{
        \sum_{\ell=0}^{N-1}
        \Delta_\ell \widehat b_\ell^{(\gamma)}
    }.
\]
The theorem above corresponds to $\gamma=1/2$, which is the exponent selected by the Euler local-error proxy. 
The statistical concentration part of the proof extends straightforwardly to these
schedules, with constants depending on $\gamma$. However, only the square-root case $\gamma=1/2$ is the minimizer
of the Euler local-error proxy
$\sum_i \frac{\mathcal A_i\Delta_i^2}{w_i}$.
Therefore the near-oracle statement in
Theorem~\ref{thm:stat_guarantee} is stated for the square-root SharpEuler
schedule. For $\gamma\neq1/2$, the analogous guarantee should be interpreted as stability around the corresponding population exponent schedule, rather than
as optimality for the Euler-error oracle.
\end{remark}

\subsubsection{Proofs}
We provide a detailed proof to Theorem \ref{thm:stat_guarantee}.  

First, we recall the continuous version of the minimization problem (see also  \eqref{eq_contversion_sqrt}-\eqref{eq:euler_square_root_density}).

\begin{proposition}
\label{prop:sharpness_density}
Let \(a:[0,1]\to(0,\infty)\) be continuous. Among all normalized positive
densities $\rho(t)>0$,  $\int_0^1 \rho(t)\,dt=1$,   the unique minimizer of
$\int_0^1
    \frac{a(t)}{\rho(t)}
    \,dt$
is
\begin{equation}
    \rho^\star(t)
    =
    \frac{\sqrt{a(t)}}
    {\int_0^1 \sqrt{a(s)}\,ds}.
    \label{eq:optimal_density}
\end{equation}
Moreover, the minimum value is
$\left(
        \int_0^1
        \sqrt{a(t)}
        \,dt
    \right)^2.$
\end{proposition}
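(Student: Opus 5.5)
The plan is to prove the statement directly with the Cauchy--Schwarz inequality, which gives the lower bound, the attainment, and uniqueness in one argument. It can also be read as the case \(\beta=1\), \(I=a\) of Proposition~\ref{prop:unified_variational_density}. The direct argument is preferable because it also produces the minimum value and avoids relying on a Lagrangian first-variation computation.

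\textbf{Lower bound.} Fix any admissible \(\rho>0\) with \(\int_0^1\rho=1\), and assume \(\int_0^1 a/\rho<\infty\); otherwise there is nothing to prove. Write \(\sqrt{a}=\sqrt{a/\rho}\cdot\sqrt{\rho}\). Cauchy--Schwarz in \(L^2([0,1])\) then gives
\[
\left(\int_0^1\sqrt{a(t)}\,dt\right)^2
\le
\left(\int_0^1\frac{a(t)}{\rho(t)}\,dt\right)\left(\int_0^1\rho(t)\,dt\right)
=
\int_0^1\frac{a(t)}{\rho(t)}\,dt .
\]
Since \(a\) is continuous and positive on the compact interval \([0,1]\), the quantity \(Z:=\int_0^1\sqrt a\) is finite and strictly positive. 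So this bound is a genuine finite lower bound on the objective.

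\textbf{Attainment.} Set \(\rho^\star=\sqrt a/Z\). It is continuous, strictly positive, and integrates to one, so it is admissible. Substituting it gives
\[
\int_0^1 \frac{a}{\rho^\star}\,dt=\int_0^1 Z\sqrt a\,dt=Z^2,
\]
which equals the lower bound. Hence \(\rho^\star\) is a minimizer and the minimum value is \(Z^2\).

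\textbf{Uniqueness.} This is the only step needing care, and it comes from the equality case of Cauchy--Schwarz. Equality forces \(\sqrt{a/\rho}=c\sqrt{\rho}\) almost everywhere for some constant \(c\). Therefore \(\rho=\sqrt a/c\) almost everywhere, and the normalization \(\int_0^1\rho=1\) fixes \(c=Z\). So any minimizer agrees with \(\rho^\star\) almost everywhere, and uniqueness holds in the natural sense for densities, namely up to null sets.

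As a cross-check, strict convexity of \(r\mapsto 1/r\) already yields uniqueness, as in the proof of Proposition~\ref{prop:unified_variational_density}. The Cauchy--Schwarz route makes the equality case explicit and avoids any need to justify the first-variation step.
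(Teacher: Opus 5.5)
Your proof is correct, but it takes a different route from the paper. The paper uses a Lagrange multiplier. It perturbs $\rho$ to $\rho+\epsilon\eta$ and sets the first variation of $\int_0^1 a/\rho\,dt+\lambda(\int_0^1\rho\,dt-1)$ to zero, which gives $\rho^2=a/\lambda$. Normalization then fixes $\sqrt{\lambda}=\int_0^1\sqrt{a}$, and evaluating the objective at $\rho^\star$ gives $Z^2$. Global optimality and uniqueness are then obtained from strict convexity of $r\mapsto a(t)/r$ on the convex feasible set.

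That route is constructive: it finds the candidate rather than guessing it. It is also the template the paper reuses for the whole $\beta$-family and for the discrete lemma. However, the step ``a stationary point of a strictly convex functional is the unique global minimizer'' is asserted rather than derived. Making it rigorous amounts to integrating the pointwise tangent-line inequality $a/\rho\ge a/\rho^\star-(a/(\rho^\star)^2)(\rho-\rho^\star)$, using that $a/(\rho^\star)^2=Z^2$ is constant and $\int_0^1(\rho-\rho^\star)\,dt=0$.

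Your Cauchy--Schwarz argument needs the answer in advance, but it is a complete certificate in one stroke. The lower bound $Z^2$ holds for every measurable admissible $\rho$, with no regularity or existence assumptions. Substitution shows the bound is attained. The equality case gives uniqueness, which you correctly state up to null sets. That is the only sense in which uniqueness can hold, since the objective cannot detect changes on null sets, and the paper's convexity argument also yields only this.

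Your idea also extends to the general family of Proposition~\ref{prop:unified_variational_density}. Replace Cauchy--Schwarz with H\"older's inequality with exponents $\beta+1$ and $(\beta+1)/\beta$, and use Gibbs' inequality when $\beta=0$.
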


\begin{proof}
We solve the constrained optimization problem
\begin{equation}
    \min_{\rho}
    J[\rho],
    \qquad
    J[\rho]
    =
    \int_0^1
    \frac{a(t)}{\rho(t)}
    \,dt,
    \label{eq:app_J_def}
\end{equation}
subject to
$\rho(t)>0,$ $\int_0^1 \rho(t)\,dt = 1.$
Introduce a Lagrange multiplier \(\lambda\in\mathbb{R}\) for the normalization
constraint and define
\begin{equation}
    \mathcal{L}[\rho,\lambda]
    =
    \int_0^1
    \frac{a(t)}{\rho(t)}
    \,dt
    +
    \lambda
    \left(
        \int_0^1 \rho(t)\,dt - 1
    \right).
    \label{eq:app_lagrangian}
\end{equation}
To compute the first variation, let \(\eta(t)\) be a smooth perturbation and
consider
\begin{equation}
    \rho_\epsilon(t)
    =
    \rho(t)+\epsilon\eta(t),
\end{equation}
for sufficiently small \(\epsilon\) so that \(\rho_\epsilon(t)>0\). Then
\begin{align}
    \mathcal{L}[\rho_\epsilon,\lambda]
    &=
    \int_0^1
    \frac{a(t)}{\rho(t)+\epsilon\eta(t)}
    \,dt
    +
    \lambda
    \left(
        \int_0^1
        \left[
            \rho(t)+\epsilon\eta(t)
        \right]
        dt
        -
        1
    \right).
\end{align}
Differentiating with respect to \(\epsilon\) at \(\epsilon=0\) gives
\begin{align}
    \left.
    \frac{d}{d\epsilon}
    \mathcal{L}[\rho_\epsilon,\lambda]
    \right|_{\epsilon=0}
    &=
    \int_0^1
    \left(
        -\frac{a(t)}{\rho(t)^2}
    \right)
    \eta(t)
    \,dt
    +
    \lambda
    \int_0^1
    \eta(t)
    \,dt \\
    &=
    \int_0^1
    \left(
        -\frac{a(t)}{\rho(t)^2}
        +
        \lambda
    \right)
    \eta(t)
    \,dt.
    \label{eq:app_first_variation}
\end{align}
At an interior constrained optimum, the first variation of the Lagrangian must
vanish for all smooth perturbations \(\eta\). Therefore,
\begin{equation}
    -\frac{a(t)}{\rho(t)^2}
    +
    \lambda
    =
    0
    \qquad
    \text{for all } t\in[0,1].
    \label{eq:app_euler_lagrange}
\end{equation}
Rearranging gives
\begin{equation}
    \rho(t)^2
    =
    \frac{a(t)}{\lambda}.
\end{equation}
Since \(\rho(t)>0\) and \(a(t)>0\), we need \(\lambda>0\), and thus
\begin{equation}
    \rho(t)
    =
    \frac{1}{\sqrt{\lambda}}
    \sqrt{a(t)}.
    \label{eq:app_rho_unscaled}
\end{equation}
The normalization constraint determines \(\lambda\). Substituting
Eq.~\eqref{eq:app_rho_unscaled} into
\(\int_0^1 \rho(t)\,dt=1\) gives
\begin{align}
    1
    &=
    \int_0^1
    \frac{1}{\sqrt{\lambda}}
    \sqrt{a(t)}
    \,dt \\
    &=
    \frac{1}{\sqrt{\lambda}}
    \int_0^1
    \sqrt{a(t)}
    \,dt.
\end{align}
Therefore,
\begin{equation}
    \sqrt{\lambda}
    =
    \int_0^1
    \sqrt{a(t)}
    \,dt,
\end{equation}
and hence
\begin{equation}
    \rho^\star(t)
    =
    \frac{\sqrt{a(t)}}
    {\int_0^1 \sqrt{a(s)}\,ds}.
\end{equation}
This proves the claimed form of the optimizer.

We next compute the minimum value. Let
$Z
    =
    \int_0^1
    \sqrt{a(s)}
    \,ds.$
Then
\begin{equation}
    \rho^\star(t)
    =
    \frac{\sqrt{a(t)}}{Z}.
\end{equation}
Substituting into the objective gives
\begin{align}
    J[\rho^\star]
    &=
    \int_0^1
    \frac{a(t)}{\rho^\star(t)}
    \,dt =
    \int_0^1
    \frac{a(t)}
    {
        \sqrt{a(t)}/Z
    }
    \,dt =
    Z
    \int_0^1
    \sqrt{a(t)}
    \,dt =
    Z^2 =
    \left(
        \int_0^1
        \sqrt{a(t)}
        \,dt
    \right)^2.
\end{align}

It remains to justify uniqueness. For each fixed \(t\), the function
$r \mapsto \frac{a(t)}{r}$
is strictly convex on \(r>0\) since
\begin{equation}
    \frac{d^2}{dr^2}
    \left(
        \frac{a(t)}{r}
    \right)
    =
    \frac{2a(t)}{r^3}
    >
    0.
\end{equation}
Since \(a(t)>0\), the integral functional is strictly convex over positive
densities. The feasible set
\begin{equation}
    \left\{
        \rho:\rho(t)>0,\ \int_0^1\rho(t)\,dt=1
    \right\}
\end{equation}
is convex. Therefore, any stationary point satisfying the constraint is the
unique global minimizer. This completes the proof.
\end{proof}

We now state a discrete version of the minimization problem  and characterize the corresponding solution.

\begin{lemma}
\label{lem:sqrt_weights_minimize_empirical_proxy}
Define the empirical effective proxy
\begin{equation}
    \widehat{\mathcal J}_{\sigma,\varepsilon_a}(w)
    =
    \sum_{i=0}^{N-1}
    \frac{
        \widehat\phi_i\Delta_i^2
    }{
        w_i
    }.
\end{equation}
The unique minimizer over the simplex $w_i>0$, $\sum_i w_i=1$
is
\begin{equation}
    \widehat w_i
    =
    \frac{
        \Delta_i\widehat b_i
    }{
        \sum_{\ell=0}^{N-1}
        \Delta_\ell\widehat b_\ell
    }.
\end{equation}
Under Assumption (A3), this minimizer lies in $\mathcal W_\omega$, and therefore
it is also the minimizer over $\mathcal W_\omega$.
\end{lemma}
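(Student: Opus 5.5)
The plan is to treat this as the discrete analogue of Proposition~\ref{prop:sharpness_density}, specialised to the finite-dimensional simplex. Write $q_i := \widehat\phi_i\Delta_i^2 = (\Delta_i\widehat b_i)^2$. I will assume $\widehat b_i>0$ for all $i$; this holds under the floor $\varepsilon_a>0$, since $\widehat g_\ell\geq\sqrt{\varepsilon_a}$ and the smoothing weights are nonnegative and sum to one, so $\widehat b_i\geq\sqrt{\varepsilon_a}$. The objective is then $\widehat{\mathcal J}_{\sigma,\varepsilon_a}(w)=\sum_i q_i/w_i$, minimized over the open simplex.

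For the lower bound I will not use Lagrange multipliers; Cauchy--Schwarz gives everything at once. For any $w$ with $w_i>0$ and $\sum_i w_i=1$,
\[
\Bigl(\sum_i \Delta_i\widehat b_i\Bigr)^2=\Bigl(\sum_i \frac{\Delta_i\widehat b_i}{\sqrt{w_i}}\sqrt{w_i}\Bigr)^2\leq \Bigl(\sum_i \frac{q_i}{w_i}\Bigr)\Bigl(\sum_i w_i\Bigr)=\widehat{\mathcal J}_{\sigma,\varepsilon_a}(w).
\]
Equality holds if and only if $\sqrt{w_i}\propto \Delta_i\widehat b_i/\sqrt{w_i}$, that is, $w_i\propto\Delta_i\widehat b_i$. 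Normalizing gives exactly $\widehat w_i$. Substituting $\widehat w$ into the objective gives $\sum_i(\Delta_i\widehat b_i)\bigl(\sum_\ell\Delta_\ell\widehat b_\ell\bigr)=\bigl(\sum_i\Delta_i\widehat b_i\bigr)^2$, so the bound is attained. Because the equality condition is an if-and-only-if, uniqueness comes for free.

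As a cross-check, and as an alternative route, each map $r\mapsto q_i/r$ is strictly convex on $r>0$ because $q_i>0$. The objective is therefore strictly convex on the convex feasible set. The stationarity condition $-q_i/w_i^2+\lambda=0$ gives $w_i=\sqrt{q_i/\lambda}\propto\Delta_i\widehat b_i$, which recovers the same minimizer, in parallel with Proposition~\ref{prop:discrete_profile_matching}.

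For the final claim, Assumption (A3) says $\widehat w\in\mathcal W_\omega$. Since $\mathcal W_\omega$ is contained in the open simplex and $\widehat w$ minimizes over the larger set, it also minimizes over $\mathcal W_\omega$. Uniqueness carries over because any other minimizer in $\mathcal W_\omega$ would also be a minimizer over the simplex.

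The only delicate point is positivity of the $\widehat b_i$, and hence of the $q_i$, which the Cauchy--Schwarz equality case and strict convexity both need. The floor argument above settles this. If some $\widehat b_i$ were zero, uniqueness could fail on the open simplex, so I would note explicitly that the $\varepsilon_a>0$ floor rules this out.
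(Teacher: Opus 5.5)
Your proof is correct, but your main argument takes a different route from the paper's. The paper's proof is essentially your cross-check paragraph. It writes the Lagrangian, solves $-\widehat b_i^2\Delta_i^2/w_i^2+\lambda=0$ to get $w_i=\Delta_i\widehat b_i/\sqrt{\lambda}$, normalizes, and then cites strict convexity of $w\mapsto 1/w$ for uniqueness. This implicitly uses the fact, stated explicitly only in Proposition~\ref{prop:sharpness_density}, that a stationary point of a strictly convex objective on the convex feasible set is its unique global minimizer.

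Your Cauchy--Schwarz argument replaces this with a single global certificate. The bound $\bigl(\sum_i\Delta_i\widehat b_i\bigr)^2\le\widehat{\mathcal J}_{\sigma,\varepsilon_a}(w)$ holds for every feasible $w$, and attainment and uniqueness both come from the equality case. You also get the optimal value $\bigl(\sum_i\Delta_i\widehat b_i\bigr)^2$ for free, which is the empirical counterpart of the oracle value in the remark after Theorem~\ref{thm:stat_guarantee}. What the Lagrangian route buys in return is uniformity with the paper's other allocation results: the same stationarity computation handles the whole $\psi_\beta$ family, whereas your approach would need H\"older or Gibbs-type inequalities case by case.

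Your explicit check that $\widehat b_i\ge\sqrt{\varepsilon_a}>0$ fills in a point the paper leaves implicit. It uses $A_i\ge 0$, hence $\widehat{\mathcal A}_i\ge0$. Both the paper's strict-convexity claim and the feasibility of the formula need $\widehat\phi_i>0$.

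One small correction to your closing remark: if some but not all $\widehat b_i$ vanished, what would fail is existence, not uniqueness. The infimum would be approached only as those $w_i\to 0$, and the formula would put $\widehat w_i=0$ outside the open simplex. Uniqueness fails only in the fully degenerate case where every $\widehat b_i=0$.
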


\begin{proof}
The proof is the discrete analogue of Proposition~\ref{prop:sharpness_density},
applied to the coefficients \(\widehat\phi_i=\widehat b_i^2\).

Since \(\widehat\phi_i=\widehat b_i^2\), the Lagrangian is
\begin{equation}
    \mathcal L(w,\lambda)
    =
    \sum_i
    \frac{
        \widehat b_i^2\Delta_i^2
    }{
        w_i
    }
    +
    \lambda
    \left(
        \sum_iw_i-1
    \right),
\end{equation}
where $\lambda$ is the Lagrange multiplier.
The first-order optimality condition is
\begin{equation}
    -
    \frac{\widehat b_i^2\Delta_i^2}{w_i^2}
    +
    \lambda
    =
    0.
\end{equation}
Hence, $w_i^2 =    \frac{\widehat b_i^2\Delta_i^2}{\lambda}.$
Since $w_i>0$,
\begin{equation}
    w_i
    =
    \frac{\Delta_i\widehat b_i}{\sqrt\lambda}.
\end{equation}
Using $\sum_iw_i=1$,
\begin{equation}
    \sqrt\lambda
    =
    \sum_{\ell=0}^{N-1}
    \Delta_\ell\widehat b_\ell.
\end{equation}
Therefore,
\begin{equation}
    w_i
    =
    \frac{
        \Delta_i\widehat b_i
    }{
        \sum_{\ell=0}^{N-1}
        \Delta_\ell\widehat b_\ell
    }.
\end{equation}
The objective is strictly convex on the positive simplex because
$w\mapsto 1/w$ is strictly convex on $w>0$. Therefore the minimizer is unique.
Since $\mathcal W_\omega$ is a subset of the open simplex and the unconstrained
global minimizer belongs to $\mathcal W_\omega$ under Assumption (A3), it is also
the constrained minimizer over $\mathcal W_\omega$.
\end{proof}

We next recall Bernstein's inequality (see, e.g., Proposition 1.4 in \cite{bach2024learning}) in the form used below.

\begin{lemma}[Bernstein's inequality]
\label{lem:bernstein_inequality}
Let $Z_1,\ldots,Z_n$ be independent random variables satisfying
\begin{equation}
    \mathbb E[Z_j]=0,
    \qquad
    |Z_j|\leq c
    \quad
    \text{almost surely}.
\end{equation}
Let $s^2 = \frac1n \sum_{j=1}^n    \operatorname{Var}(Z_j).$
Then, for every $t > 0$,
\begin{equation}
    \mathbb P
    \left(
        \left|
            \frac1n
            \sum_{j=1}^n
            Z_j
        \right|
        \geq 
        t
    \right)
    \leq
    2\exp
    \left(
        -
        \frac{
            nt^2
        }{
            2s^2+\frac{2ct}{3}
        }
    \right).
    \label{eq:bernstein_tail}
\end{equation}
Moreover, for every $\delta\in(0,1)$, with probability at least
$1-\delta$,
\begin{equation}
    \left|
        \frac1n
        \sum_{j=1}^n
        Z_j
    \right|
    \leq
    \sqrt{
        \frac{2s^2\log(2/\delta)}{n}
    }
    +
    \frac{2c\log(2/\delta)}{3n}.
    \label{eq:bernstein_high_probability}
\end{equation}
The same statement applies to noncentered variables after replacing
$Z_j$ by $Z_j-\mathbb E[Z_j]$.
\end{lemma}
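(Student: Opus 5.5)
We will prove the lemma by the standard Chernoff (exponential moment) method. First we get a one-sided tail bound for $\frac1n\sum_j Z_j$. Then we symmetrize by applying the same bound to $-Z_j$ and take a union bound, which produces the factor $2$ in \eqref{eq:bernstein_tail}. Finally we invert the tail bound to get \eqref{eq:bernstein_high_probability}. The last sentence of the lemma (noncentered variables) needs no work: we simply apply the centered statement to $Z_j-\mathbb E[Z_j]$. The one caveat is that the almost-sure bound is then $|Z_j-\mathbb E Z_j|\le c$, and this is exactly the form in which (A2) is phrased.

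\textbf{Step 1 (moment generating function bound).} Fix $\lambda\in(0,3/c)$ and set $\sigma_j^2=\operatorname{Var}(Z_j)$. We expand
\[
\mathbb E e^{\lambda Z_j}=1+\sum_{k\ge2}\frac{\lambda^k\mathbb E[Z_j^k]}{k!},
\]
using $\mathbb E Z_j=0$. Two facts control the sum:
\[
|\mathbb E Z_j^k|\le c^{k-2}\sigma_j^2,
\qquad
k!\ge 2\cdot 3^{k-2}\quad (k\ge2).
\]
Together they give
\[
\mathbb E e^{\lambda Z_j}\le 1+\frac{\lambda^2\sigma_j^2}{2}\sum_{k\ge2}\Bigl(\frac{\lambda c}{3}\Bigr)^{k-2}
=1+\frac{\lambda^2\sigma_j^2}{2(1-\lambda c/3)}.
\]
Applying $1+x\le e^x$ then yields
\[
\mathbb E e^{\lambda Z_j}\le\exp\!\Bigl(\frac{\lambda^2\sigma_j^2}{2(1-\lambda c/3)}\Bigr).
\]
This is the one genuinely non-routine step. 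The choice of the comparison $k!\ge2\cdot3^{k-2}$ is exactly what produces the constant $2c/3$ in the final denominator, so we expect the main care to go here.

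\textbf{Step 2 (Chernoff and optimization).} By Markov's inequality and independence,
\[
\mathbb P\Bigl(\sum_j Z_j\ge nt\Bigr)\le e^{-\lambda nt}\prod_j\mathbb E e^{\lambda Z_j}
\le\exp\!\Bigl(-\lambda nt+\frac{n\lambda^2s^2}{2(1-\lambda c/3)}\Bigr).
\]
We choose $\lambda=t/(s^2+ct/3)$. This choice satisfies $\lambda<3/c$, and it gives $1-\lambda c/3=s^2/(s^2+ct/3)$. Substituting, the exponent simplifies to
\[
-\frac{nt^2}{2(s^2+ct/3)}=-\frac{nt^2}{2s^2+2ct/3}.
\]
Repeating the argument for $-Z_j$, which satisfy the same hypotheses, and taking a union bound proves \eqref{eq:bernstein_tail}.

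\textbf{Step 3 (inversion).} Let $L=\log(2/\delta)$. Setting the right-hand side of \eqref{eq:bernstein_tail} equal to $\delta$ gives the quadratic
\[
nt^2-\tfrac{2cL}{3}t-2s^2L=0.
\]
Its positive root is
\[
t_\delta=\frac{cL}{3n}+\sqrt{\Bigl(\frac{cL}{3n}\Bigr)^2+\frac{2s^2L}{n}}.
\]
By $\sqrt{a+b}\le\sqrt a+\sqrt b$ we have $t_\delta\le\sqrt{2s^2L/n}+2cL/(3n)$. The tail bound is decreasing in $t$. Hence with probability at least $1-\delta$ the deviation $\bigl|\frac1n\sum_j Z_j\bigr|$ is at most this quantity, which is \eqref{eq:bernstein_high_probability}.
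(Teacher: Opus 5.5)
Your proof is correct, and it is a genuinely different route only in that the paper gives no derivation: it cites Proposition~1.4 of \cite{bach2024learning}. Your Chernoff argument is the standard proof of that cited result. It bounds the moment generating function through $|\mathbb E Z_j^k|\le c^{k-2}\sigma_j^2$ and $k!\ge 2\cdot 3^{k-2}$, takes $\lambda=t/(s^2+ct/3)$, applies a two-sided union bound, and inverts by solving the quadratic and using $\sqrt{a+b}\le\sqrt a+\sqrt b$. What your version buys is a self-contained appendix that shows where the constants $2s^2$ and $2c/3$ come from; the paper's version buys brevity and defers to a vetted reference. You are also right that the noncentered extension needs the almost-sure bound on $Z_j-\mathbb E[Z_j]$, which is exactly how (A2) is phrased.

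One small patch is needed. Your claim that $\lambda=t/(s^2+ct/3)<3/c$ holds only when $s^2>0$. If $s^2=0$, then $\lambda=3/c$ sits on the boundary, and your bound $\exp\bigl(\lambda^2\sigma_j^2/(2(1-\lambda c/3))\bigr)$ becomes $0/0$. In that case every $Z_j=0$ almost surely, since each variable has mean zero and variance zero, so both displayed bounds hold trivially. Treat that case separately in one line.
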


Using this inequality and our assumptions, we obtain the following auxiliary result. 

\begin{lemma}
\label{lem:bernstein_effective_profile}
With probability at least $1-\delta$,
\begin{equation}
    \max_i
    \left|
        \widehat\phi_i-\phi_i^\star
    \right|
    \leq
    \zeta_M(\delta).
\end{equation}
\end{lemma}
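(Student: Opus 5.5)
We prove the bound in three stages: concentration of the raw profile, propagation through the square root and the smoothing, and finally through the squaring that defines $\widehat\phi_i$.

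\textbf{Step 1 (concentration of the raw profile).} Fix an index $i$ and set $Z_m = A_i^{(m)}-\mathcal A_i$. By (A1) these are i.i.d. and centered. By (A2) they satisfy $|Z_m|\le R$ and $\operatorname{Var}(Z_m)\le\nu^2$. Lemma~\ref{lem:bernstein_inequality}, applied at confidence level $\delta/N$, gives
\[
|\widehat{\mathcal A}_i-\mathcal A_i|\le \sqrt{\frac{2\nu^2\log(2N/\delta)}{M}}+\frac{2R\log(2N/\delta)}{3M}=\eta_M(\delta)
\]
with probability at least $1-\delta/N$. A union bound over the $N$ indices then yields, with probability at least $1-\delta$,
\[
\max_i|\widehat{\mathcal A}_i-\mathcal A_i|\le\eta_M(\delta).
\]
All remaining steps are deterministic on this event.

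\textbf{Step 2 (square root and smoothing).} Since $\varepsilon_a>0$, both $\widehat{\mathcal A}_i+\varepsilon_a$ and $\mathcal A_i+\varepsilon_a$ are at least $\varepsilon_a$. The derivative of $x\mapsto\sqrt x$ on $[\varepsilon_a,\infty)$ is at most $1/(2\sqrt{\varepsilon_a})$, so
\[
|\widehat g_\ell-g_\ell^\star|=\frac{|\widehat{\mathcal A}_\ell-\mathcal A_\ell|}{\sqrt{\widehat{\mathcal A}_\ell+\varepsilon_a}+\sqrt{\mathcal A_\ell+\varepsilon_a}}\le\frac{\eta_M(\delta)}{2\sqrt{\varepsilon_a}}=\alpha_M(\delta).
\]
The smoothing weights $K_\sigma(i,\cdot)$ are nonnegative and sum to one, so $\widehat b_i-b_i^\star$ is a convex combination of the differences $\widehat g_\ell-g_\ell^\star$. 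Hence
\[
\max_i|\widehat b_i-b_i^\star|\le\alpha_M(\delta).
\]

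\textbf{Step 3 (squaring).} We write $\widehat\phi_i-\phi_i^\star=(\widehat b_i-b_i^\star)(\widehat b_i+b_i^\star)$. Assumption (A3) gives $0\le\widehat b_i,b_i^\star\le G_{\max}$, so $\widehat b_i+b_i^\star\le 2G_{\max}$. Therefore
\[
|\widehat\phi_i-\phi_i^\star|\le 2G_{\max}\alpha_M(\delta)=\zeta_M(\delta)\quad\text{for all } i.
\]

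The main point needing care is the interplay between events in Step 3. The bound $\widehat b_i\le G_{\max}$ from (A3) is assumed on the calibration event under consideration. We will therefore intersect with it, or use the deterministic sufficient condition $0\le A_i\le A_{\max}$, under which the bound holds surely. We also need the floor $\varepsilon_a>0$ from (A3)/(\ref{eq:empirical_sqrt_profile}) to make the square root Lipschitz. Without this floor the $1/(2\sqrt{\varepsilon_a})$ factor, and hence the stated $\zeta_M$, would not be available.
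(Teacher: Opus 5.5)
Your proof is correct and follows the paper's argument essentially step for step: Bernstein plus a union bound over the $N$ indices for the raw profile (the paper bounds the union tail by $2N\exp(\cdot)$ and inverts it at $u=\log(2N/\delta)$, which is equivalent to your applying the high-probability form at level $\delta/N$), then the $1/(2\sqrt{\varepsilon_a})$-Lipschitz square root, sup-norm nonexpansiveness of the smoothing, and the factorization $\widehat b_i^2-(b_i^\star)^2=(\widehat b_i-b_i^\star)(\widehat b_i+b_i^\star)$ with the $G_{\max}$ bound from (A3). On your closing caveat, the paper simply takes (A3) as holding on the concentration event. Intersecting with a separate event that does not have probability one would push the confidence below $1-\delta$, so your deterministic sufficient condition $0\le A_i\le A_{\max}$ is the cleaner way to make that step airtight.
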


\begin{proof}
We first prove concentration of the raw empirical profile
\(\widehat{\mathcal A}_i\). Fix \(i\in\{0,\ldots,N-1\}\) and define
\begin{equation}
    U_i^{(m)}
    :=
    A_i^{(m)}-\mathcal A_i,
    \qquad
    m=1,\ldots,M.
\end{equation}
By Assumption (A1), the variables
\(U_i^{(1)},\ldots,U_i^{(M)}\) are independent. By Assumption (A2),
\begin{equation}
    \mathbb E[U_i^{(m)}]=0,
    \qquad
    |U_i^{(m)}|\leq R,
    \qquad
    \operatorname{Var}(U_i^{(m)})\leq \nu^2.
\end{equation}
for some positive constants $R, \nu$. Moreover,
\begin{equation}
    \widehat{\mathcal A}_i-\mathcal A_i
    =
    \frac1M
    \sum_{m=1}^M
    U_i^{(m)}.
\end{equation}
Therefore, by Bernstein's inequality (Lemma \ref{lem:bernstein_inequality}), for every \(\eta>0\),
\begin{equation}
    \mathbb P
    \left(
        \left|
            \widehat{\mathcal A}_i-\mathcal A_i
        \right|
        \geq \eta
    \right)
    \leq
    2\exp
    \left(
        -
        \frac{
            M\eta^2
        }{
            2\nu^2+\frac23R\eta
        }
    \right).
\end{equation}
Taking a union bound over \(i=0,\ldots,N-1\) gives
\begin{equation}
    \mathbb P
    \left(
        \max_i
        \left|
            \widehat{\mathcal A}_i-\mathcal A_i
        \right|
        \geq \eta
    \right)
    \leq
    2N\exp
    \left(
        -
        \frac{
            M\eta^2
        }{
            2\nu^2+\frac23R\eta
        }
    \right).
\end{equation}
Set $u=\log(2N/\delta)$.
The right-hand side is at most \(\delta\) for
\begin{equation}
    \eta
    =
    \sqrt{
        \frac{2\nu^2u}{M}
    }
    +
    \frac{2Ru}{3M}.
\end{equation}
Substituting \(u=\log(2N/\delta)\) gives
\begin{equation}
    \max_i
    \left|
        \widehat{\mathcal A}_i-\mathcal A_i
    \right|
    \leq
    \eta_M(\delta)
\end{equation}
with probability at least \(1-\delta\).

On this event, we next pass from the raw profile to the square-root profile.
Since the map
$r\mapsto \sqrt{r+\varepsilon_a}$
is \(1/(2\sqrt{\varepsilon_a})\)-Lipschitz on \(r\geq0\), we have
\begin{align}
    \max_i
    |\widehat g_i-g_i^\star|
    &=
    \max_i
    \left|
        \sqrt{\widehat{\mathcal A}_i+\varepsilon_a}
        -
        \sqrt{\mathcal A_i+\varepsilon_a}
    \right| \\
    &\leq
    \frac{1}{2\sqrt{\varepsilon_a}}
    \max_i
    |\widehat{\mathcal A}_i-\mathcal A_i| \leq
    \frac{\eta_M(\delta)}{2\sqrt{\varepsilon_a}}
    =
    \alpha_M(\delta).
\end{align}

Now we pass through the smoothing operator. Since
\(K_\sigma(i,\ell)\geq0\) and \(\sum_{\ell=0}^{N-1}K_\sigma(i,\ell)=1\), the
smoothing operator is nonexpansive in the sup norm, so:
\begin{align}
    |\widehat b_i-b_i^\star|
    &=
    \left|
        \sum_{\ell=0}^{N-1}
        K_\sigma(i,\ell)
        \left(
            \widehat g_\ell-g_\ell^\star
        \right)
    \right| \leq
    \sum_{\ell=0}^{N-1}
    K_\sigma(i,\ell)
    |\widehat g_\ell-g_\ell^\star| \leq
    \max_\ell
    |\widehat g_\ell-g_\ell^\star|
    \leq
    \alpha_M(\delta).
\end{align}
Therefore, $\max_i
    |\widehat b_i-b_i^\star|
    \leq
    \alpha_M(\delta).$

Finally, recall that
$\widehat\phi_i=\widehat b_i^2,$ $\phi_i^\star=(b_i^\star)^2.$
Using Assumption (A3), i.e.
\(0\leq \widehat b_i\leq G_{\max}\) and
\(0\leq b_i^\star\leq G_{\max}\), we get
\begin{align}
    |\widehat\phi_i-\phi_i^\star|
    &=
    |\widehat b_i^2-(b_i^\star)^2| =
    |\widehat b_i-b_i^\star|
    |\widehat b_i+b_i^\star| \leq
    2G_{\max}
    |\widehat b_i-b_i^\star| \leq
    2G_{\max}\alpha_M(\delta)
    =
    \zeta_M(\delta).
\end{align}
Taking the maximum over \(i\) gives
$\max_i
    |\widehat\phi_i-\phi_i^\star|
    \leq
    \zeta_M(\delta)$
with probability at least $1-\delta$. This completes the proof.
\end{proof}

Next, we will need the following series of lemmas. 
\begin{lemma}
\label{lem:smoothed_oracle_inequality}
Let
\begin{equation}
    w_{\sigma,\varepsilon_a}^\star
    =
    \arg\min_{w\in\mathcal W_\omega}
    \mathcal J_{\sigma,\varepsilon_a}(w).
\end{equation}
On the event
\begin{equation}
    \max_i
    |\widehat\phi_i-\phi_i^\star|
    \leq
    \zeta,
\end{equation}
we have
\begin{equation}
    \mathcal J_{\sigma,\varepsilon_a}(\widehat w)
    \leq
    \mathcal J_{\sigma,\varepsilon_a}(w_{\sigma,\varepsilon_a}^\star)
    +
    \frac{2D_2}{\omega}\zeta.
\end{equation}
\end{lemma}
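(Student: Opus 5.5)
This is the standard "perturbed objective" oracle argument, run with the two objectives $\widehat{\mathcal J}_{\sigma,\varepsilon_a}$ and $\mathcal J_{\sigma,\varepsilon_a}$. They share the structure $\sum_i c_i\Delta_i^2/w_i$ and differ only in the coefficients $\widehat\phi_i$ versus $\phi_i^\star$.

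First I would prove a uniform deviation bound over the class $\mathcal W_\omega$. For any $w\in\mathcal W_\omega$,
\[
\bigl|\widehat{\mathcal J}_{\sigma,\varepsilon_a}(w)-\mathcal J_{\sigma,\varepsilon_a}(w)\bigr|
\le \sum_{i=0}^{N-1}\frac{|\widehat\phi_i-\phi_i^\star|\,\Delta_i^2}{w_i}
\le \frac{\zeta}{\omega}\sum_{i=0}^{N-1}\Delta_i^2=\frac{D_2}{\omega}\zeta .
\]
This uses $w_i\ge\omega$ on $\mathcal W_\omega$ together with the event hypothesis $\max_i|\widehat\phi_i-\phi_i^\star|\le\zeta$.

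Next comes the three-term decomposition. Assumption (A3) puts $\widehat w$ in $\mathcal W_\omega$, so the uniform bound applies at $\widehat w$. Lemma~\ref{lem:sqrt_weights_minimize_empirical_proxy} says $\widehat w$ minimizes $\widehat{\mathcal J}_{\sigma,\varepsilon_a}$ over $\mathcal W_\omega$. We also have $w^\star_{\sigma,\varepsilon_a}\in\mathcal W_\omega$: it is defined as the argmin over $\mathcal W_\omega$, and (A3) moreover guarantees that the unconstrained population minimizer lies there, so the argmin exists and is well defined. Putting these together,
\[
\mathcal J_{\sigma,\varepsilon_a}(\widehat w)
\le \widehat{\mathcal J}_{\sigma,\varepsilon_a}(\widehat w)+\tfrac{D_2}{\omega}\zeta
\le \widehat{\mathcal J}_{\sigma,\varepsilon_a}(w^\star_{\sigma,\varepsilon_a})+\tfrac{D_2}{\omega}\zeta
\le \mathcal J_{\sigma,\varepsilon_a}(w^\star_{\sigma,\varepsilon_a})+\tfrac{2D_2}{\omega}\zeta .
\]
The first and third inequalities are the uniform deviation bound; the middle one is the optimality of $\widehat w$ for the empirical objective.

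I expect no real obstacle. The only point needing care is feasibility: both comparison points must lie in $\mathcal W_\omega$ so that the $1/\omega$ bound is legitimate, and $\widehat w$ must be optimal over a set containing $w^\star_{\sigma,\varepsilon_a}$. Both follow from (A3) and Lemma~\ref{lem:sqrt_weights_minimize_empirical_proxy}. Positivity of each $w_i$ is automatic from $w_i\ge\omega>0$.
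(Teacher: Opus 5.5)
Your proposal is correct and follows the paper's proof essentially step for step. Both arguments use the same uniform deviation bound $\frac{D_2}{\omega}\zeta$ over $\mathcal W_\omega$ and the same three-inequality chain, with the middle step given by the optimality of $\widehat w$ for the empirical objective (via the empirical-minimizer lemma and (A3)).
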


\begin{proof}
For any $w\in\mathcal W_\omega$,
\begin{align}
    \left|
        \widehat{\mathcal J}_{\sigma,\varepsilon_a}(w)
        -
        \mathcal J_{\sigma,\varepsilon_a}(w)
    \right|
    =
    \left|
        \sum_i
        \frac{
            (\widehat\phi_i-\phi_i^\star)\Delta_i^2
        }{
            w_i
        }
    \right| 
    \leq
    \sum_i
    \frac{
        |\widehat\phi_i-\phi_i^\star|\Delta_i^2
    }{
        w_i
    } 
    \leq
    \frac{\zeta}{\omega}
    \sum_i\Delta_i^2 
    =
    \frac{D_2}{\omega}\zeta.
\end{align}
Therefore,
\begin{equation}
    \sup_{w\in\mathcal W_\omega}
    \left|
        \widehat{\mathcal J}_{\sigma,\varepsilon_a}(w)
        -
        \mathcal J_{\sigma,\varepsilon_a}(w)
    \right|
    \leq
    \frac{D_2}{\omega}\zeta.
\end{equation}
By Lemma~\ref{lem:sqrt_weights_minimize_empirical_proxy} and Assumption (A3),
$\widehat w$ minimizes $\widehat{\mathcal J}_{\sigma,\varepsilon_a}$ over
$\mathcal W_\omega$. Hence
\begin{equation}
    \widehat{\mathcal J}_{\sigma,\varepsilon_a}(\widehat w)
    \leq
    \widehat{\mathcal J}_{\sigma,\varepsilon_a}
    (w_{\sigma,\varepsilon_a}^\star).
\end{equation}
Thus,
\begin{align}
    \mathcal J_{\sigma,\varepsilon_a}(\widehat w)
    \leq
    \widehat{\mathcal J}_{\sigma,\varepsilon_a}(\widehat w)
    +
    \frac{D_2}{\omega}\zeta 
    \leq
    \widehat{\mathcal J}_{\sigma,\varepsilon_a}
    (w_{\sigma,\varepsilon_a}^\star)
    +
    \frac{D_2}{\omega}\zeta 
    \leq
    \mathcal J_{\sigma,\varepsilon_a}
    (w_{\sigma,\varepsilon_a}^\star)
    +
    \frac{2D_2}{\omega}\zeta.
\end{align}
\end{proof}

\begin{lemma}
\label{lem:smoothed_to_raw_risk}
For any $w\in\mathcal W_\omega$,
\begin{equation}
    |\mathcal J_{\sigma,\varepsilon_a}(w)-\mathcal J(w)|
    \leq
    \frac{D_2}{\omega}
    \beta_{\sigma,\varepsilon_a}.
\end{equation}
Consequently, on the event of Lemma~\ref{lem:smoothed_oracle_inequality},
\begin{equation}
    \mathcal J(\widehat w)
    \leq
    \mathcal J(w^\star)
    +
    \frac{2D_2}{\omega}
    (\zeta+\beta_{\sigma,\varepsilon_a}).
\end{equation}
\end{lemma}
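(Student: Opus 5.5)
The first claim is a direct termwise comparison, in the same way as the uniform deviation bound in the proof of Lemma~\ref{lem:smoothed_oracle_inequality}. For any $w\in\mathcal W_\omega$ we have
\[
    \left|\mathcal J_{\sigma,\varepsilon_a}(w)-\mathcal J(w)\right|
    =
    \left|\sum_{i}\frac{(\phi_i^\star-\mathcal A_i)\Delta_i^2}{w_i}\right|
    \leq
    \sum_i\frac{|\phi_i^\star-\mathcal A_i|\Delta_i^2}{w_i}.
\]
Each numerator factor is bounded by $\beta_{\sigma,\varepsilon_a}$, by the definition \eqref{eq:smoothing_flooring_bias}. Each weight satisfies $w_i\geq\omega$. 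The sum is therefore at most $\frac{\beta_{\sigma,\varepsilon_a}}{\omega}\sum_i\Delta_i^2=\frac{D_2}{\omega}\beta_{\sigma,\varepsilon_a}$. This bound is uniform over $\mathcal W_\omega$.

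For the consequence, I will chain three inequalities on the event of Lemma~\ref{lem:smoothed_oracle_inequality}. Both $\widehat w$ and $w^\star$ lie in $\mathcal W_\omega$: the first by (A3), the second by definition \eqref{eq:raw_oracle_schedule}.
\begin{itemize}
\item[(i)] The uniform bound gives $\mathcal J(\widehat w)\leq\mathcal J_{\sigma,\varepsilon_a}(\widehat w)+\frac{D_2}{\omega}\beta_{\sigma,\varepsilon_a}$.
\item[(ii)] Lemma~\ref{lem:smoothed_oracle_inequality} gives $\mathcal J_{\sigma,\varepsilon_a}(\widehat w)\leq\mathcal J_{\sigma,\varepsilon_a}(w^\star_{\sigma,\varepsilon_a})+\frac{2D_2}{\omega}\zeta$.
\item[(iii)] Since $w^\star_{\sigma,\varepsilon_a}$ minimizes $\mathcal J_{\sigma,\varepsilon_a}$ over $\mathcal W_\omega$ and $w^\star\in\mathcal W_\omega$, we get $\mathcal J_{\sigma,\varepsilon_a}(w^\star_{\sigma,\varepsilon_a})\leq\mathcal J_{\sigma,\varepsilon_a}(w^\star)$. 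Applying the uniform bound once more, this is at most $\mathcal J(w^\star)+\frac{D_2}{\omega}\beta_{\sigma,\varepsilon_a}$.
\end{itemize}
Summing the three gives
\[
    \mathcal J(\widehat w)\leq\mathcal J(w^\star)+\frac{2D_2}{\omega}\zeta+\frac{2D_2}{\omega}\beta_{\sigma,\varepsilon_a},
\]
which is the claimed bound.

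There is no real obstacle here. The only points needing care are existence of the minimizer $w^\star_{\sigma,\varepsilon_a}$ and the use of $\mathcal W_\omega$-membership so that the $1/\omega$ bound applies. Existence holds because $\mathcal W_\omega$ is compact and the objective is continuous on it, since $w_i\geq\omega>0$. Alternatively, (A3) places the closed-form minimizer of $\mathcal J_{\sigma,\varepsilon_a}$ in $\mathcal W_\omega$ directly.
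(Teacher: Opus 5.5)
Your proposal is correct and matches the paper's proof step for step. You use the same termwise bound from $w_i\geq\omega$ and the definition of $\beta_{\sigma,\varepsilon_a}$, then the same chain: the bias bound at $\widehat w$, Lemma~\ref{lem:smoothed_oracle_inequality}, minimality of $w^\star_{\sigma,\varepsilon_a}$ over $\mathcal W_\omega$, and the bias bound again at $w^\star$. Your remarks on existence of the minimizer (by compactness of $\mathcal W_\omega$) and on why $\widehat w$ and $w^\star$ lie in $\mathcal W_\omega$ spell out points the paper leaves implicit.
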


\begin{proof}
For any $w\in\mathcal W_\omega$,
\begin{align}
    |\mathcal J_{\sigma,\varepsilon_a}(w)-\mathcal J(w)|
    =
    \left|
        \sum_i
        \frac{
            (\phi_i^\star-\mathcal A_i)\Delta_i^2
        }{
            w_i
        }
    \right| 
    \leq
    \sum_i
    \frac{
        |\phi_i^\star-\mathcal A_i|\Delta_i^2
    }{
        w_i
    } 
    \leq
    \frac{\beta_{\sigma,\varepsilon_a}}{\omega}
    \sum_i\Delta_i^2 
    =
    \frac{D_2}{\omega}
    \beta_{\sigma,\varepsilon_a}.
\end{align}
Now,
\begin{equation}
    \mathcal J(\widehat w)
    \leq
    \mathcal J_{\sigma,\varepsilon_a}(\widehat w)
    +
    \frac{D_2}{\omega}\beta_{\sigma,\varepsilon_a}.
\end{equation}
By Lemma~\ref{lem:smoothed_oracle_inequality},
\begin{equation}
    \mathcal J_{\sigma,\varepsilon_a}(\widehat w)
    \leq
    \mathcal J_{\sigma,\varepsilon_a}(w_{\sigma,\varepsilon_a}^\star)
    +
    \frac{2D_2}{\omega}\zeta.
\end{equation}
Since $w_{\sigma,\varepsilon_a}^\star$ minimizes
$\mathcal J_{\sigma,\varepsilon_a}$ over $\mathcal W_\omega$,
\begin{equation}
    \mathcal J_{\sigma,\varepsilon_a}(w_{\sigma,\varepsilon_a}^\star)
    \leq
    \mathcal J_{\sigma,\varepsilon_a}(w^\star).
\end{equation}
Using the bias bound again,
\begin{equation}
    \mathcal J_{\sigma,\varepsilon_a}(w^\star)
    \leq
    \mathcal J(w^\star)
    +
    \frac{D_2}{\omega}\beta_{\sigma,\varepsilon_a}.
\end{equation}
Combining the above gives
\begin{equation}
    \mathcal J(\widehat w)
    \leq
    \mathcal J(w^\star)
    +
    \frac{2D_2}{\omega}\zeta
    +
    \frac{2D_2}{\omega}\beta_{\sigma,\varepsilon_a}.
\end{equation}
Therefore,
\begin{equation}
    \mathcal J(\widehat w)
    \leq
    \mathcal J(w^\star)
    +
    \frac{2D_2}{\omega}
    (\zeta+\beta_{\sigma,\varepsilon_a}).
\end{equation}
\end{proof}

\begin{lemma}
\label{lem:risk_controls_distribution}
For any $w\in\mathcal W_\omega$,
\begin{equation}
    W_1(\mu_w,\mu_1)
    \leq
    \frac{C_{\rm Eul}}{B}
    \mathcal J(w).
\end{equation}
\end{lemma}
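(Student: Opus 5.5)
The argument couples the sampler output and the exact flow output through the same noise sample, bounds $W_1$ by the expected pathwise error, and then applies the Euler stability assumption (A4) pointwise before taking expectations.

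\textbf{Step 1 (coupling).} Fix $w\in\mathcal W_\omega$. Let $x_0\sim p_0$ and consider the pair $(T_w(x_0),T(x_0))$. Its first marginal is $\mu_w=(T_w)_\#p_0$ and its second is $\mu_1=T_\#p_0$, so the pair is an admissible coupling. By the definition of $W_1$ as an infimum over couplings,
\[
W_1(\mu_w,\mu_1)\leq \mathbb E_{x_0\sim p_0}\bigl[\|T_w(x_0)-T(x_0)\|\bigr].
\]
For this to make sense, $T_w$ and $T$ must be measurable and the expectation must be finite. Measurability holds because both maps are compositions of continuous maps: the Euler steps and the flow map of a smooth field. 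Finiteness follows from Step 2 together with (A2)/(A3), since these give $\mathcal A_i<\infty$.

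\textbf{Step 2 (pointwise stability).} For each $x_0$, assumption (A4) gives
\[
\|T_w(x_0)-T(x_0)\|\leq \frac{C_{\rm Eul}}{B}\sum_{i=0}^{N-1}\frac{A_i(x_0)\Delta_i^2}{w_i}.
\]
The right-hand side is nonnegative. Taking expectations and using linearity, which is valid because the sum is finite, yields
\[
\mathbb E\bigl[\|T_w(x_0)-T(x_0)\|\bigr]\leq \frac{C_{\rm Eul}}{B}\sum_i\frac{\mathcal A_i\Delta_i^2}{w_i}=\frac{C_{\rm Eul}}{B}\mathcal J(w),
\]
where the last equality is the definition \eqref{eq:raw_population_risk_mean_profile}.

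\textbf{Step 3 (the random schedule $\widehat w$).} When this lemma is later applied with $w=\widehat w$, we condition on the calibration set. The calibrated schedule is then a deterministic element of $\mathcal W_\omega$ by (A3). The fresh noise $x_0$ is independent of the calibration samples, so Steps 1--2 apply verbatim conditionally on the calibration set.

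The only potential obstacle is this point: making sure the coupling uses noise independent of the calibration samples, so that $\mu_{\widehat w}$ is the conditional law. Everything else is immediate from (A4) and the coupling characterization of $W_1$.
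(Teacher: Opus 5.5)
Your proposal is correct and follows the paper's proof: the shared-noise coupling $(T_w,T)_\#p_0$ bounds $W_1(\mu_w,\mu_1)$ by $\mathbb E_{x_0\sim p_0}\|T_w(x_0)-T(x_0)\|$, and applying (A4) pointwise and then taking expectations (with $\mathbb E[A_i(x_0)]=\mathcal A_i$) gives $\frac{C_{\rm Eul}}{B}\mathcal J(w)$. Your extra remarks on measurability and on conditioning on the calibration set when $w=\widehat w$ are consistent with the paper, which likewise notes that, conditional on the calibrated schedule, $\mu_{\widehat w}$ is the law induced by fresh noise $x_0\sim p_0$.
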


\begin{proof}
By the  definition of the 1-Wasserstein distance,
\begin{equation}
    W_1(\mu_w,\mu_1)
    =
    \inf_{\pi\in\Pi(\mu_w,\mu_1)}
    \int \|x-y\|\,d\pi(x,y), 
\end{equation}
where $\Pi(\mu,\nu)$ denotes the set of all couplings of $\mu$ and $\nu$. 
Using the shared-noise coupling
\begin{equation}
    \pi_w=(T_w,T)_{\#}p_0,
\end{equation}
we obtain
\begin{equation}
    W_1(\mu_w,\mu_1)
    \leq
    \mathbb E_{x_0\sim p_0}
    \|T_w(x_0)-T(x_0)\|.
\end{equation}

By Assumption (A4),
\begin{equation}
    \|T_w(x_0)-T(x_0)\|
    \leq
    \frac{C_{\rm Eul}}{B}
    \sum_i
    \frac{
        A_i(x_0)\Delta_i^2
    }{
        w_i
    }.
\end{equation}
Taking expectation over $x_0\sim p_0$,
\begin{equation}
    \mathbb E
    \|T_w(x_0)-T(x_0)\|
    \leq
    \frac{C_{\rm Eul}}{B}
    \mathbb E
    \left[
        \sum_i
        \frac{
            A_i(x_0)\Delta_i^2
        }{
            w_i
        }
    \right].
\end{equation}
The expectation on the right is exactly $\mathcal J(w)$. Therefore,
\begin{equation}
    W_1(\mu_w,\mu_1)
    \leq
    \frac{C_{\rm Eul}}{B}
    \mathcal J(w).
\end{equation}
\end{proof}

We now have all the needed ingredients to prove Theorem~\ref{thm:stat_guarantee}. 

\begin{proof}[Proof of Theorem~\ref{thm:stat_guarantee}]
By Lemma~\ref{lem:bernstein_effective_profile}, with probability at least
$1-\delta$,
\begin{equation}
    \max_i
    |\widehat\phi_i-\phi_i^\star|
    \leq
    \zeta_M(\delta).
\end{equation}
On this high-probability event, Lemma~\ref{lem:smoothed_to_raw_risk} gives
\begin{equation}
    \mathcal J(\widehat w)
    \leq
    \mathcal J(w^\star)
    +
    \frac{2D_2}{\omega}
    \left(
        \zeta_M(\delta)+\beta_{\sigma,\varepsilon_a}
    \right).
\end{equation}
By Lemma~\ref{lem:risk_controls_distribution},
\begin{equation}
    W_1(\mu_{\widehat w},\mu_1)
    \leq
    \frac{C_{\rm Eul}}{B}
    \mathcal J(\widehat w).
\end{equation}
Combining the two inequalities,
\begin{equation}
    W_1(\mu_{\widehat w},\mu_1)
    \leq
    \frac{C_{\rm Eul}}{B}
    \left[
        \mathcal J(w^\star)
        +
        \frac{2D_2}{\omega}
        \left(
            \zeta_M(\delta)+\beta_{\sigma,\varepsilon_a}
        \right)
    \right].
\end{equation}
This completes the proof of the main theorem.
\end{proof}

\section{Experimental Details}
\label{app:exp_details}

\subsection{Synthetic Datasets and Model Training}
\label{app:synthetic_details}

We evaluate SharpEuler on two synthetic two-dimensional distributions:
\emph{Branched Manifold} and \emph{Rotated Grid}. These datasets are chosen to
distinguish two failure modes of low-budget Euler sampling. Branched Manifold tests
whether the sampler resolves thin, curved support, while Rotated Grid tests
whether it preserves separated modes without placing excessive mass between
them.

For each dataset, we train a small residual MLP flow matching model whose probability path is described by the linear
interpolation
\[
    x_t = (1-t)x_0 + t x_1, \qquad t \in [0,1],
\]
where \(x_0\) is a noise sample and \(x_1\) is a data sample. The regression
target is
\[
    v^* = x_1 - x_0,
\]
corresponding to the forward-time convention described in the main paper, integrating from  noise to data. At inference, we
sample with the corresponding reverse-time Euler update. All reported samples
use EMA weights. The same trained model, random seed, and initial noise samples
are used across schedules whenever possible, so that differences are attributable
to the timestep grid.

\subsection{Schedule Baselines}
\label{app:schedule_baselines}

We compare schedules at fixed number of function evaluations (NFE), with budgets
\(B\in\{8,12,16,20\}\) for the synthetic experiments. The sampler and model are
unchanged across schedules.

\paragraph{Uniform.}
The Uniform baseline uses an equally spaced descending reverse-time grid from
\(s=1\) to \(s=0\).

\paragraph{ETS.}
The ETS baseline is the entropy-driven schedule of
\cite{stancevic2025entropic}, adapted to our flow matching predictor following
the BioNeMo MoCo implementation~\citep{bionemo_ets_tutorial}. We use the
absolute divergence when constructing the cumulative entropy estimate to ensure
a monotone schedule.

\paragraph{SharpEuler.}
SharpEuler constructs a profile-aware density
\[
    \rho(t) \propto \phi(t)^\gamma,
\]
where \(\phi(t)\) is the smoothed finite-difference acceleration profile
estimated from calibration trajectories. We evaluate
\(\gamma\in\{0.5,1.0,1.5\}\). The choice \(\gamma=0.5\) corresponds to the
LTE-equidistributing density derived in Sec.~\ref{sec:method}. Larger values
concentrate the grid more strongly in regions assigned high sharpness by the
calibration profile.

\subsection{Evaluation Metrics}
\label{app:metrics}

For the synthetic experiments, we report Density and Coverage
\citep{naeem2020reliable}, which are robust alternatives to the
precision-recall metrics of \cite{kynkaanniemi2019improved}. Both metrics are
computed using \(k\)-nearest-neighbor (NN) balls around real samples with \(k=5\).

\paragraph{Density.}
For each generated sample, Density counts how many real samples \(k\)-NN balls contain, averaged over generated samples and normalized by \(k\). Higher Density indicates that generated samples lie in regions of high real sample density.

\paragraph{Coverage.}
Coverage is the fraction of real samples whose \(k\)-NN ball contains at least one generated sample. Higher Coverage indicates that the generator reaches a larger fraction  of the target support.

\paragraph{Wasserstein-2 distance.}
We also report squared Wasserstein-2 distance $W_2^2$, estimated with
Sinkhorn-regularized optimal transport~\citep{villani2008optimal,cuturi2013sinkhorn}.
We use \(n=2000\) samples from each distribution and entropic regularization \(\varepsilon=0.01\). Lower \(W_2^2\) indicates a closer distributional match.

\subsection{FLUX Calibration and Inference Details}
\label{app:flux_details}

We evaluate FLUX.1-dev~\citep{flux2024}, a 12B-parameter rectified flow
transformer for text-to-image generation, at its native \(1024{\times}1024\) resolution. All samples are generated in \texttt{bfloat16} precision with classifier-free guidance scale \(3.5\).

We calibrate SharpEuler once using 64 prompts spanning landscapes, urban scenes, wildlife, and abstract content. For each prompt, we run the default 50-step FLUX.1-dev pipeline in \texttt{diffusers}~\citep{diffusers_flux_pipeline}. During
calibration, we record the velocity predictions used by \texttt{scheduler.step},
take finite differences of consecutive predictions to estimate acceleration along the trajectory, average these estimates across prompts, and smooth the resulting profile with a Gaussian kernel of bandwidth \(\sigma=1.0\) on the 49-point midpoint grid. The calibrated profile is then fixed and reused for all held-out prompts and all inference budgets.

We compare SharpEuler with \(\gamma=0.5\) against two profile-agnostic
schedules. The first is the default FLUX pipeline schedule, which applies the resolution-dependent \(\mu\)-shift used by \texttt{diffusers}. The second is a static shifted schedule with \(\alpha=3\), following the SD3-style FlowMatch Euler shift~\citep{esser2024scaling},
\[
    t_{\mathrm{shifted}}
    =
    \frac{\alpha t}{1+(\alpha-1)t}.
\]
For SharpEuler, we use a tail-aligned variant in which the final step is
replaced by the corresponding pipeline step. This preserves the hand-tuned low-noise tail of the FLUX sampler, where the calibrated finite-difference profile is empirically noisier.

We omit ETS from the FLUX experiments because its flow matching adaptation requires Jacobian-vector products through the transformer at candidate timesteps, which is too expensive at \(1024{\times}1024\) resolution.

\subsection{GPT-5.5 Vision-Judge Protocol}
\label{app:judge_protocol}

We use GPT-5.5 as a vision judge to compare SharpEuler and the default FLUX pipeline schedule against the same 50-step reference image. For each prompt, the judge receives four inputs: the text prompt, the 50-step reference image, the SharpEuler image, and the Pipeline image. The two candidate images are shown in randomized order to reduce position bias.

The judge returns two types of output. First, it assigns each candidate an independent integer score from 1 to 10, where higher scores indicate closer agreement with the reference image in the intended generative sense. Second, it returns a forced pairwise preference, allowing ties only when the candidates are effectively indistinguishable. The rubric instructs the judge to compare prompt-grounded semantic content, global composition, scene layout, object identity, style, viewpoint, lighting, color palette, and salient spatial relationships. It also instructs the judge not to over-penalize harmless stochastic differences or exact pixel-level deviations.

For each budget, we report the mean SharpEuler score, mean Pipeline score, the paired score difference
($\Delta
    =
    \text{score}_{\mathrm{SharpEuler}}
    -
    \text{score}_{\mathrm{Pipeline}})$, 
the standard deviation of \(\Delta\) across prompts, a paired \(t\)-test
\(p\)-value for the score differences, and the number of pairwise wins for SharpEuler. Statistical tests are computed over prompt-level paired
comparisons.

\section{Additional Results}
\label{app:results}

\subsection{Sharpness Profiles}

Figure~\ref{fig:sharpness_profiles} shows calibrated sharpness profiles for three synthetic datasets: Branched Manifold (left), Rotated Grid (middle), and Spiral (right). The top row shows the target data distributions. The middle row shows the calibrated sharpness profile together with the resulting timestep grid for \(\gamma=0.5\) at budget \(B=16\), while the bottom row shows the
corresponding profile and grid for \(\gamma=1.0\).

The profiles differ across datasets, indicating that the
calibrated sharpness signal depends on both the learned model and the geometry of the target distribution. Thus, the induced timestep grids are also dataset dependent. Increasing \(\gamma\) concentrates grid points more strongly in regions assigned with high sharpness, producing visibly different timestep placements even at the same evaluation budget.
Since the calibrated profiles are derived from the local behavior of the learned flow, an interesting direction for future work is to study whether the induced timestep densities correlate with uncertainty or ambiguity in the generative
trajectory~\cite{gupta2026quantifying}.

\begin{figure}[!t]
    \centering
    \includegraphics[width=0.9\textwidth]{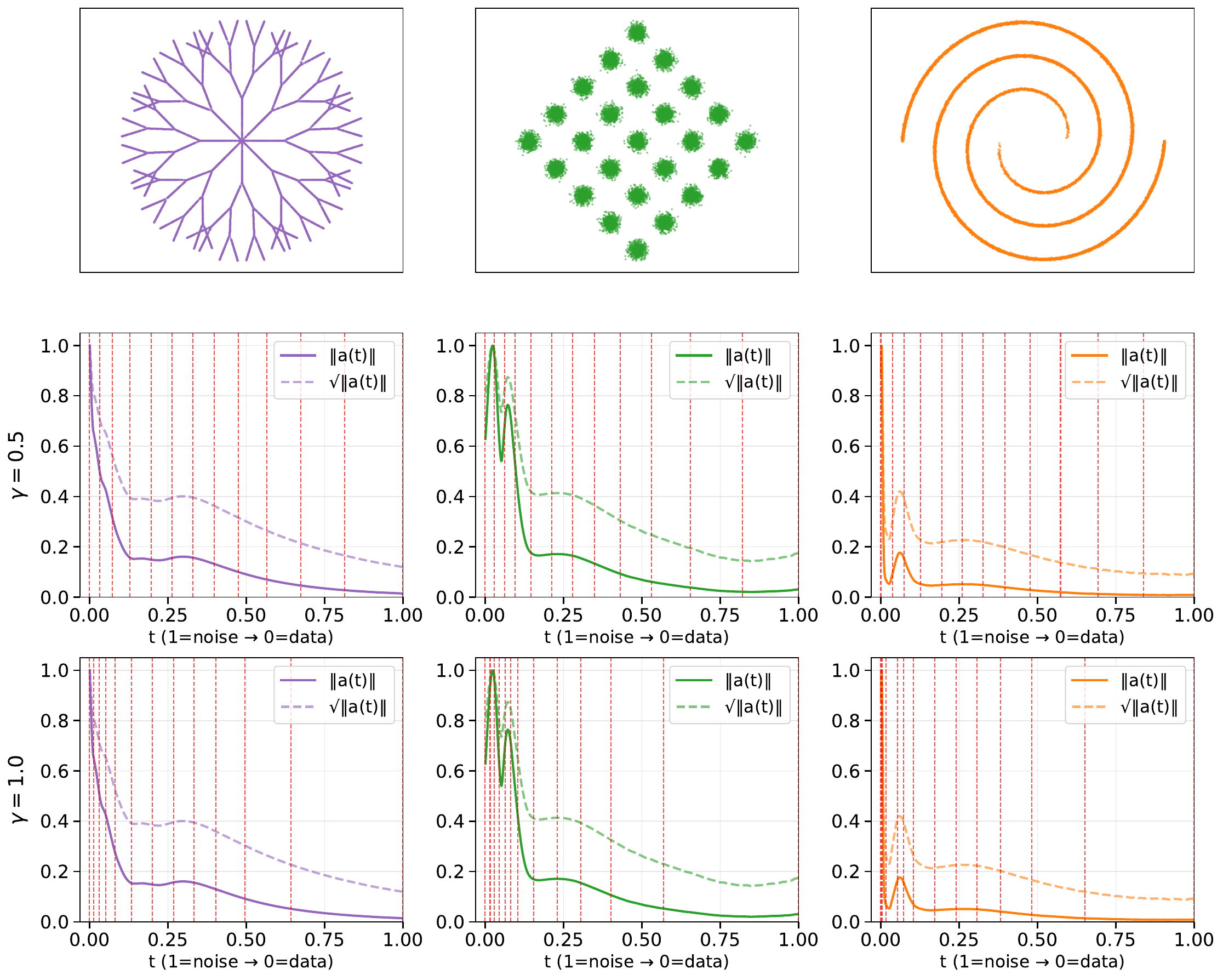}
    \caption{Calibrated sharpness profiles and induced timestep grids for Branched
Manifold (left), Rotated Grid (middle), and Spiral (right). The top row shows
target samples. The middle and bottom rows show the sharpness profile and
resulting \(B=16\) grid for \(\gamma=0.5\) and \(\gamma=1.0\), respectively.
The profiles and induced grids vary across datasets, and larger \(\gamma\)
concentrates timestep placements more strongly in high-sharpness regions.}
    \label{fig:sharpness_profiles}
\end{figure}

\subsection{Gamma Ablation}

We ablate the exponent \(\gamma\) in the schedule density
\(\rho(t)\propto\phi(t)^\gamma\) on Branched Manifold and Rotated Grid. We sweep \(\gamma\in\{0.5,0.8,1.0,1.2,1.5,1.7,2.0\}\) and evaluate Density and Coverage at budgets \(B\in\{5,8,12,16,20\}\). Results are shown in \Cref{fig:gamma_sweep}.

The sweep shows a consistent fidelity--coverage tradeoff. Increasing
\(\gamma\) tends to increase Density, because the resulting grid concentrates more evaluations at times with high estimated sharpness. Coverage behaves differently: it usually peaks at an intermediate value, often around \(\gamma\in[1.0,1.2]\), and decreases for larger \(\gamma\). This indicates that overly concentrated grids can improve local sample fidelity while leaving parts of the target support under-sampled. The effect is strongest at the smallest budget, \(B=5\), where each timestep placement has high leverage, and weakens as the budget increases. Overall, \(\gamma=0.5\) remains the LTE-prescribed default, while \(\gamma\in[1.0,1.5]\) often gives the best empirical Density--Coverage tradeoff on these synthetic datasets.

\begin{figure}[!h]
    \centering
    \begin{subfigure}[t]{\textwidth}
        \centering
        \includegraphics[width=\textwidth]{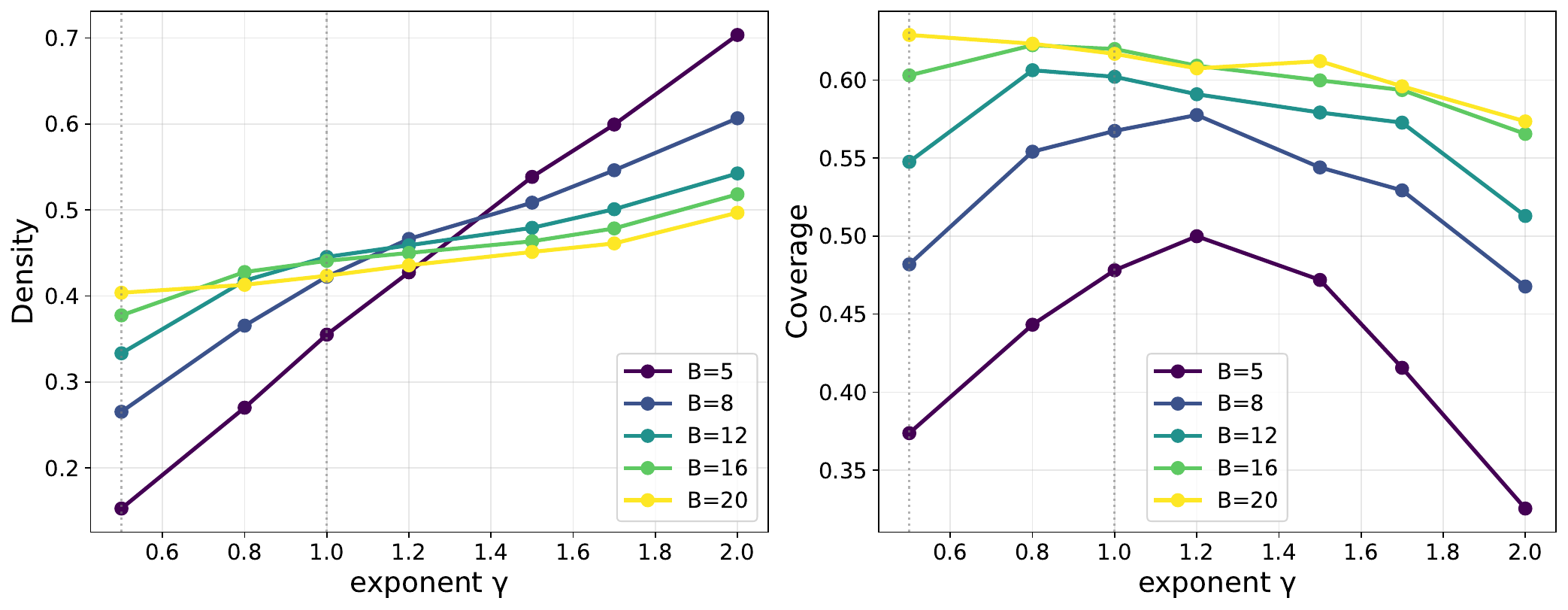}
        \caption{Branched Manifold}
        \label{fig:gamma_sweep_branchy}
    \end{subfigure}
    \\[0.6em]
    \begin{subfigure}[t]{\textwidth}
        \centering
        \includegraphics[width=\textwidth]{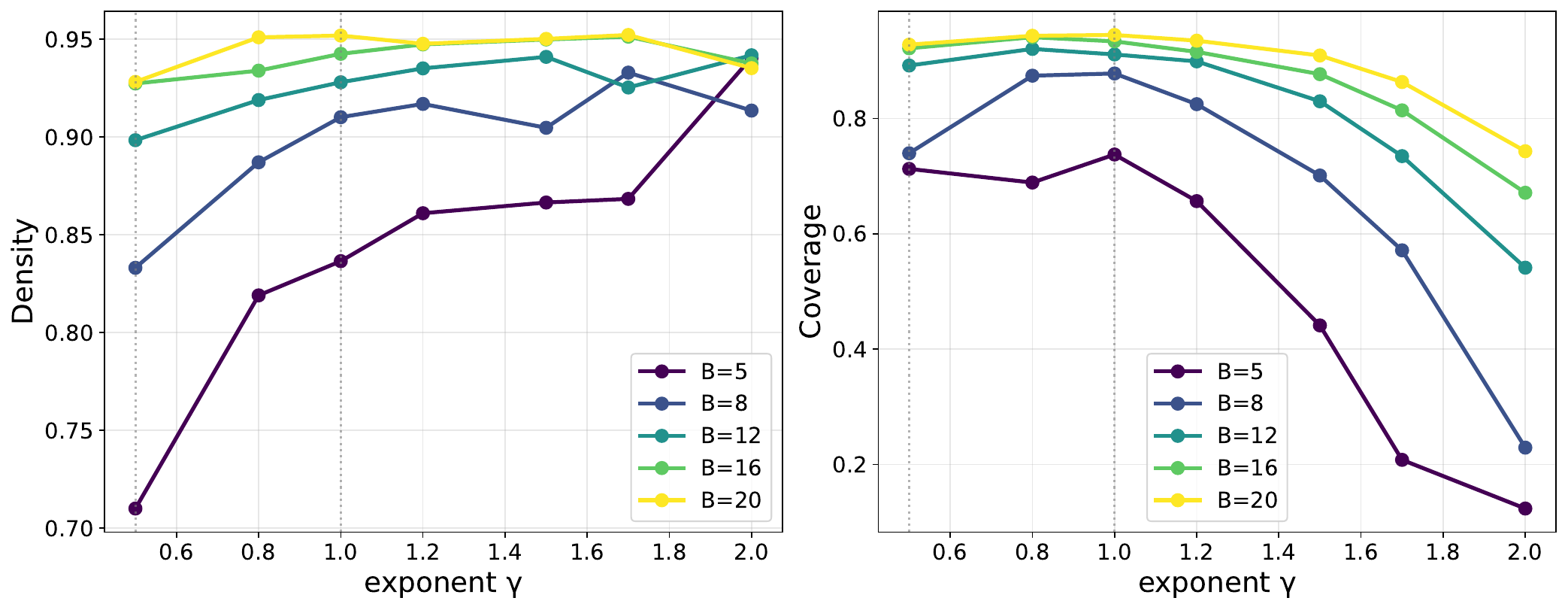}
        \caption{Rotated Grid}
        \label{fig:gamma_sweep_rotgrid}
    \end{subfigure}
\caption{Density and Coverage as functions of the exponent \(\gamma\) for Branched Manifold and Rotated Grid, across budgets
\(B\in\{5,8,12,16,20\}\). Increasing \(\gamma\) generally improves Density, while Coverage is maximized at intermediate values and can decrease when the grid becomes too concentrated. Dashed vertical lines mark \(\gamma=0.5\), the LTE-prescribed choice, and \(\gamma=1.0\).}
\label{fig:gamma_sweep}
\end{figure}

\newpage 
\section*{Broader Impact}
This work develops a sampling method for flow-based generative models. The contribution is primarily methodological: SharpEuler changes the placement of Euler evaluations at inference time and does not introduce new training data, model capabilities, or deployment mechanisms. Nevertheless, improvements in low-budget sampling can reduce the cost of generating images and other synthetic data, which may amplify both beneficial and harmful uses of generative models.

The method should therefore be used with the same safeguards as the underlying generative model. In applications where generated outputs may affect scientific, medical, legal, or safety-critical decisions, low-budget samples should not be treated as verified outputs. Reducing the number of sampling steps can also change failure modes, introduce artifacts, or alter coverage of the target distribution. Human review and task-specific validation remain necessary when sample correctness or semantic fidelity matters.

\end{document}